\newcommand{\N}{\mathcal{N}}
\newcommand{\G}{\mathcal{G}}
\newcommand{\F}{\mathcal{F}}
\newcommand{\E}{\mathbb{E}}
\newcommand{\LL}{\mathrm{L}}
\newcommand{\R}{\mathbb{R}}
\newcommand{\Z}{\mathbb{Z}}
\newcommand{\cS}{\mathcal{S}}
\newcommand{\bigO}{\mathcal{O}}
\DeclareMathOperator*{\argmin}{argmin}
\newcommand{\dd}{\mathrm{d}}
\newtheorem{theorem}{Theorem}
\newtheorem{lemma}[theorem]{Lemma}
\theoremstyle{plain}
\newtheorem{assumption}{Assumption}
\newtheorem{remark}{Remark}
\newenvironment{newremark}[1]{%
    \begin{remark}#1}{%
    \Endofdef\end{remark}%
}
\newcommand{\xqed}[1]{%
    \leavevmode\unskip\penalty9999 \hbox{}\nobreak\hfill
    \quad\hbox{\ensuremath{#1}}}
\newcommand{\Endofdef}{\xqed{\lozenge}}
\definecolor{darkred}{rgb}{.6,0,0}
\definecolor{darkblue}{rgb}{0,0,.7}
\definecolor{darkgreen}{rgb}{0,.7,0}
\definecolor{darkbrown}{rgb}{0.8,0.4,0.4}
\begin{document}

\title[Black-Box Variational Inference]{Adaptive Exponential Integration for Stable Gaussian Mixture Black-Box Variational Inference}

\author{Baojun Che}
\address{School of Mathematical Sciences, Peking University, Beijing, China}
\email{bjche25@stu.pku.edu.cn}

\author{Yifan Chen}
\address{Department of Mathematics, University of California, Los Angeles, CA, USA}
\email{yifanchen@math.ucla.edu}

\author{Daniel Zhengyu Huang}
\address{Beijing International Center for Mathematical Research,  Center for Machine Learning Research, Peking University, Beijing, China}
\email{huangdz@bicmr.pku.edu.cn}

\author{Xinying Mao}
\address{School of Mathematical Sciences, Peking University, Beijing, China}
\email{maoxinying@stu.pku.edu.cn}

\author{Weijie Wang}
\address{School of Mathematical Sciences, Peking University, Beijing, China}
\email{wwj66285509350@stu.pku.edu.cn}

\begin{abstract}

Approximating probability distributions known only up to a normalization constant is a central problem in Bayesian inference, particularly in black-box settings arising in scientific computing. Black-box variational inference (BBVI) with Gaussian-mixture families provides a flexible framework for approximating complex posterior distributions, but the associated optimization dynamics are numerically delicate: covariance updates may leave the positive-definite cone, ill-conditioning may force prohibitively small step sizes, and Monte Carlo noise may obscure convergence. In this work, we study the affine-invariant natural-gradient dynamics for Gaussian-mixture variational inference, equivalently the Fisher-Rao gradient flow, from the viewpoint of stable numerical discretization. We propose an exponential integrator for the covariance evolution that preserves positive definiteness unconditionally, together with an adaptive time-stepping strategy that resolves the distinct warm-up and convergence regimes of the dynamics. The resulting scheme also admits a geometric interpretation in terms of Riemannian optimization and mirror descent. For Gaussian targets, we prove exponential convergence of the deterministic iteration and almost-sure convergence under Monte Carlo approximation. For well-separated Gaussian-mixture targets, we establish the corresponding local convergence results. Numerical experiments on multimodal distributions, Neal's multiscale funnel, and a PDE-based Bayesian inverse problem for Darcy flow demonstrate the effectiveness of the proposed method, including examples in dimensions up to \(50\).
\end{abstract}

\keywords{
Variational Inference, Adaptive Exponential Integration, Natural Gradient, Gaussian Mixture, Gradient Free}

\subjclass[2020]{62F15, 65M32, 65L20, 90C56}

\maketitle

\section{Introduction}
Sampling from probability distributions known only up to a normalization constant 
is a central task in Bayesian inference and uncertainty quantification for 
scientific computing. Consider a target distribution with density
\begin{equation}
\pi(\theta) \propto e^{-\Phi_R(\theta)},
\end{equation}
where $\Phi_R(\theta)$ denotes a known potential function. The objective is to 
efficiently generate samples from $\pi$ or to compute expectations with respect 
to it. In large-scale Bayesian inverse problems \cite{kaipio2006statistical,stuart2010inverse,oliver2008inverse,bui2013computational,cui2015data,yan2019adaptive,cao2022bayesian,guo2024ib,henneking2025real}, evaluating $\Phi_R$ typically requires computationally 
expensive forward model simulations, rendering traditional sampling approaches 
such as Markov chain Monte Carlo (MCMC) methods prohibitively costly.

Variational inference \cite{jordan1999introduction,blei2017variational,chen2025accelerating} provides a powerful alternative by 
reformulating sampling as an optimization problem. Rather than drawing samples 
directly from $\pi$, one introduces a tractable parametric family of distributions 
$\{\rho_a\}$, indexed by parameters $a$, and seeks the member that is closest to 
$\pi$ in the sense of the Kullback--Leibler (KL) divergence:
\begin{align}
\label{eq:KL}
\min_{\rho_a} \mathrm{KL}[\rho_a \Vert \pi]
= \min_{\rho_a} \int \rho_a \log \rho_a \,\dd\theta
+ \int \rho_a \Phi_R \,\dd\theta
+ \text{const}.
\end{align}
This optimization-based viewpoint enables efficient approximations in settings where exact sampling is infeasible but accurate approximations are sufficient. Indeed, it underpins many classical approaches in Bayesian inference and inverse problems, including mean-field approaches \cite{jaakkola1998improving, jordan1999introduction, chen2025rotated}, methods based on Gaussian approximations \cite{opper2009variational, huang2022efficient, lambert2022variational}, and Kalman-filter-type methods \cite{oliver2008inverse,iglesias2013ensemble,UKI, chada2020tikhonov,garbuno2020interacting,hanu2023subsampling,bach2024machine,liu2025dropout}.

Single-Gaussian approximations, despite their success, are often inadequate for posteriors that are multimodal, multiscale, or exhibit complex geometric structure. To address these limitations, more expressive variational families have been developed, including Gaussian mixture models~\cite{jaakkola1998improving} and normalizing flows \cite{rezende2015variational,yao2024minimizing}. Among these, Gaussian mixture models provide a particularly useful balance between computational tractability and expressive power~\cite{delon2020wasserstein,huix2024theoretical}.
In many contemporary applications, the potential $\Phi_R$ is accessible only as a black box, with derivatives with respect to $\theta$ either unavailable or unreliable. This setting motivates black-box variational inference (BBVI) methods~\cite{ranganath2014black,han2018stein,domke2023provable}, which avoid direct use of $\nabla \Phi_R$.
Nevertheless, Gaussian mixture BBVI remains numerically challenging. In particular, covariance matrices may lose positive definiteness during optimization, thereby forcing prohibitively small step sizes \cite{lin2019fast,che2025stable}. These difficulties are further amplified by Monte Carlo noise in the gradient estimators, especially in high-dimensional settings, and can lead to slow convergence or even optimization failure \cite{ranganath2014black,paisley2012variational}. This motivates the development of stable, adaptive, and structure-preserving numerical schemes for Gaussian mixture BBVI.

\subsection{Contributions}
This paper develops a stable numerical framework for Gaussian mixture BBVI in the black-box setting by combining natural gradient formulations with adaptive exponential time integration. 
Our main contributions are as follows:
\begin{enumerate}
    \item We propose an adaptive exponential integration scheme for Gaussian mixture BBVI. 
    The covariance update preserves positive definiteness
    unconditionally, while adaptive time-stepping stabilizes the dynamics and separates the warm-up and convergence phases.
    The scheme also admits geometric interpretations in terms of Riemannian optimization and mirror descent.

    \item We provide a convergence analysis of the proposed scheme at the discrete level. For Gaussian targets, we prove deterministic exponential convergence in the noise-free setting and almost-sure convergence under Monte Carlo approximation with a decaying step-size scheduler, thereby clarifying the role of adaptive time stepping in handling poor initialization and stochastic noise. For well-separated Gaussian-mixture targets, we establish the corresponding local convergence results.

    \item We demonstrate the practical effectiveness of the proposed method on a range of challenging numerical examples, including multimodal target distributions, Neal’s multiscale funnel, and a Bayesian inverse problem for Darcy flow.
\end{enumerate}

\subsection{Preliminaries and literature review}

\subsubsection{Natural gradient}

Conventional gradient descent  depends strongly on the chosen parameterization of
the model, which can lead to severe ill-conditioning and the need for very small step sizes, particularly when optimization is performed on curved statistical manifolds. This limitation arises because the standard gradient corresponds to the direction of steepest descent with respect to the Euclidean metric on the parameter space, rather than the intrinsic geometry of the underlying family of probability distributions.

Natural gradient methods~\cite{amari1998natural,martens2020new,wang2022accelerated,carrillo2024fisher} address this issue 
by preconditioning the gradient with the Riemannian metric induced by the statistical 
manifold. This metric is given by the Fisher information matrix (FIM), which 
captures the local curvature of the model:
\begin{align*}
\frac{\mathrm{d} a}{\mathrm{d} t}
= -\mathrm{FIM}(a)^{-1} \nabla_a 
\mathrm{KL}\big[ \rho_a \Vert \pi \big],\  
\mathrm{FIM}(a)
:= \int 
\left( \nabla_a \log \rho_a(\theta) \right)
\left( \nabla_a \log \rho_a(\theta) \right)^{T}
\rho_a(\theta)\,\mathrm{d}\theta.
\end{align*}
Natural gradient flows are invariant under smooth reparameterizations, so the resulting optimization trajectory depends only on the geometry of the 
distribution space. We further 
highlight their affine invariance \cite{goodman2010ensemble,chen2023sampling} with respect to linear transformations of the 
parameter space in the Gaussian and mixture context, which contributes to robustness under anisotropy and ill-conditioning 
(see \Cref{sec:theory}).
Natural gradients have been widely used to accelerate Gaussian variational 
inference~\cite{amari1998natural,opper2009variational, hoffman2013stochastic,garbuno2020affine,chen2023sampling}, since the Fisher 
information matrix admits a closed-form expression for Gaussian families. 
Extending this framework to Gaussian mixture models is challenging, since the exact Fisher information matrix is generally unavailable in closed form and its dimension grows rapidly with the number of mixture components.
Existing methods~\cite{lin2019fast,arenz2020trust,arenz2022unified,chen2024efficient} therefore typically rely on tractable approximations that decouple the updates of the individual components and the mixture weights. Related Wasserstein-gradient-flow approaches~\cite{lambert2022variational} also lead to decoupled componentwise updates.
Nevertheless, numerical stability remains a central bottleneck, since even gradient- or Hessian-based updates may require very small step sizes.

\subsubsection{Black-box variational inference}
\label{sec-review-bbvi}
Many variational inference methods rely on first- or second-order derivative 
information of the potential $\Phi_R$~\cite{bishop2006pattern, blei2017variational}, 
which may be unavailable or unreliable when $\Phi_R$ is accessible only through a 
black-box forward model. Black-box variational inference (BBVI)~\cite{ranganath2014black} 
was introduced to overcome this limitation by constructing unbiased Monte Carlo 
estimates of the gradient of the KL divergence.

Specifically, BBVI employs the score-function (log-derivative) identity to rewrite 
the gradient of the KL divergence with respect to the variational parameters $a$ as
\begin{equation}
\label{eq:bbvi-gradient}
\nabla_a \mathrm{KL}\big[ \rho_a \Vert \pi \big]
= \mathbb{E}_{\rho_a}
\left[
\nabla_a \log \rho_a(\theta)
\left( \log \rho_a(\theta) + \Phi_R(\theta) \right)
\right].
\end{equation}
The expectation in \cref{eq:bbvi-gradient} is approximated by Monte Carlo sampling, yielding an unbiased stochastic gradient estimator that does not require derivatives of $\Phi_R$. However, this approach may suffer from high
variance and step-size sensitivity, motivating extensive work on variance reduction and stabilization techniques
\cite{ranganath2014black,paisley2012variational,Manushi2024framewaork}.

For Gaussian mixture BBVI, an additional challenge is that covariance updates may leave the positive-definite cone, while ill-conditioning may force prohibitively small step sizes. 
Specialized quadrature-based methods, motivated by unscented transform ideas~\cite{julier1995new}, are developed in the authors' previous work \cite{chen2024efficient,che2025stable} to preserve covariance positivity for relatively large time steps.  However, these methods are tailored to Bayesian inverse problems in which $\Phi_R$ has a nonlinear least-squares structure. From a geometric perspective, preservation of covariance positivity is naturally connected to Riemannian optimization on the cone of positive-definite matrices~\cite{pennec2006riemannian,lin2020handling}. In the
present work, we focus on general black-box applications and develop an exponential integrator that rigorously respects the underlying Riemannian
structure and preserves positive definiteness unconditionally. Combined with
adaptive time stepping, this yields a stable and efficient algorithm for Gaussian mixture BBVI.

\subsection{Organization}
\label{ssec:over}
In \cref{sec:main}, we introduce natural gradients for the Gaussian mixture family, and propose an exponential integrator with adaptive time stepping for its discretization, forming the basis of our BBVI method.
\Cref{sec:theory} provides theoretical insights behind the proposed algorithm, including its convergence, connections to manifold optimization, and affine invariance.  
\Cref{sec:implementaion} presents several implementation details, such as the scheduler and annealing-based initialization. Numerical experiments are described in \cref{sec:experiments}, which demonstrate the effectiveness of the proposed framework for Bayesian inference. Finally, concluding remarks are provided in \cref{sec:conclusions}.

\section{Stable Gaussian mixture BBVI}
\label{sec:main}
In this section, we first formulate the natural gradient flow for Gaussian mixture 
variational inference. We then introduce an adaptive 
time integration scheme for its stable discretization, which constitutes our main 
algorithm.

We consider a Gaussian mixture variational family
\[\rho_a^{\rm GM}(\theta) = \sum_{k=1}^{K} w_k \,\mathcal{N}(\theta; m_k, C_k), \quad a := [m_1, \dots, m_K,\; C_1, \dots, C_K,\; w_1, \dots, w_K],\]
where $K$ denotes the number of mixture components. The distribution is 
parameterized by component means $m_k \in \mathbb{R}^{N_\theta}$, covariance 
matrices $C_k \in \mathbb{R}^{N_\theta \times N_\theta}$, and nonnegative weights 
$w_k \in \mathbb{R}_{\ge 0}$ satisfying $\sum_{k=1}^{K} w_k = 1$. These parameters 
are collected into the vector $a$.
\subsection{Natural gradient variational inference}
\label{sec-Natural gradient variational inference}
Natural gradient methods are widely used in variational inference due to their ability 
to accelerate optimization by incorporating local geometric information through 
the Fisher information matrix. However, for Gaussian mixture models, the Fisher 
information matrix does not admit a closed-form expression. Moreover, the full 
matrix has dimension 
$K(N_\theta^2 + N_\theta + 1) \times K(N_\theta^2 + N_\theta + 1)$, making direct 
inversion computationally prohibitive.

In practice, the Fisher information matrix is often approximated by a 
block-diagonal matrix~\cite[Appendix C.8]{chen2024efficient}
\cite{lin2019fast, che2025stable}. This approximation decouples the updates of the individual mixture components and reduces the required linear algebra to matrix operations of size
\(N_\theta \times N_\theta\).  It leads to the following natural gradient flow for each component $k = 1, \dots, K$:
\begin{equation}
\label{eq:gm-ngf}
\begin{split}
        \frac{\dd {m}_{k}}{\dd t} 
        &= -C_k\int \N_k(\theta) \Bigl( \nabla_{\theta} \log\rho_a^{\rm GM}  +  \nabla_{\theta} \Phi_R \Bigr)  \dd\theta\\
        &= -\E_{\N_k}\Bigl[ \theta \bigl(\log\rho_a^{\rm GM} + \Phi_R - \E_{\N_k}[\log\rho_a^{\rm GM} + \Phi_R] \bigr)\Bigr],  
        \\
        \frac{\dd {C}_{k}}{\dd t} 
        &= 
        -C_k\int \N_k(\theta)\bigl(\nabla_{\theta}\nabla_{\theta}\log \rho_a^{\rm GM}  + \nabla_{\theta}\nabla_{\theta}\Phi_R\bigr) \dd\theta C_k\\
        &= 
        -\E_{\N_k}\Bigl[(\theta - m_k)(\theta - m_k)^T \bigl(\log \rho_a^{\rm GM}  + \Phi_R - \E_{\N_k}[\log\rho_a^{\rm GM} + \Phi_R]\bigr)\Bigr],  
        \\
         \frac{\dd \log {w}_{k}}{\dd t}  &= -\int \Bigl(\N_k(\theta) -  \rho_a^{\rm GM}\Bigr)\bigl(\log \rho_a^{\rm GM}  + \Phi_R \bigr) \dd\theta. 
\end{split}
\end{equation}
Here, $\mathcal{N}_k = \mathcal{N}(\theta; m_k, C_k)$ denotes the $k$-th Gaussian 
component, and all quantities depend implicitly on time $t$. Expectations are 
taken with respect to $\mathcal{N}_k$. The logarithmic weight parametrization preserves positivity of the weights, and the constraint \(\sum_{k=1}^K w_k=1\) is enforced throughout the evolution.

The derivative-free expressions in \eqref{eq:gm-ngf} are obtained by integration by parts. Following the “sticking-the-landing” 
principle~\cite{roeder2017sticking}, the resulting expressions ensure that the variance of the stochastic estimators of
the gradients vanishes as $\rho_a^{\rm GM}$ approaches the target density, i.e.,
when $\log \rho_a^{\rm GM} + \Phi_R \approx 0$, enabling low-variance Monte Carlo approximations near convergence.

In the next subsection, we present our main contribution: a stable and practical 
time discretization of the natural gradient flow.

\subsection{Adaptive exponential integration}
\label{ssec:algorithm}
To construct a stable scheme, we reparameterize the covariance matrices 
using a square-root factorization (e.g., the Cholesky decomposition),
$$C_k = L_k L_k^T.$$ 
We further define
\begin{align}
\label{eq:fk-reparameter}
    f_k(t, \theta) = \log \rho_{a(t)}^{\rm GM}\bigl(L_k(t) \theta + m_k(t)\bigr)  + \Phi_R\bigl(L_k(t) \theta + m_k(t)\bigr).
\end{align}
Under this reparameterization, expectations with respect to $\mathcal{N}_k$ can be 
expressed in terms of the standard Gaussian distribution $\mathcal{N} = \mathcal{N}(0,I)$. More precisely, substituting \eqref{eq:fk-reparameter} into \cref{eq:gm-ngf} yields
\begin{subequations}
\label{eq:gm-ngf-repar}
\begin{align}
        \frac{\dd {m}_{k}}{\dd t} 
        &= -L_k \E_{\N}\Bigl[\theta \bigl(f_k - \E_{\N}[f_k] \bigr)\Bigr],\label{eq:gm-ngf-repar-m}
        \\
        \frac{\dd {C}_{k}}{\dd t} 
        &= 
        -L_k\E_{\N}\Bigl[\theta \theta^T\bigl(f_k - \E_{\N}[f_k] \bigr)\Bigr]L_k^T, \label{eq:gm-ngf-repar-C}
        \\
        \frac{\dd \log {w}_{k}}{\dd t} &= -\E_{\N}[f_k] + \sum_{i=1}^{K} w_i \E_{\N}[f_i].
\end{align}
\end{subequations}

A key difficulty in time integration is that, for large step sizes, the covariance 
update may lose positive definiteness~\cite[Section~3]{che2025stable}. To address 
this issue, we define
\begin{equation}
\label{eq:Ek}
    E_k(t) = \E_{\N}\Bigl[\theta \theta^T\bigl(f_k(t,\theta) - \E_{\N}[f_k(t,\theta)]\bigr)\Bigr]
\end{equation} 
and employ an exponential-type integrator for the covariance update \cref{eq:gm-ngf-repar-C}:
\begin{equation}
\label{eq:gm-ngf-update_C}
\begin{split}
        C_{k}(t_n + \Delta t_n) 
        &= 
        L_k(t_n) e^{- E_k(t_n)\Delta t_n}L_k(t_n)^T,
\end{split}
\end{equation}
This update is independent of the particular square-root factorization. Indeed, 
for any alternative factorization $L_k'(t_n) = L_k(t_n) Q$ with orthogonal $Q$, the 
corresponding matrix $E_k'(t_n) = Q^T E_k(t_n) Q$ yields the same covariance update \cref{eq:gm-ngf-update_C}. The update is compatible with the Riemannian structure of the cone of positive definite matrices; see \cref{ssec:manifold_optimization}.
In particular, it preserves positive definiteness of the covariance matrix for
any step size \(\Delta t_n>0\). In the moderate-dimensional regimes considered here,  the additional cost of computing the matrix exponentials remains moderate.

The mean and weight equations are discretized explicitly:
\begin{equation}
\label{eq:gm-ngf-update_mw}
\begin{split}
    m_k(t_n+\Delta t_n) &= m_k(t_n) -\Delta t_n
     L_k(t_n) \E_{\N}\Bigl[\theta \bigl(f_k(t_n,\theta) - \E_{\N}[f_k(t_n,\theta)]\bigr)\Bigr],
     \\
    \log \widehat{w}_k(t_n+\Delta t_n) &= \log w_k(t_n) - \Delta t_n \Bigl(\E_{\N}[f_k(t_n,\theta)] - \sum_{i=1}^{K} w_i \E_{\N}[f_i(t_n,\theta)]\Bigr),
    \\
    w_k(t_n+\Delta t_n) &= \frac{\widehat{w}_k(t_n+\Delta t_n)}{\sum_{i=1}^{K}\widehat{w}_i(t_n+\Delta t_n)}.
\end{split}
\end{equation}
The mean variable is unconstrained and is therefore updated by forward Euler. The logarithmic weight update, followed by normalization, yields a multiplicative weight update. Equivalently, it is an exponentiated-gradient~\cite{kivinen1997exponentiated}, or entropy mirror-descent~\cite{beck2003mirror}, step on the probability simplex; see \cref{ssec:manifold_optimization}.

All expectations in \cref{eq:gm-ngf-repar,eq:Ek} are approximated using Monte Carlo sampling, as is standard in BBVI. For each component $k$, we draw $J$ samples 
$\{\theta_k^j\}_{j=1}^J \sim \mathcal{N}(0,I)$ and use them to estimate 
$\mathbb{E}_{\mathcal{N}}[f_k]$ and the corresponding centered moments $\mathbb{E}_{\mathcal{N}}[\theta f_k]$ and $\mathbb{E}_{\mathcal{N}}[\theta\theta^Tf_k]$. This 
procedure requires $JK$ evaluations of $\Phi_R$ per time step.

Finally, the time step $\Delta t_n$ is selected adaptively:
\begin{equation}
\label{eq:adaptive-dt}
\Delta t_n
= \min\left\{
\Delta t_{\max}\,\eta(t_n),
\frac{\beta}{\max_k \|E_k(t_n)\|_2}
\right\},
\end{equation}
where $\Delta t_{\max}$ is a prescribed maximum step size, $\beta$ is a stability 
parameter, and $\eta(t)$ is a scheduling function that decays from $1$ to a small 
value to mitigate Monte Carlo noise. The matrices \(E_k(t_n)\) are estimated as part of the covariance update in \cref{eq:Ek}.
The two terms in \cref{eq:adaptive-dt} are typically active in different phases of the dynamics: the second term controls the step size during the initial warm-up phase, when \(\|E_k(t_n)\|_2\) may be large, while the first term, \(\Delta t_{\max}\eta(t_n)\), usually becomes dominant near convergence.
We also empirically examine whether an additional weight-based restriction is beneficial
for the exponential weight update; the results indicate that such a restriction is
not necessary, see \cref{ssec:discuss-dt}.
A detailed discussion of this adaptive 
time-stepping strategy is provided in \cref{ssec:convergence-study}.

\section{Theoretical insights}
\label{sec:theory}
In this section, we study the aforementioned variational inference algorithm from two perspectives. 
In \cref{ssec:convergence-study}, we analyze its convergence at the discrete level, showing that adaptive time-stepping is required to guide the transition from the warm-up phase to the convergence phase, and that the scheduler plays an important role in controlling the noise.
In \cref{ssec:manifold_optimization}, we present a complementary manifold optimization perspective, which connects the proposed algorithm with mirror descent and helps explain its covariance properties, weight positivity, and affine invariance.

\subsection{Convergence study}
\label{ssec:convergence-study}
In this subsection, we study the convergence properties of the proposed algorithm at the discrete level for the Gaussian and well-separated Gaussian mixture cases.

\subsubsection{Gaussian case}
\label{sssec:Gaussian-convergence-study}
We first study the convergence properties of the proposed algorithm at the discrete level for the Gaussian case, i.e., 
$\Phi_R(\theta)=\frac{1}{2}(\theta - m_{\star})^TC_{\star}^{-1}(\theta - m_{\star})$, with mode number $K=1$. 
Let \begin{equation}
\label{eq:Sigma_v}
   \Sigma_n=C_{\star}^{-\frac{1}{2}}C(t_{n})C_{\star}^{-\frac{1}{2}}\quad \textrm{and} \quad v_n=C_{\star}^{-\frac{1}{2}}\bigl(m(t_{n}) - m_{\star}\bigr), 
\end{equation} and assume the integral terms in~\cref{eq:gm-ngf-repar} are computed exactly (whereas in practice these terms are usually computed by Monte Carlo sampling, which introduces noise). The updates of the mean and covariance in \cref{eq:gm-ngf-update_C,eq:gm-ngf-update_mw} then take the form
\begin{align}
    \label{eq:Gaussian-matrix}
    v_{n+1}  = (I-\Delta t_n \Sigma_n)v_n, \quad \Sigma_{n+1}  = \LL_n e^{\Delta t_n(-\LL_n^T \LL_n+I)} \LL_n^T =  h_n(\Sigma_n),
\end{align}
where 
\begin{align}
\label{eq:LL-h}
\LL_n = C_{\star}^{-\frac{1}{2}} L(t_n) \quad \textrm{and} \quad h_n(x)=xe^{\Delta t_n(1-x)}.  
\end{align}
The derivation can be found in \cref{Proof:convergence}. For the noise-free case, we fix the scheduler $\eta(t)=1$ and do not need to decrease it. The time-step size $\Delta t_n$ defined in \cref{eq:adaptive-dt} then becomes
\begin{align}
\label{eq:adaptive-dt-gaussian}
    \Delta t_n = \min\bigl\{\Delta t_{\max}, \frac{\beta}{\|\Sigma_n-I\|_2}\bigr\}.
\end{align}
Here we used the fact $\lVert E_k(t_n)\Vert_2 = \lVert I - \LL_n^T \LL_n\Vert_2 = \lVert \Sigma_n-I\Vert_2$.
With this adaptive time-stepping, we show in the following theorem that the iteration \cref{eq:Gaussian-matrix} converges exponentially fast (i.e., $\Sigma_n \rightarrow I$ and  $v_n \rightarrow 0$) and exhibits only logarithmic dependence on the norm of its initial condition $\Sigma_0$ and $v_0$. This result is consistent with the continuous-level convergence analysis in \cite[Theorem 5.6]{chen2023sampling} and related works~\cite{garbuno2020interacting,carrillo2021wasserstein,burger2023covariance,chen2023gradient}, where the error decays as $\bigO(e^{-t})$, with hidden constants depending on the initial condition and the solution. The proof is deferred to \Cref{proof_of_theorem-Sigma_n-no-noise}.

\begin{theorem}
\label{theorem-Sigma_n-no-noise}
 Assume that $\Delta t_{\rm max} \leq 1$, $\beta \leq 1$, and that the initial covariance matrix $\Sigma_0$ is symmetric positive definite with minimum and maximum eigenvalues $\lambda_{\rm min}(\Sigma_0)$ and $\lambda_{\rm max}(\Sigma_0)$. Given an error tolerance $\epsilon < 1$, for iterations \cref{eq:Gaussian-matrix} with the adaptive time-stepping 
\cref{eq:adaptive-dt-gaussian}, there exists $N = \mathcal{O}\bigl(|\log \lambda_{\rm min}(\Sigma_0)| + |\log \lambda_{\rm max}(\Sigma_0)| + \max\{\log\|v_0\|_2, 0\} + \log \frac{1}{\epsilon} \bigr)$, such that  
for all $n > N$, 
\begin{equation}
\label{eq:theorem-Sigma_n-no-noise}
    \|\Sigma_n-I\|_2 \leq \epsilon,\quad \|v_n\|_2 \leq \epsilon,
\end{equation} where the hidden constant in the $\mathcal{O}(\cdot)$ depends only on $\Delta t_{\max}$ and $\beta$.   
\end{theorem}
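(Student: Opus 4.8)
The plan is to reduce the coupled matrix recursion \cref{eq:Gaussian-matrix} to a family of decoupled \emph{scalar} eigenvalue recursions and then perform a phase-based step count. First I would exploit the identity $\Sigma_{n+1} = h_n(\Sigma_n)$ with the scalar map $h_n(x) = x e^{\Delta t_n(1-x)}$ \emph{applied as a matrix function}: because such a function preserves eigenvectors, writing $\Sigma_0 = \sum_i \lambda_i^{(0)} u_i u_i^T$ forces every iterate into the form $\Sigma_n = \sum_i \lambda_i^{(n)} u_i u_i^T$ with the \emph{same} orthonormal basis $\{u_i\}$ and $\lambda_i^{(n+1)} = h_n(\lambda_i^{(n)})$. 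Consequently $\|\Sigma_n - I\|_2 = \max_i |\lambda_i^{(n)} - 1| =: \mu_n$, so the step rule \cref{eq:adaptive-dt-gaussian} reads $\Delta t_n = \min\{\Delta t_{\max}, \beta/\mu_n\}$, and in the same basis the mean recursion diagonalizes: with $\tilde v_n = U_0^T v_n$ one gets $\tilde v_i^{(n+1)} = (1 - \Delta t_n \lambda_i^{(n)}) \tilde v_i^{(n)}$. The whole problem is thus scalar once the eigenvalue trajectories are understood, and positivity $\lambda_i^{(n)} > 0$ is immediate, re-proving unconditional positive definiteness.

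Next I would establish two one-step estimates for $h_n$. (i) \emph{Dominant large eigenvalue:} when the maximal deviation comes from above and is large, the rule gives $\Delta t_n = \beta/\mu_n$, and the largest eigenvalue $1+\mu_n$ maps to $(1+\mu_n)e^{-\beta}$, so $\mu_{n+1} = e^{-\beta}\mu_n - (1 - e^{-\beta})$, a clean geometric contraction of the deviation at rate $e^{-\beta}$. (ii) \emph{Near the fixed point:} writing $\lambda = 1 + u$ with $|u| \le \mu_n$ yields $h_n(\lambda) - 1 = (1-\Delta t_n)u + \mathcal{O}(\Delta t_n u^2)$, and since the step rule enforces $\Delta t_n \mu_n \le \beta \le 1$, the quadratic remainder is controlled, giving $|h_n(\lambda) - 1| \le \gamma\,\mu_n$ with a contraction factor $\gamma < 1$ depending only on $\Delta t_{\max}$ and $\beta$ once $\mu_n$ is below a threshold.

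I would then count steps by phases. \emph{Warm-up:} while a large eigenvalue is present, fact (i) drives $\mu_n$ down to $\mathcal{O}(1)$ in $\mathcal{O}(\ln \lambda_{\max}(\Sigma_0))$ steps; during this window the small step $\beta/\mu_n$ means a tiny eigenvalue barely moves, and summing the geometric series for $\Delta t_n$ shows the smallest eigenvalue's logarithm grows by only $\mathcal{O}(1)$, so no eigenvalue below $1$ is lost. \emph{Grow-up:} once the large eigenvalue is tamed, a remaining tiny $\lambda_{\min}$ satisfies $\lambda_{\min}^{(n+1)} \ge \lambda_{\min}^{(n)} e^{c}$ with $c = \Omega(\min\{\Delta t_{\max},\beta\})$, reaching $\Omega(1)$ in $\mathcal{O}(|\ln \lambda_{\min}(\Sigma_0)|)$ steps. \emph{Convergence:} once $\mu_n$ is below the threshold, fact (ii) gives $\mu_{n+1} \le \gamma \mu_n$, so $\|\Sigma_n - I\|_2 \le \epsilon$ after $\mathcal{O}(\ln(1/\epsilon))$ further steps. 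For the mean, the bound $0 < \Delta t_n \lambda_i^{(n)} \le 2$ (from $\Delta t_n \le \beta/\mu_n$, $\lambda_i^{(n)} \le 1 + \mu_n$, and $\Delta t_{\max},\beta \le 1$) shows each $|\tilde v_i^{(n)}|$ is nonincreasing, while near the fixed point $|1 - \Delta t_n \lambda_i^{(n)}|$ is bounded by a constant $< 1$, giving geometric decay and the extra $\mathcal{O}(\max\{\ln\|v_0\|_2, 0\})$ contribution. Summing the phase counts produces the stated $N$.

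The hardest part will be the warm-up bookkeeping, where the adaptive step couples all eigenvalues through the single worst deviation $\mu_n$: while a large eigenvalue is contracted the step is small, so the remaining eigenvalues move sluggishly. I must show simultaneously that the large deviation contracts at the clean rate $e^{-\beta}$ despite the changing step, that the small eigenvalues are not dragged out of the attracting region during this window (the $\mathcal{O}(1)$ log-growth estimate), and that the ordering needed to identify $\mu_n$ with an extreme eigenvalue is preserved, since $h_n$ is only monotone on $(0, 1/\Delta t_n)$. Keeping every constant dependent only on $\Delta t_{\max}$ and $\beta$ and stitching the phase estimates across the (possibly non-sharp) transitions is the crux; the local contraction near the fixed point and the mean estimate are comparatively routine.
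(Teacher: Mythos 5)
Your proposal is correct and follows essentially the same route as the paper's proof: decouple the iteration into scalar eigenvalue recursions under $h_n(x)=xe^{\Delta t_n(1-x)}$, run warm-up and convergence phases for the largest and smallest eigenvalues (with the adaptive step bounded below by a constant after warm-up), and handle the mean through the factor $I-\Delta t_n\Sigma_n$, which is non-expansive throughout and strictly contractive once $\Sigma_n$ is near $I$. The crux you flag---the non-monotonicity of $h_n$ beyond $1/\Delta t_n$---is resolved in the paper by bounding the post-step largest eigenvalue with a three-case maximum (including the interior maximum $h_n(1/\Delta t_n)$), yielding a geometric rate $\max\{e^{-\beta},e^{\beta/2-1}\}$ together with an absolute cap rather than your clean rate $e^{-\beta}$, which leaves the stated iteration counts unchanged.
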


\begin{newremark}
   The convergence behavior proceeds in two phases. In the warm-up phase, all eigenvalues of $\Sigma_n$ approach I within a neighborhood of size independent of $\Sigma_0$ after $\mathcal{O}\bigl(|\log \lambda_{\min}(\Sigma_0)| + |\log \lambda_{\max}(\Sigma_0)|\bigr)$ iterations. The adaptive time step $\frac{\beta}{\lVert \Sigma_n - I\rVert_2}$ ensures exponential convergence to this neighborhood while preventing numerical instability. This is followed by the convergence phase, in which $\Sigma_n$ converges to $I$ with error at most $\epsilon$ in $\mathcal{O}\bigl(\log \frac{1}{\epsilon} \bigr)$ iterations. Meanwhile, the iteration for $v_n$ yields a contraction mapping due to the adaptive time-stepping in \cref{eq:adaptive-dt-gaussian}, which guarantees a contraction factor strictly less than one. Consequently, $v_n$ converges to $0$ within $\mathcal{O}\bigl(\log\|v_0\|_2 + \log \frac{1}{\epsilon} \bigr)$ iterations.
\end{newremark}

\begin{newremark}
   When only a single term in \cref{eq:adaptive-dt-gaussian} is used, numerical instability or loss of exponential convergence can occur. For example, if only $\Delta t_{\rm max}$ is used,  then for a large initial condition (in the warm-up phase), the first iteration yields $$\Sigma_1 = \Sigma_0 e^{-\Delta t_{\rm max} (\Sigma_0 - 1)},$$ which remains positive but becomes exponentially small. Conversely, if only $\frac{\beta}{\|\Sigma_n-I\|_2}$ is used, then $\Sigma_n$ oscillates around $I$ in the convergence phase. Specifically, when $\Sigma_n \in (e^{-\beta/2}, e^{\beta/2})$, one finds $\Sigma_{n+1} \notin (e^{-\beta/2}, e^{\beta/2})$.
\end{newremark}

Next, we consider the effect of the scheduler in the presence of noise. When the integral terms in~\cref{eq:Ek,eq:gm-ngf-update_mw} are estimated via the Monte Carlo method,  stochastic errors are introduced. 
We write the estimated integrals under errors as 
$$\E_\N[\theta\theta^T (f-\E_\N[f])] = \LL_n^T \LL_n-I-\Omega_n\quad \textrm{and}  \quad\E_\N[\theta (f-\E_\N[f])] = \LL_n^T v_n + u_n ,$$ where $\LL_n$ defined in \cref{eq:LL-h} is a square root of $\Sigma_n$ satisfying $\Sigma_n=\LL_n\LL_n^T$.
The terms ${\Omega}_n$ and $u_n$ denote the stochastic errors in the covariance and mean updates at the $n$-th iteration arising from the Monte Carlo approximations.
Accordingly, the updates of the mean and covariance, in place of \cref{eq:Gaussian-matrix}, are given by 
\begin{align}
\label{eq:stoch_mean_covariance_update_analysis}
    v_{n+1} = v_n - \Delta t_n \left( \LL_n(\LL_n^Tv_n + u_n) \right)\quad \textrm{and}\quad 
    \Sigma_{n+1}= \LL_n e^{\Delta t_n(-\LL_n^T \LL_n+I+\Omega_n)} \LL_n^T.
\end{align}
 In the stochastic setting, we primarily demonstrate that when Gaussian variational inference has reached a certain accuracy level, an appropriately decaying scheduler guarantees convergence. 
\begin{assumption}
\label{assum:stoch_convergence}
For all $n\geq 1$, we make the following assumptions:
\begin{enumerate}
    \item $\lambda_{\rm max}(\Sigma_n), \|v_n\|_2<C_0$, with some $C_0>1.$
    \item The zero mean Monte Carlo approximation noise for the covariance and mean updates are $\Omega_n$ and $u_n$, where $\Omega_n$ is a symmetric noise matrix.
\end{enumerate}
\end{assumption}

The time step in \cref{eq:adaptive-dt} is given by $\Delta t_n= \min\{\eta(t_n) \Delta t_{\max},\frac{\beta}{\|-\LL_n^T \LL_n+I+\Omega_n\|_2}\}\ $, which depends on both $\Sigma_n$ and $\Omega_n$. Under the above assumptions, we have the following convergence guarantee. The proof, deferred to
\cref{proof_of_thm:Sigma_n-noise}, uses a Lyapunov function, summability
estimates for the stochastic errors, and the Robbins-Siegmund theorem~\cite{robbins1971a}.

\begin{theorem}
    Under \cref{assum:stoch_convergence}, the update scheme given by \cref{eq:stoch_mean_covariance_update_analysis} with the time step $\Delta t_n= \min\{\eta(t_n) \Delta t_{\max},\frac{\beta}{\|-\LL_n^T \LL_n +I+\Omega_n\|_2}\} $,  where $\sum_n\eta(t_n)=+\infty$ and $\sum_n\eta(t_n)^2
    <+\infty$,  ensures the almost sure convergence of $\Sigma_n$ to the identity matrix $I$ and $v_n$ to $0$.

    \label{thm:Sigma_n-noise}
\end{theorem}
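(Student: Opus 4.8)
The plan is to treat the update \cref{eq:stoch_mean_covariance_update_analysis} as a stochastic approximation scheme and to prove convergence through the Robbins--Siegmund supermartingale convergence theorem. Working on the filtration $\mathcal{F}_n = \sigma(\Sigma_0, v_0, \Omega_0, \widetilde{w}_0, \dots, \Omega_{n-1}, \widetilde{w}_{n-1})$, under which $\Sigma_n, v_n, \LL_n$ are measurable and $\E[\Omega_n\mid\mathcal{F}_n]=0$, I would introduce two Lyapunov functions: the log-determinant Bregman divergence $V_n = \mathrm{tr}(\Sigma_n) - \log\det\Sigma_n - N_\theta$ for the covariance (nonnegative, vanishing only at $\Sigma_n = I$, and coercive as any eigenvalue tends to $0$ or $\infty$), and $U_n = \|v_n\|_2^2$ for the mean. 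The goal is to establish one-step inequalities $\E[V_{n+1}\mid\mathcal{F}_n] \le V_n - \beta_n + \gamma_n$ with $\beta_n \ge 0$ and $\sum_n\gamma_n < \infty$, and similarly for $U_n$; Robbins--Siegmund then yields a.s. convergence of $V_n, U_n$ together with $\sum_n\beta_n < \infty$, and the divergence condition $\sum_n\eta(t_n) = +\infty$ upgrades this to $V_n, U_n \to 0$.

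For the covariance drift I would exploit that the log-determinant part evolves exactly: since $\Sigma_{n+1} = \LL_n e^{\Delta t_n B_n}\LL_n^T$ with $B_n = I + \Omega_n - \LL_n^T\LL_n$, one has $\log\det\Sigma_{n+1} - \log\det\Sigma_n = \Delta t_n\,\mathrm{tr}(B_n)$ with no approximation. Expanding $e^{\Delta t_n B_n} = I + \Delta t_n B_n + R_n$ and using the trace identities $\mathrm{tr}(\LL_n^T\LL_n) = \mathrm{tr}(\Sigma_n)$ and $\mathrm{tr}((\LL_n^T\LL_n)^2) = \mathrm{tr}(\Sigma_n^2)$, the first-order contribution collapses to the coordinate-free drift $-\Delta t_n\|\Sigma_n - I\|_F^2$, and the noise enters only through the martingale-like term $\Delta t_n\,\mathrm{tr}(\Omega_n(\LL_n^T\LL_n - I))$ plus the exponential remainder $\mathrm{tr}(\LL_n R_n\LL_n^T)$. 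Using the trace formulation rather than an eigenvalue-by-eigenvalue argument is what lets me bypass the non-commutativity of $\Omega_n$ with $\Sigma_n$.

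The main obstacle is that $\Delta t_n$ depends on the noise $\Omega_n$ through $\|-\LL_n^T\LL_n + I + \Omega_n\|_2$, so the noise term is not a genuine martingale increment and cannot simply be annihilated by conditioning. I would resolve this by splitting $\Delta t_n = \bar\Delta t_n + (\Delta t_n - \bar\Delta t_n)$ with the deterministic part $\bar\Delta t_n := \eta(t_n)\Delta t_{\max}$. Against $\bar\Delta t_n$ the noise term has zero conditional mean because $\LL_n$ is $\mathcal{F}_n$-measurable and $\E[\Omega_n\mid\mathcal{F}_n] = 0$. The correction $\Delta t_n - \bar\Delta t_n$ is nonzero only on the event $\mathcal{E}_n$ that the stability cap $\beta/\|-\LL_n^T\LL_n + I + \Omega_n\|_2$ binds, i.e.\ that the noise is anomalously large; by Chebyshev together with the finite second moments of the Monte Carlo estimators and the boundedness in \cref{assum:stoch_convergence}, $P(\mathcal{E}_n\mid\mathcal{F}_n) = \bigO(\bar\Delta t_n^2) = \bigO(\eta(t_n)^2)$. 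A Cauchy--Schwarz bound then shows the correction contributes only $\bigO(\eta(t_n)^2)$ in conditional expectation; the same cap $\Delta t_n\|B_n\|_2 \le \beta$ controls $\|R_n\| \lesssim (\Delta t_n\|B_n\|_2)^2$, which is likewise $\bigO(\eta(t_n)^2)$ in expectation whether or not $\mathcal{E}_n$ occurs. Since $\E[\Delta t_n\mid\mathcal{F}_n] \ge \tfrac12\bar\Delta t_n$ for $n$ large, I obtain $\E[V_{n+1}\mid\mathcal{F}_n] \le V_n - \tfrac12\eta(t_n)\Delta t_{\max}\|\Sigma_n - I\|_F^2 + C\eta(t_n)^2$, which is exactly of Robbins--Siegmund form with summable remainder. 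Hence $V_n$ converges a.s.\ and $\sum_n\eta(t_n)\|\Sigma_n - I\|_F^2 < \infty$; combined with $\sum_n\eta(t_n) = +\infty$ this forces $\liminf_n\|\Sigma_n - I\|_F = 0$, so the a.s.\ limit of $V_n$ is $0$, and coercivity of $V$ gives $\Sigma_n \to I$ a.s.

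Finally, for the mean I would expand $U_{n+1} = \|v_n - \Delta t_n\Sigma_n v_n - \Delta t_n\LL_n\widetilde{w}_n\|_2^2$ and treat the cross term by the same $\bar\Delta t_n$-splitting (noting that $\Delta t_n$ enters $\widetilde{w}_n$ only through its correlation with $\Omega_n$, again confined to the $\bigO(\eta(t_n)^2)$ cap event). The quadratic drift $-2\Delta t_n v_n^T\Sigma_n v_n$ is always nonpositive, so Robbins--Siegmund applies and yields convergence of $U_n$ together with $\sum_n\E[\Delta t_n\mid\mathcal{F}_n]\lambda_{\min}(\Sigma_n)\|v_n\|_2^2 < \infty$. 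Since $\Sigma_n \to I$ a.s.\ implies $\lambda_{\min}(\Sigma_n) \ge \tfrac12$ for all large $n$, this sum dominates a tail of $\sum_n\eta(t_n)\|v_n\|_2^2$, and the divergence of $\sum_n\eta(t_n)$ forces $\liminf_n\|v_n\|_2 = 0$; with $U_n$ convergent this gives $v_n \to 0$ a.s., completing the proof. I expect the delicate step to be the rigorous control of the noise-dependent step size, since it is precisely what couples the stability cap to the decaying scheduler and is the mechanism by which the condition $\sum_n\eta(t_n)^2 < \infty$ enters.
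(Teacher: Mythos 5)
Your proposal is correct and follows essentially the same route as the paper's proof: the same Lyapunov functions $\mathrm{tr}(\Sigma-\log\Sigma-I)$ and $\|v\|_2^2$, the exact log-determinant evolution plus a second-order bound on the matrix exponential, the key splitting of the noise-dependent step size into the deterministic scheduler part plus a correction supported on the cap-binding event of conditional probability $\mathcal{O}(\eta(t_n)^2)$, Cauchy--Schwarz, Robbins--Siegmund, and the $\liminf$ argument from $\sum_n \eta(t_n)=\infty$, with the covariance convergence established first and then used for the mean. The only cosmetic difference is that the paper derives explicit exponential tail bounds for $\Omega_n$ and $\widetilde{w}_n$ from the quadratic structure of $f$ before bounding $\mathbb{P}(A_n\mid\mathcal{F}_n)$, whereas you invoke Chebyshev with finite second moments of the Monte Carlo errors, which yields the same estimates.
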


\subsubsection{Well-separated Gaussian mixture case}
\label{sssec:Gaussian-mixture-convergence-study}
We next study the discrete-time convergence of the proposed algorithm in the
well-separated Gaussian-mixture regime, i.e., 
\begin{equation}
    \pi(\theta)=\sum_{k = 1}^{K} w_{\star,k} \mathcal{N}(\theta; m_{\star,k}, C_{\star,k}),
\end{equation}
and assume that the components are sufficiently far apart so that the interactions between distinct components can be neglected. We consider BBVI with the same number of mixture components. 
Since the underlying optimization problem is highly nonconvex, the flow may in general converge to a local minimizer and fail to capture some modes. Nevertheless, its evolution typically consists of two stages: a warm-up phase and a convergence phase. During the warm-up phase, the Gaussian components explore the target distribution, and the entropy term in the KL divergence acts as a repulsive force that drives the components apart and promotes exploration; the annealing-based initialization described in \cref{ssec:annealing} is particularly helpful in this stage. During the convergence phase, the Gaussian components settle near the appropriate modes. In the present analysis, we assume that each Gaussian component has already been matched to its corresponding target mode, and we focus exclusively on this convergence regime. More precisely, we assume that the modes are well-separated, so that the overlap between different components is negligible.

Under this well-separation assumption, when $\theta\sim\N(0,I)$, $f_k$ in \cref{eq:fk-reparameter} can be approximated by
\begin{equation}
 \label{eq:fk-reparameter-approx}
    \begin{split}
    f_k(t, \theta) 
    &= \log \sum_{i=1}^{K} w_i\mathcal{N}_i\bigl(L_k(t) \theta + m_k(t)\bigr)  -  \log \sum_{i=1}^{K} w_{\star,i}\mathcal{N}\bigl(L_k(t) \theta + m_k(t); m_{\star,i}, C_{\star,i}\bigr)\\
    &\approx \log \Bigl[w_k\mathcal{N}_k\bigl(L_k(t) \theta + m_k(t)\bigr)\Bigr]  -    \log \Bigl[w_{\star,k}\mathcal{N}\bigl(L_k(t) \theta + m_k(t); m_{\star,k}, C_{\star,k}\bigr)\Bigr]\\
    &= -\frac{1}{2}\theta^T\theta + \frac{1}{2}(L_k\theta + m_k - m_{\star,k})^TC_{\star,k}^{-1}(L_k\theta + m_k - m_{\star,k})  - \frac{1}{2}\log\frac{|C_k| }{|C_{\star,k}|}   + \log\frac{w_k}{w_{\star,k}}.  
    \end{split}
\end{equation}
That is,  $f_k$ depends only on $L_k, m_k, w_k$ and the corresponding target parameters $C_{\star,k}$, $m_{\star,k}$, $w_{\star,k}$, with the influence of the other modes neglected. We introduce the normalized covariance, mean, and weight
\begin{equation}
   \Sigma_{n,k}=C_{\star,k}^{-\frac{1}{2}}C_{k}(t_{n})C_{\star,k}^{-\frac{1}{2}}, \quad 
   v_{n,k}=C_{\star,k}^{-\frac{1}{2}}\bigl(m_{k}(t_{n}) - m_{\star,k}\bigr)
   , \quad r_{n,k} = \frac{w_{k}(t_n)}{w_{\star,k}},
\end{equation}
and define the square-root matrix of the normalized covariance by
\begin{equation}
\LL_{n,k} = C_{\star,k}^{-\frac{1}{2}} L_{k}(t_n) \quad \textrm{such that}\quad  \LL_{n,k} \LL_{n,k}^T = \Sigma_{n,k}.
\end{equation}
Assuming that the integral terms in~\cref{eq:gm-ngf-repar} are computed exactly, the updates for the mean, covariance, and weights in \cref{eq:gm-ngf-update_C,eq:gm-ngf-update_mw} take the form
\begin{subequations}\label{eq:Gaussian-mixture-matrix}
    \begin{align}
    v_{n+1,k}  &= (I-\Delta t_n \Sigma_{n,k})v_{n,k}, 
    \\ 
    \Sigma_{n+1,k}  &= \LL_{n,k} e^{\Delta t_n(-\LL_{n,k}^T \LL_{n,k}+I)} \LL_{n,k}^T,
    \\
    \widehat{w}_{n+1,k}  &= w_{n,k} \bigl(\frac{w_{\star,k}}{w_{n,k}}\bigr)^{\Delta t_n}e^{-\frac{\Delta t_n}{2} \bigl(\textrm{tr}[\Sigma_{n,k}-I] +  v_{n,k}^Tv_{n,k} -  \log|\Sigma_{n,k}|\bigr)} ,
    \\
    w_{n+1,k}  &= \frac{\widehat{w}_{n+1,k}}{\sum_{i=1}^{K}\widehat{w}_{n+1,i}}.
    \end{align}
\end{subequations}
The derivation can be found in \cref{Proof:convergence}. For the noise-free case, we fix the scheduler $\eta(t)=1$ and do not need to decrease it. The time-step size $\Delta t_n$ defined in \cref{eq:adaptive-dt} then becomes
\begin{align}
\label{eq:adaptive-dt-gaussian-mixture}
    \Delta t_n = \min\bigl\{\Delta t_{\max}, \frac{\beta}{\max_k\{\|\Sigma_{n,k}-I\|_2 \}}\bigr\}.
\end{align}
Here we used the fact $\lVert E_k(t_n)\Vert_2 = \lVert I - \LL_{n,k}^T \LL_{n,k}\Vert_2 = \lVert \Sigma_{n,k}-I\Vert_2$.
With this adaptive time-stepping, we show in the following theorem that the iteration \cref{eq:Gaussian-mixture-matrix} also converges exponentially fast. The proof is deferred to \cref{proof_of_theorem-Gaussian-Mixture-no-noise}.
 
\begin{theorem}
\label{theorem-Gaussian-Mixture-no-noise}
Under the well-separated approximation \eqref{eq:fk-reparameter-approx}, assume that $\Delta t_{\rm max} \leq 1$, $\beta \leq 1$, and that the initial covariance matrices $\Sigma_{0,k}$ are symmetric positive definite for all $1 \le k \le K$. 
Let
\[
\underline\lambda_0=\min_{1\le k\le K}\lambda_{\min}(\Sigma_{0,k}),
\quad
\overline\lambda_0=\max_{1\le k\le K}\lambda_{\max}(\Sigma_{0,k}),
\quad
V_0=\max_{1\le k\le K}\|v_{0,k}\|_2, \quad r_{n,k}=\frac{w_{n,k}}{w_{\star,k}}.
\]
Given an error tolerance $\epsilon < \min \left\{ \frac{1}{2}, \frac{\beta}{\Delta t_{\rm max}} \right\}$, for iterations \cref{eq:Gaussian-mixture-matrix} with the adaptive time-stepping \cref{eq:adaptive-dt-gaussian-mixture}, there exists $N = 
\mathcal{O}\bigl(|\log \underline\lambda_0| + |\log \overline\lambda_0| + \max\{\log V_0, \max_{1 \le k \le K} \left| \log r_{0,k} \right|, 0\}  + \log \frac{1}{\epsilon}  + \log N_\theta \bigr)$, such that  
for all $n > N, 1 \leq k \leq K$, 
\begin{equation}
    \|\Sigma_{n,k}-I\|_2 \leq \epsilon,\quad \|v_{n,k}\|_2 \leq \epsilon,\quad |\log r_{n,k}| \leq \epsilon,
\end{equation} where the hidden constant in the $\mathcal{O}(\cdot)$ depends only on  $\Delta t_{\max}$ and $\beta$. 

\end{theorem}

Next, we consider the effect of the scheduler in the presence of noise. When the integral terms in~\cref{eq:Ek,eq:gm-ngf-update_mw} are estimated via the Monte Carlo method,  stochastic errors are introduced. 
We write the estimated integrals under errors as 
\begin{equation}
   \begin{split}
\E_\N[\theta\theta^T (f_k-\E_\N[f_k])] &= \LL_{n,k}^T \LL_{n,k}-I-\Omega_{n,k}, \\
    \E_\N[\theta (f_k-\E_\N[f_k])] &= \LL_{n,k}^T v_{n,k} + u_{n,k}, \\
    \E_\N[f_k] &= \frac{1}{2} \bigl(\textrm{tr}[\Sigma_{n,k}-I] +  v_{n,k}^Tv_{n,k} -  \log|\Sigma_{n,k}|\bigr) + \log \frac{w_{n,k}}{w_{\star,k}} - s_{n,k}.
   \end{split} 
\end{equation}
The terms ${\Omega}_{n,k}$, $u_{n,k}$, and  $s_{n,k}$ denote the stochastic errors in the covariance, mean, and weight updates at the $n$-th iteration arising from the Monte Carlo approximations.
Accordingly, the updates of the mean, covariance, and weight, in place of \cref{eq:Gaussian-mixture-matrix}, are given by 
\begin{subequations}
    \label{eq:Gaussian-mixture-stoch}
    \begin{align}
    v_{n+1,k}  &= v_{n,k} - \Delta t_n \left( \LL_{n,k}(\LL_{n,k}^Tv_{n,k} + u_{n,k}) \right),
    \\ 
    \Sigma_{n+1,k}  &= \LL_{n,k} e^{\Delta t_n(-\LL_{n,k}^T \LL_{n,k}+I + \Omega_{n,k})} \LL_{n,k}^T ,
    \\
    \widehat{w}_{n+1,k}  &= w_{n,k} \bigl(\frac{w_{\star,k}}{w_{n,k}}\bigr)^{\Delta t_n}e^{-\frac{\Delta t_n}{2} \bigl(\textrm{tr}[\Sigma_{n,k}-I] +  v_{n,k}^Tv_{n,k} -  \log|\Sigma_{n,k}|\bigr)} e^{\Delta t_n s_{n,k}},
    \\
    w_{n+1,k}  &= \frac{\widehat{w}_{n+1,k}}{\sum_{i=1}^{K}\widehat{w}_{n+1,i}}.
    \end{align}
\end{subequations}

In the stochastic setting, we show that, once the Gaussian mixture variational
iteration has entered a sufficiently accurate regime, its convergence behavior is
analogous to that of the single Gaussian case. Under a suitable boundedness
assumption and a decaying scheduler, we obtain the following almost-sure
convergence result. The proof is given in \cref{Proof_of_thm:GM-noise}.

\begin{assumption}
\label{assum:GM-stoch_convergence}
For all $n\geq 1, 1\le k \le K$, we make the following assumptions:
\begin{enumerate}
    \item $\lambda_{\max}(\Sigma_{n,k}), \|v_{n,k}\|_2,|\log r_{n,k}|<C_0$, with some $C_0>1.$
    \item The zero mean Monte Carlo approximation noise for the covariance, mean, and weight updates are $\Omega_{n,k}, u_{n,k}$ and $s_{n,k}$, where $\Omega_{n,k}$ is a symmetric noise matrix.
\end{enumerate}
\end{assumption}

\begin{theorem}
    Under the well-separated approximation \eqref{eq:fk-reparameter-approx} and \cref{assum:GM-stoch_convergence}, the update scheme given by \cref{eq:Gaussian-mixture-stoch} with the time step 
    $$\Delta t_n= \min\bigl\{\eta(t_n) \Delta t_{\max},\frac{\beta}{\max_{k}\{\|-\LL_{n,k}^T \LL_{n,k} +I+\Omega_{n,k}\|_2\}}\bigr\},$$  
    where $\sum_n\eta(t_n)=+\infty$ and $\sum_n\eta(t_n)^2
    <+\infty$,  ensures the almost sure convergence of $\Sigma_{n,k}$ to the identity matrix $I$, $v_{n,k}$ to $0$ and $\log r_{n,k}$ to $0$ for every \(1\le k\le K\).

    \label{thm:GM-noise}
\end{theorem}

\begin{newremark}

In the numerical experiments, we observe exponential convergence in the noisy setting even for Gaussian mixture targets whose modes are not well-separated, provided that the number of Gaussian mixture components equals the number of target modes and that the approximation captures all modes; see \cref{sec:exponential}.
\end{newremark}

\subsection{Relation to manifold optimization and mirror descent}
\label{ssec:manifold_optimization}
In this subsection, we study the proposed algorithm from a manifold optimization perspective, showing that it is naturally designed within the framework of mirror descent \cite{beck2003mirror} to ensure positivity and affine invariance.

The covariance update in \cref{eq:gm-ngf-update_C} can be interpreted as a matrix optimization problem constrained to the Riemannian manifold of symmetric positive definite matrices $\cS_{++}$. On this manifold, the tangent space at a point $X\in\cS_{++}$, denoted $T_X \cS_{++}$, consists of symmetric matrices. The affine invariant Riemannian metric and distance are defined as \cite{pennec2006riemannian,han2021riemannian}
\begin{align*}
   g_X(\sigma_1, \sigma_2) = {\rm tr}(X^{-1}\sigma_1 X^{-1}\sigma_2) ,\quad
    {\rm d}(X,Y) = \lVert \log(X^{-\frac{1}{2}}YX^{{-\frac{1}{2}}^T}) \rVert_F, 
\end{align*}
for $X,Y\in \cS_{++}$ and $\sigma_1, \sigma_2\in T_X \cS_{++}$. Here $X^{\frac{1}{2}}$ denotes a square-root matrix satisfying $X = X^{\frac{1}{2}} X^{{\frac{1}{2}}^T}$. These definitions are independent of the choice of the square-root matrix (see \cref{lemma:SPD-manifold-square-root}), and in practice, the unique symmetric positive definite square-root is used.

Given any function $f: \cS_{++} \rightarrow \R$, its Riemannian gradient at $X$ is
\begin{align*}
    {\rm grad} f(X) = X \nabla f(X) X \in T_X \cS_{++},
\end{align*}
where $\nabla f(X)$ is the Euclidean gradient.
To move along a tangent vector $\sigma \in T_X \cS_{++}$ for unit time along the geodesic, we use the exponential map:
\begin{align*}
    {\rm exp}_X(\sigma) = X^{\frac{1}{2}}\exp(X^{-\frac{1}{2}}\sigma X^{-\frac{1}{2}^T}) X^{\frac{1}{2}^T},
\end{align*}
with inverse given by the logarithmic map ${\rm exp}_{X}^{-1}(Y) = X^{\frac{1}{2}}\log(X^{-\frac{1}{2}}Y X^{-\frac{1}{2}^T}) X^{\frac{1}{2}^T}$. 
Gradient descent on $f$ with step size $\Delta t$ solves
\begin{align*}
    X_{n+1} = \argmin_X\Bigl\{g_{X_n}( {\rm exp}_{X_n}^{-1}(X), {\rm grad} f(X_n) ) + \frac{1}{2 \Delta t} {\rm d}(X, X_n)^2\Bigr\},
\end{align*}
this updates $X$ analytically according to 
\begin{equation}
    \label{eq:manifold_update}
    X_{n+1} = X_{n}^{\frac{1}{2}}\exp\Bigl(-\Delta tX_{n}^{\frac{1}{2}^T}\nabla f(X_{n}) X_{n}^{\frac{1}{2}}\Bigr) X_{n}^{\frac{1}{2}^T}.
\end{equation}
Take $f(C_k)=\mathrm{KL}\big[ \rho_a^{\rm GM} \Vert \pi \big]$. Its gradient is given by $\nabla f(C_k) = \frac{w_k}{2}C_k^{-\frac{1}{2}^T}E_k(C_k)C_k^{-\frac{1}{2}}$, where $E_k$ is defined in \cref{eq:Ek}. Then~\cref{eq:manifold_update} is the form used for the covariance update in \cref{eq:gm-ngf-update_C} up to a constant $\frac{w_k}{2}$. This also corresponds to mirror descent:
\begin{align*}
    X_{n+1} = \argmin_X\Bigl\{\langle \nabla f(X_n), X - X_n\rangle +  \frac{1}{\Delta t}D_{\phi}(X, X_n)\Bigr\},
\end{align*}
where $\langle\cdot,\cdot\rangle$ denotes the Frobenius inner product. The Bregman divergence is defined as $D_{\phi}(X,Y) = \phi(X) - \phi(Y) - \langle \nabla \phi(Y), X-Y \rangle$ with generator function $\phi(X) = {\rm tr}(S_{n}\log S_{n} - S_{n})$, where $S_n = X_n^{-\frac{1}{2}} X X_n^{-\frac{1}{2}^T}$. Note here $\phi$ is independent of the choice of square-root decomposition of $X$. The derivation uses the fact that $\nabla \phi(X) = X_n^{-1} \exp^{-1}_{X_n}(X) X_n^{-1}$.

Beyond the covariance update, the overall algorithm described in \cref{ssec:algorithm} can be interpreted as a mirror descent scheme. This provides a new perspective for studying Gaussian mixture variational inference, including properties such as positivity preservation, affine invariance, and convergence. We formalize this connection in the following theorem, with the proof in \cref{Proof:manifold}.
\begin{theorem}
\label{theorem-manifold}
For the Gaussian mixture parameter vector 
$$a =[m_1, ..., m_K, C_1, ..., C_K, w_1, ..., w_K],$$
we define the Bregman divergence $D_{\phi_n}(x,y) = \phi_n(x) - \phi_n(y) - \nabla \phi_n(y)^T (x-y)$
associated with the generator function
\begin{align*}
\phi_n(a) = \sum_{k=1}^{K} \left( \frac{w_{n,k}}{2} {\rm tr}(S_{n,k}\log S_{n,k} - S_{n,k}) + \frac{w_{n,k}}{2}m_k^T C_{n,k}^{-1} m_k +  w_k \log w_k \right),
\end{align*}
where $w_{n,k}=w_k(t_n),m_{n,k}=m_k(t_n),C_{n,k}=C_k(t_n)$ and $S_{n,k} = C_{n,k}^{-\frac{1}{2}} C_k  C_{n,k}^{-\frac{1}{2}}$.
The proposed Gaussian mixture variational inference algorithm, with update rules \cref{eq:gm-ngf-update_C,eq:gm-ngf-update_mw,eq:adaptive-dt}, can be written in the form of mirror descent:
\begin{equation}
\begin{split}
\label{eq:mirror_descent}
a_{n+1} = \argmin_{a}\left\{ \left\langle \nabla_{a}{\rm KL}[\rho_a^{\rm{GM}} \Vert \pi]|_{a_n}, a-a_n\right\rangle + \frac{1}{\Delta t_n}D_{\phi_n}(a,a_n) \right\} \quad{\rm s.t.} \quad \sum_{k=1}^{K} w_k = 1.
\end{split}
\end{equation}

Furthermore, consider an arbitrary invertible affine mapping $\psi: \theta \mapsto \tilde{\theta} = T\theta + d$. Let the transformed posterior distribution be $\widetilde{\pi} = \psi\#\pi$, with potential $\widetilde{\Phi}_R(\theta) = \Phi_R(T^{-1}(\theta - d))$. Then the algorithm remains invariant, with $\rho_{\tilde{a}}^{\rm GM} = \psi\#\rho_a^{\rm GM}$, namely the parameters and time step transform as follows:
    \begin{align*}
        \widetilde{m}_k(t)  = T m_k(t) + d,\quad
        \widetilde{C}_k(t)  = T C_k(t) T^T,\quad \widetilde{w}_k(t)  = w_k(t), \quad
        \Delta \widetilde{t}_n  = \Delta t_n.
    \end{align*}
\end{theorem}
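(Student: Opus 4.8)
The plan is to prove the two assertions separately, exploiting the block-separable structure of $\phi_n$ for the mirror-descent characterization and then leveraging that characterization, together with the pushforward invariance of the KL divergence, for the affine invariance.

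\textbf{Mirror-descent form.} Since $\phi_n(a)$ splits as a sum of terms each depending on a single block ($m_k$, $C_k$, or $w_k$), the argmin in \cref{eq:mirror_descent} decouples into $K$ covariance subproblems, $K$ mean subproblems, and one coupled weight subproblem (coupled only through the constraint $\sum_k w_k=1$). I would treat each block in turn, and the key bookkeeping point throughout is that the factors $w_{k,n}/2$ in the generator are tuned to cancel the weight factors appearing in the Euclidean gradient $\nabla_a\mathrm{KL}$. For the covariance block I would invoke the manifold computation already recorded in \cref{eq:manifold_update}: with generator term $\tfrac{w_{k,n}}{2}\,{\rm tr}(S_{k,n}\log S_{k,n}-S_{k,n})$ and Euclidean gradient $\nabla_{C_k}\mathrm{KL}=\tfrac{w_k}{2}C_k^{-\frac{1}{2}^T}E_k C_k^{-\frac{1}{2}}$, the first-order optimality condition reproduces \cref{eq:manifold_update} with the prefactor $\tfrac{w_k}{2}$ cancelling $\tfrac{w_{k,n}}{2}$ at $a=a_n$, so the exponent is exactly $-\Delta t_n E_k(t_n)$, matching \cref{eq:gm-ngf-update_C}. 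For the mean block, the generator term $\tfrac{w_{k,n}}{2}m_k^T C_{k,n}^{-1}m_k$ is quadratic, so its Bregman divergence is the Mahalanobis form $\tfrac{w_{k,n}}{2}(m_k-m_{k,n})^T C_{k,n}^{-1}(m_k-m_{k,n})$ and mirror descent reduces to a $C_{k,n}$-preconditioned forward Euler step; again the $w_{k,n}$ cancels the weight factor in $\nabla_{m_k}\mathrm{KL}$, yielding the mean update in \cref{eq:gm-ngf-update_mw}.

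The weight block is where I expect the most care. The negative-entropy generator $\sum_k w_k\log w_k$ makes the weight subproblem an exponentiated-gradient step on the simplex; introducing a Lagrange multiplier $\mu$ for $\sum_k w_k=1$, the stationarity condition reads $\log w_{k,n+1}=\log w_{k,n}-\Delta t_n\,\partial_{w_k}\mathrm{KL}+\mu$. Computing $\partial_{w_k}\mathrm{KL}=\E_{\N}[f_k]+1$ and absorbing the constant $+1$ into $\mu$, I would identify $\mu$ by matching against \cref{eq:gm-ngf-update_mw}: the choice $\mu=\Delta t_n\sum_i w_i\E_{\N}[f_i]$ is precisely the centering that preserves $\sum_k w_k=1$ to first order, since $\sum_k w_{k,n}\bigl(\E_\N[f_k]-\sum_i w_i\E_\N[f_i]\bigr)=0$. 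This reconciles the explicit log-update with the constrained mirror step.

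\textbf{Affine invariance.} I would prove this by induction on $n$, showing the full update map is covariant under $a\mapsto\tilde a$. Two ingredients suffice. First, the reparameterized integrand $f_k$ in \cref{eq:fk-reparameter} transforms under $\psi$ by the additive constant $-\log|\det T|$, because $\tilde\rho=\psi\#\rho$ and $\widetilde\Phi_R=\Phi_R\circ\psi^{-1}$ are evaluated at $\psi(L_k\theta+m_k)$; since $E_k$ in \cref{eq:Ek}, the mean integrand, and the weight centering all involve only the centered quantity $f_k-\E_\N[f_k]$, this constant cancels, so $E_k$ is invariant and hence the time step $\Delta t_n$ in \cref{eq:adaptive-dt} is invariant. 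Second, although $\phi_n$ itself is not affine-invariant (the Mahalanobis mean term shifts under the translation $d$), its Bregman divergence $D_{\phi_n}$ is: the mean term depends only on the difference $m_k-m_{k,n}$, mapped to $T(m_k-m_{k,n})$, whose Mahalanobis norm is preserved since $\tilde C_{k,n}^{-1}=T^{-T}C_{k,n}^{-1}T^{-1}$; the SPD term depends on $(C_k,C_{k,n})$ only through the spectrum of $C_{k,n}^{-1}C_k$, which is unchanged because $\tilde C_{k,n}^{-1}\tilde C_k=T^{-T}(C_{k,n}^{-1}C_k)T^{T}$ is similar to $C_{k,n}^{-1}C_k$; and the entropy term is unchanged since $\tilde w_k=w_k$. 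Together with the pushforward invariance $\mathrm{KL}[\psi\#\rho_a^{\rm GM}\Vert\psi\#\pi]=\mathrm{KL}[\rho_a^{\rm GM}\Vert\pi]$, which makes the linear term in \cref{eq:mirror_descent} invariant under the joint transform, the entire argmin is covariant; hence $\tilde a_{n+1}$ is the image of $a_{n+1}$ and $\Delta\tilde t_n=\Delta t_n$, closing the induction. The main obstacles are the careful tracking of the weight normalization in the first part and the generator-versus-Bregman distinction in the second.
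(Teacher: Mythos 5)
Your covariance and mean blocks are handled exactly as in the paper (same generators, same cancellation of the $w_{k,n}/2$ prefactors, same reduction to \cref{eq:manifold_update} and to a $C_{k,n}$-preconditioned Euler step), but the weight block has a genuine gap. In the constrained subproblem the Lagrange multiplier is not free to be "identified by matching": it is pinned down by requiring $\sum_k w_{k,n+1}=1$ exactly, and solving the stationarity condition with that multiplier gives the normalized exponentiated-gradient step
\begin{equation*}
w_{k,n+1}=\frac{w_{k,n}\exp\bigl(-\Delta t_n\,\E_{\N}[f_k]\bigr)}{\sum_{j} w_{j,n}\exp\bigl(-\Delta t_n\,\E_{\N}[f_j]\bigr)},
\end{equation*}
which is what the paper derives (its \cref{eq:weight_update_g}) and then identifies with the algorithm's weight update. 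Your choice $\mu=\Delta t_n\sum_i w_i\E_\N[f_i]$ does not satisfy the constraint exactly (only to first order in $\Delta t_n$, as you concede), so the candidate it produces is in general not the argmin of \cref{eq:mirror_descent}; the sentence "this reconciles the explicit log-update with the constrained mirror step" therefore does not establish the exact equivalence claimed in the theorem. You should instead solve the constrained problem with the normalization multiplier, as the paper does, and identify that form with the algorithm's centered log-update.

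For the affine invariance you take a genuinely different route: the paper verifies covariance of each update formula directly (showing $\widetilde f_k=f_k-\log|\det T|$, hence $\widetilde E_k=E_k$ and $\Delta\widetilde t_n=\Delta t_n$, then transforming the covariance, mean, and weight updates one by one), whereas you argue that the whole mirror-descent objective is invariant: $D_{\phi_n}$ is invariant (Mahalanobis term via $\tilde C_{k,n}^{-1}=T^{-T}C_{k,n}^{-1}T^{-1}$, SPD term via similarity of $C_{k,n}^{-1}C_k$, entropy unchanged) and the linear term is invariant by pushforward invariance of the KL objective together with the affinity of the parameter map $a\mapsto\tilde a$. This is a legitimate and arguably cleaner argument, but to make it complete you should state that the minimizer is unique (strict convexity of $D_{\phi_n}$ in each block), and note that it funnels through the mirror-descent characterization, so it inherits the weight-block gap above; the paper's direct verification avoids that dependence entirely.
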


\begin{newremark}
The potential function $\phi_n(a)$ in the mirror descent framework combines the covariance terms, the mean terms, and the weight terms, which induce the update of the three parts respectively. The resulting scheme is affine invariant at the discrete level, making the algorithm robust to coordinate transformations. In practice, as the result is independent of the decomposition method, Cholesky factorization is employed for efficient computation of matrix operations involving covariance matrices.
\end{newremark}

\section{Implementation details}
\label{sec:implementaion}

\subsection{Scheduler}
\label{ssec:scheduler}
The scheduler $\eta(t)$ in \cref{eq:adaptive-dt} 
plays a crucial role in improving convergence, as demonstrated by both our theoretical analysis (See~\cref{thm:Sigma_n-noise}) and the experimental results that follow. The asymptotic convergence analysis uses a Robbins-Monro-type scheduler. In the finite-horizon experiments, we instead use
the following stable cosine decay, a variant of the schedulers proposed in~\cite{loshchilov2016sgdr,li2025functional}:
\begin{equation}
\label{eq:stable_cos_scheduler}
    \eta(t_n) = \begin{cases}
        1, & n \leq \frac{N}{2}, \\ 
        \eta_{\rm min} + \frac{1 - \eta_{\rm min}}{2} \left(1 + \cos\left(2\pi\left(\frac{n}{N} - \frac{1}{2}\right)\right)\right), & n > \frac{N}{2},
    \end{cases}
\end{equation}
where $\eta_{\rm min} < 1$, and $N$ is the total number of iterations. This schedule should be understood as a practical finite-horizon choice: it keeps the step size large during the early phase and gradually reduces it near convergence.
If asymptotic convergence is desired, the finite-horizon schedule can be followed by a Robbins-Monro tail. A comparison with other schedulers is presented in \cref{sec:sensitivity}. In our experiments, the stable cosine schedule performs slightly better, although the differences are not substantial.

\subsection{Annealing-based initialization}
\label{ssec:annealing}

For multimodal posteriors or posteriors with heavy tails (See cases A and C in \cref{ssec:model-problems}), annealing can significantly improve exploration and approximation quality. Incorporating annealing into the variational framework is straightforward: the KL-divergence \cref{eq:KL} is modified to:
$$
\min_{\rho_a} \ \mathbb{E}_{\rho_a}[\log \rho_a] + \frac{1}{T} \mathbb{E}_{\rho_a}[\Phi_R],
$$
where $T \ge 1$ controls the trade-off between the entropy and cross-entropy terms. Higher temperatures encourage exploration, helping Gaussian components move between modes instead of getting trapped in one.

In the present work, we employ an exponential annealing scheduler to reduce the temperature parameter $T$ from an initial value $T_{\text{start}}$ to $1$ over $N_\alpha$ iterations, with $T_n = T_{\text{start}}^{\frac{N_\alpha-n}{N_\alpha-1}}$ for $n=1,\cdots,N_\alpha$. The resulting distribution after the final iteration is used as the initial condition of GMBBVI. The initial value $T_{\text{start}}$ is set such that the natural gradient of the entropy term dominates that of the cross-entropy term:
$$
\left\| \frac{1}{T_{\text{start}}} \widetilde{\nabla}_{m} \mathbb{E}_{\rho_a}[\Phi_R] \right\| \le \alpha \cdot \| \widetilde{\nabla}_{m} \mathbb{E}_{\rho_a}[\log \rho_a] \|,
$$
where $\alpha \in (0,1)$ is a small constant, and $\widetilde{\nabla}_{m} $ denotes the natural gradient with respect to means $m_1,\cdots,m_K$, using the same block-diagonal approximation of the Fisher information matrix as in \cref{sec-Natural gradient variational inference}. This ensures early updates are entropy-driven for robust exploration. Here, we do not consider natural gradients based on the weights and covariance, as the primary goal during the annealing phase is to increase the separation between component means. These natural gradients are estimated in a derivative-free manner similar to \cref{eq:gm-ngf}.

\section{Experimental results}
\label{sec:experiments}
In this section, we present numerical studies of the proposed GMBBVI method for sampling complex distributions of unknown parameters or fields\footnote{Our code is available online for reproducibility:
\url{https://github.com/PKU-CMEGroup/InverseProblems.jl/tree/master/Derivative-Free-Variational-Inference}.}. Specifically, we consider the following examples:
\begin{enumerate}
    \item  {Multi-dimensional model problems (up to 50 dimensions):} We use these problems to demonstrate that GMBBVI is robust with respect to distributions featuring multiple modes, infinitely many modes, and modes with narrow and curved shapes.  
    Additionally, on Neal's funnel sampling problem, we benchmark BBVI against WALNUTS~\cite{Nawaf2025WALNUTS}, a state-of-the-art derivative-based sampling approach for this case, to highlight the strengths and limitations of our method.
    \item   Darcy flow problem: We consider the inverse problem of recovering the permeability field in a Darcy flow equation. The problem is structured to exhibit symmetry, resulting in two modes in the posterior. We demonstrate that GMBBVI effectively captures both modes, illustrating its potential for tackling multimodal problems in large-scale, moderate-dimensional applications.
\end{enumerate}
For all experiments, we set $\Delta t_{\rm max} = \beta = 0.9$ in \cref{eq:adaptive-dt}, adopt the stable cosine decay scheduler mentioned in \cref{eq:stable_cos_scheduler} decreasing from $1$ to $0.1$, and use $J=4 N_\theta$ samples for each Gaussian integral in \cref{eq:gm-ngf-repar}. 

\subsection{Multi-dimensional model problems}
\label{ssec:model-problems}
In this subsection, we first investigate several classical 2D sampling problems, together with their modified higher-dimensional counterparts in 10 and 50 dimensions.
For the 2D setting, we consider the following problems:

\begin{enumerate}[label=Case \Alph*, left=0.2cm]
\item: The target distribution is a Gaussian mixture with $10$ modes (See \cref{fig:GMBBVI-trials}).

\item : The target distribution has a circular geometry \cite[Appendix E]{chen2024efficient} and contains infinitely many modes. It is defined by 
\begin{align*}
\Phi_R(\theta) = \frac{1}{2}\F(\theta)^T\F(\theta), \textrm{ where } \F(\theta) = \frac{y - (\theta_{(1)}^2 + \theta_{(2)}^2)}{0.3}
\textrm{ and } 
y = 1.
\end{align*}
\item : 
The target distribution is based on the Rosenbrock function, which exhibits the characteristic ``banana-shaped" geometry~\cite{goodman2010ensemble}. It is given by
\begin{align*}
\Phi_R(\theta) = \frac{1}{2}\F(\theta)^T\F(\theta),\textrm{ where}\, \F(\theta) = 
\frac{1}{\sqrt{10}}\Bigl(y - \begin{bmatrix}
10(\theta_{(2)} -  \theta_{(1)}^2)\\
\theta_{(1)}
\end{bmatrix}\Bigr), 
y= 
\begin{bmatrix}
0\\
1
\end{bmatrix}.
\end{align*}
\end{enumerate}
We then modify these sampling problems to an $N_\theta$-dimensional setting ($N_\theta = 10,50$) by introducing $N_\theta - 2$ additional variables $\theta^c$. For Case A, the latter $N_\theta - 2$ dimensions are modeled as independent Gaussian random variables, each with a mean of $m_i \sim \mathcal{N}(0, 1)$ and a variance of 1. For Cases B and C, the reference density is defined with
\[
\Phi_R(\theta,\theta^c) = \frac{1}{2}\F(\theta)^T\F(\theta) + \frac{1}{2} (\theta^c-K\theta)^T (\theta^c-K\theta),
\]
where $\theta \in \R^2, \theta^c \in \R^{N_\theta-2}$, and $K \in \R^{(N_\theta-2) \times 2}$ is an all-ones matrix. These high-dimensional target densities are constructed so that the marginal densities of $\theta$ are analytically tractable and coincide exactly with the corresponding 2D target density.

\begin{figure}[htbp]
    \centering
    \includegraphics[scale=0.25]{./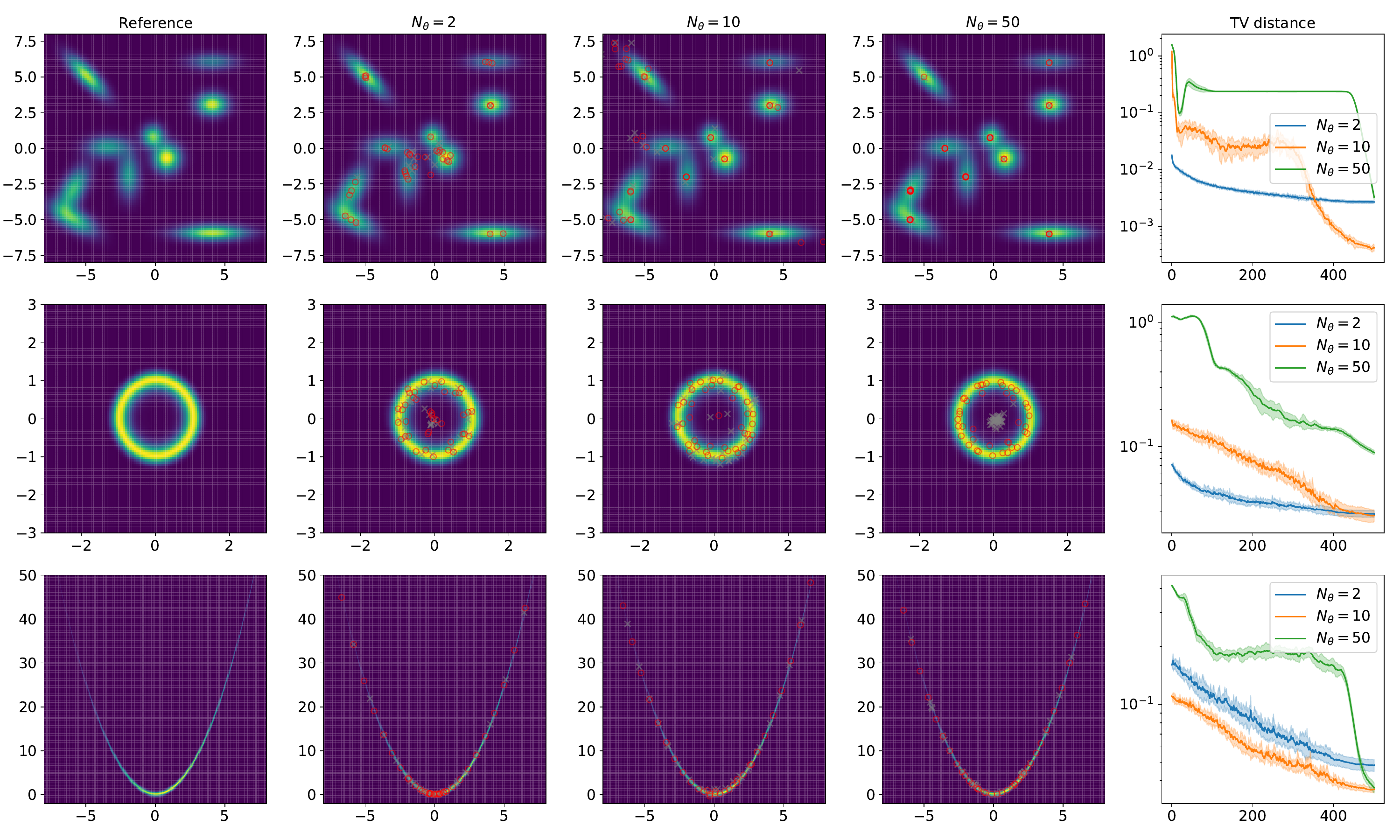}
    \caption{Multi-dimensional model problems: Cases A to C are arranged from the top row to the bottom row. Each panel shows the reference marginal density along with the GMBBVI estimates for problem dimensions 2, 10, 50 (from left to right). The projected means of each Gaussian component are marked by red circles, and the projected means at iteration $0$ (after annealing) are marked by grey crosses. The fifth panel displays the total variation distance between the reference marginal density and the estimated marginal densities across the iterations, with shaded area representing the standard deviation computed from 10 independent trials.}
    \label{fig:GMBBVI-trials}
\end{figure}

We apply GMBBVI with $K = 40$ modes, initialized with means $m_i\sim \N(0,I)$ and covariances $C_i = I$, and run the algorithm for 500 iterations. For Cases A and C, the annealing strategy described in \cref{ssec:annealing} with $N_\alpha = 500$ and $\alpha=0.1$ is employed to address multimodality and heavy-tailed behavior.
The resulting marginal densities and convergence in terms of total variation (TV) distance are shown in \cref{fig:GMBBVI-trials}, with each row corresponding to one case (Cases A-C). 
For the cases considered, GMBBVI effectively captures multiple modes and provides a good approximation to the target distribution, converging within $\mathcal{O}(10^2)$ iterations to achieve a TV distance below $0.1$ for moderately high-dimensional problems.
For comparison, we also apply some other Gaussian mixture variational inference methods to the above examples, demonstrating the efficiency and approximation accuracy of GMBBVI; see \cref{ssec:comparison-among-VI}.

Next, we consider a challenging benchmark known as Neal's funnel~\cite{neal2003slice}, defined as
\[
\theta_{(1)} \sim \mathcal{N}(0, 9),\quad \theta_{(i)} \mid \theta_{(1)} \sim \mathcal{N}(0, e^{\theta_{(1)}}), \quad i = 2, \dots, N_{\theta}. 
\]
The joint distribution exhibits a funnel-like geometry, with $\theta_{(1)}$ forming the funnel axis. This distribution can be viewed as a model problem for Bayesian hierarchical models. The multiscale behavior of the funnel makes it particularly challenging; we expect our GMBBVI to adapt to the local geometry by using different Gaussian components. We also employ the recently proposed WALNUTS \cite{Nawaf2025WALNUTS}, which is an improved variant of the No-U-Turn sampler~\cite{hoffman2014no}  for this benchmark.

We apply GMBBVI with $K = 40$ modes, initialized with means $m_i\sim \N(0,I)$ and covariances $C_i = I$, and run the algorithm for 2000 iterations. For  WALNUTS, we adopt the parameter settings from \cite{Nawaf2025WALNUTS}, utilizing WALNUTS-R2P and $1000$ warm-up iterations with $H=0.3$ and $\delta=0.3$ for a total $16000$ accepted proposals.
Comparative results, including marginal densities and several convergence metrics, are presented in  \cref{fig:Funnel_Comparison}, where each row corresponds to a dimensional setting ($N_\theta = 2,10, 50$). 
The results indicate that GMBBVI achieves accuracy comparable to WALNUTS in low-dimensional settings ($N_\theta = 2, 10$), while offering a clear computational speed advantage. However, in higher dimensions ($N_\theta = 50$), GMBBVI fails to provide reliable marginal variance estimates compared with WALNUTS. This limitation arises because  Gaussian components become trapped and oscillate in transverse directions near the narrow neck of the funnel, making exploration along the axial direction inefficient.

\begin{figure}[htbp]
    \centering    
    \includegraphics[scale=0.25]{./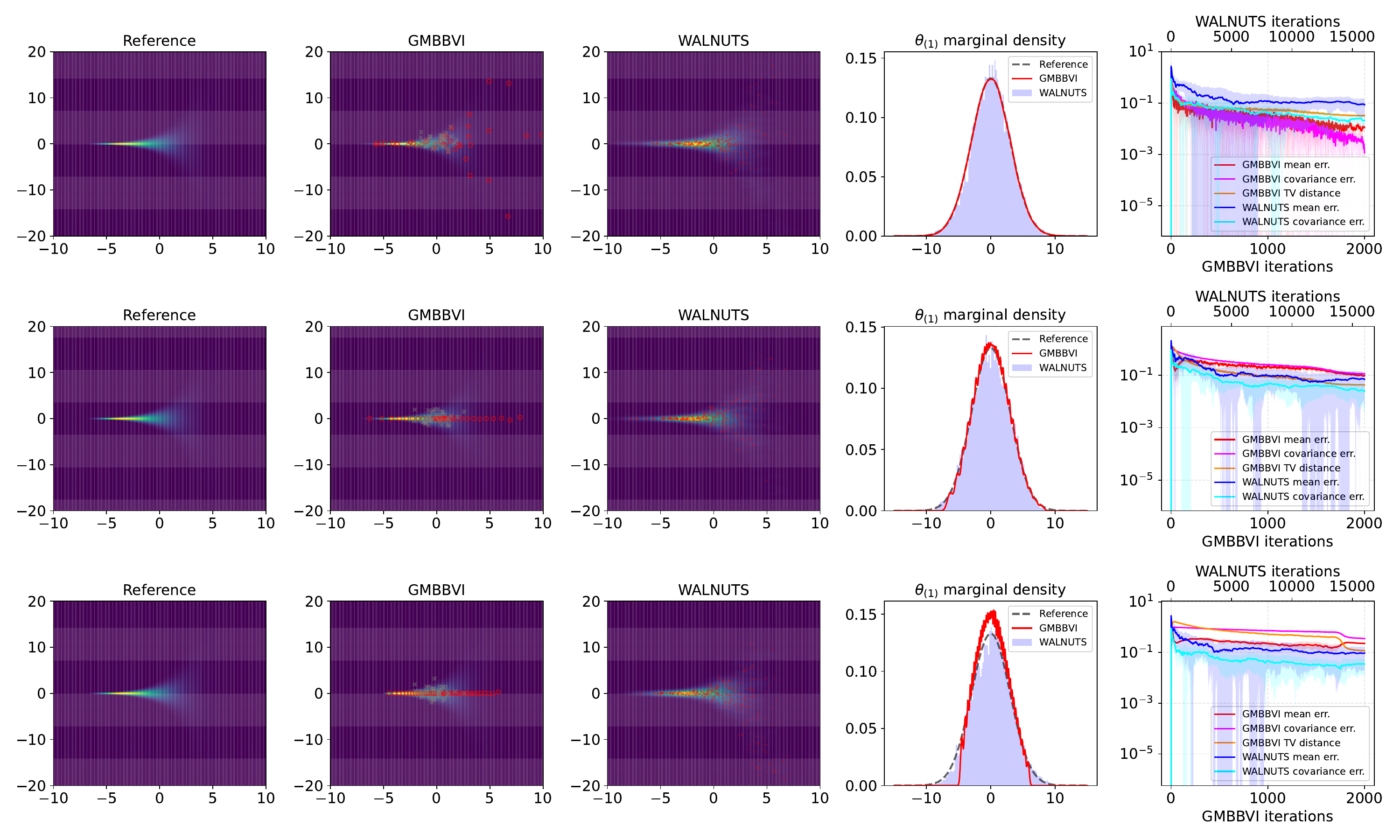}
    \caption{
    Neal's funnel model problem: Dimensions $2$, $10$, and $50$ are arranged from the top row to the bottom row. Each panel shows the reference marginal density of $(\theta_{(1)},\theta_{(2)})$ along with the GMBBVI and WALNUTS estimates (from left to right). For GMBBVI, the projected means of each Gaussian component are marked by red circles, and the initializations are marked by grey crosses. For WALNUTS, red dots denote the last 5000 particles, and the marginal density is visualized using kernel density estimation. 
    The fourth panel displays the reference marginal density of $\theta_{(1)}$ along with the GMBBVI and WALNUTS estimates.
    The fifth panel reports convergence metrics for $\theta_{(1)}$, including its mean and variance, for both methods, as well as the total variation distance between the reference marginal density and the GMBBVI estimates of $(\theta_{(1)},\theta_{(2)})$, with the shaded area representing the standard deviation computed from 10 independent trials.} \label{fig:Funnel_Comparison} 
\end{figure}

Finally, we demonstrate that GMBBVI is robust with respect to the initial condition, the scheduler $\eta$, the annealing parameter, and a sufficiently large number of modes $K$. Detailed sensitivity results are presented in \Cref{sec:sensitivity}.

\subsection{Darcy flow problem}
\label{ssec:Darcy}
In this subsection, we evaluate our proposed GMBBVI algorithm on a Bayesian inverse problem arising from the Darcy flow equation on the two-dimensional spatial domain $D=[0,1]^2$. This equation characterizes the pressure field $p(x)$ in a porous medium governed by a positive permeability field $a(x,\theta)$:
\begin{align}
    \label{eq:Darcy-2D}
    -\nabla \cdot (a(x, \theta) \nabla p(x)) &= f(x), \quad x\in D,
\end{align}
subject to homogeneous Dirichlet boundary conditions $p(x)=0$ on $\partial D$ for simplicity. The fluid source field $f$ is defined piecewise in the $x_2$ direction as follows: it takes value $1000$ for $x_2 \leq \frac{4}{6}$, $2000$ for $\frac{4}{6} < x_2 \leq \frac{5}{6}$, and $3000$ for $\frac{5}{6} < x_2 \leq 1.$
The logarithm of the permeability field $\log a(x_1,x_2)$ is parameterized via a Karhunen-Lo\`eve expansion:
\begin{equation}
    \label{eq:KL-expan}
    \log a(x_1,x_2) = \sum_{l} \theta_l \sqrt{\lambda_l} \phi_l(x_1,x_2).
\end{equation}
Here the summation is taken over $l=(l_1,l_2)\in \Z^{0+}\times \Z^{0+} \setminus \{(0,0)\}$, $\{\lambda_l\}$ denotes the eigenvalues arranged in descending order as
\begin{equation}
    \lambda_l = (\pi^2(l_1^2 + l_2^2) + \tau^2)^{-d},
\end{equation}
where $d = 2.0$ and $\tau = 3.0$ are parameters governing the decay rate of the eigenvalues. The corresponding eigenfunctions  $\{\phi_l\}$ are defined as
\begin{equation}
    \phi_l(x_1,x_2) = 
    \begin{cases}
        \sqrt{2} \cos(\pi l_1 x_1), & \text{if } l_2 = 0, \\
        \sqrt{2} \cos(\pi l_2 x_2), & \text{if } l_1 = 0, \\
        2 \cos(\pi l_1 x_1) \cos(\pi l_2 x_2), & \text{otherwise}.
    \end{cases}
\end{equation}

For numerical implementation, we truncate the Karhunen-Lo\`eve expansion~\cref{eq:KL-expan} to its first $N_\theta=32$ terms and solve the Darcy flow equation~\cref{eq:Darcy-2D} using the finite difference method, where the computational domain is discretized by an $80 \times 80$ uniform grid. The observations $y_{\rm obs}$ in the inverse problem are selected as the symmetric pressure sum $\frac{1}{2}[p(x_1,x_2)+p(1-x_1,x_2)]$ evaluated at $28$ equidistant points in the left region of the domain (see \cref{fig:Darcy-2D-observation}), thereby inducing the forward mapping $\G:\theta\in \R^{32}\mapsto y\in \R^{28}$. The Bayesian inverse problem aims to sample from the posterior distribution of $\theta$, given by
\[
    \rho_{\rm post}(\theta)\propto \exp \left(-\frac{1}{2\sigma_\eta^2}\|y_{\rm obs}-\G(\theta)\|^2-\frac{1}{2\sigma^2_0}\|\theta\|^2\right).
\]
Here we adopt the prior distribution of $\theta$ as $\N(0,\sigma_0 ^2 I_{32})$, and $y_{\rm obs}$ represents the observations corrupted by noise $\eta \sim \N(0,\sigma_\eta ^2 I_{28})$, with $\sigma_0=5.0$ and $\sigma_\eta=0.25$. Due to the symmetric nature of our observations and fluid source field, symmetric permeability fields induce identical observations, rendering the aforementioned posterior distribution at least bimodal.

\begin{figure}
    \centering
    \includegraphics[width=0.4\linewidth]{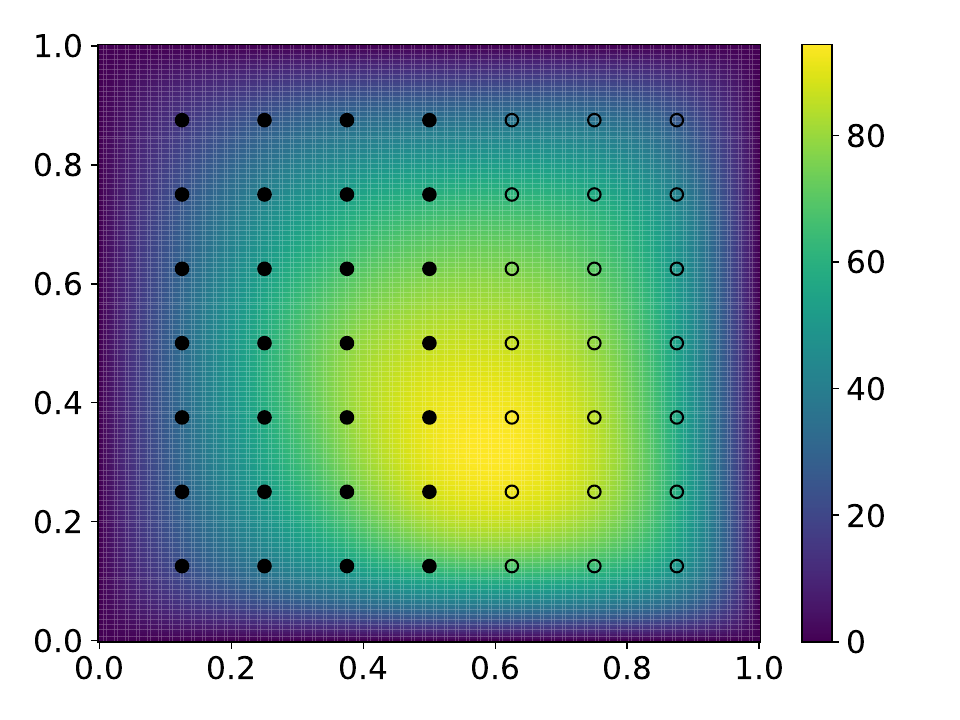}
    \caption{Darcy flow problem: The pressure field and symmetry observations at 28 equidistant points (solid black dots). Their mirroring points are marked (empty black dots).}
    \label{fig:Darcy-2D-observation}
\end{figure}
In our inverse problem setup, we randomly select $\theta_{\rm ref}$ according to the prior distribution, obtain the corresponding permeability field, and solve the Darcy flow equation~\cref{eq:Darcy-2D} on a grid refined by a factor of three. Additive noise following $\N(0,\sigma_\eta^2 I_{28})$ is incorporated into the observation $y_{\rm obs}$. 

We run GMBBVI for 500 iterations, with $K=5$ modes where each mean is initialized according to $\N(0,\sigma_0 ^2I)$ and each covariance is set to $\sigma_0^2 I$. Upon completion of the iterations, we compare the logarithm of permeability fields corresponding to each $m_k$ against the true and mirror permeability fields to partition them into two groups. The weighted means of $m_k$ within each group are computed to reconstruct the permeability fields, as presented in \cref{fig:Darcy-logk}. The evolution curves of various metrics during iterations are shown in \cref{fig:Darcy-convergence}, illustrating that GMBBVI converges in around 100 iterations. Additionally, \cref{fig:Darcy-density} displays the marginal distributions of the Gaussian mixture on the first 16 dimensions. These results demonstrate that our algorithm GMBBVI achieves rapid convergence without requiring derivatives of $\G$, while successfully capturing the multimodal nature of the posterior distribution.

\begin{figure}
    \centering
    \includegraphics[width=0.85\linewidth]{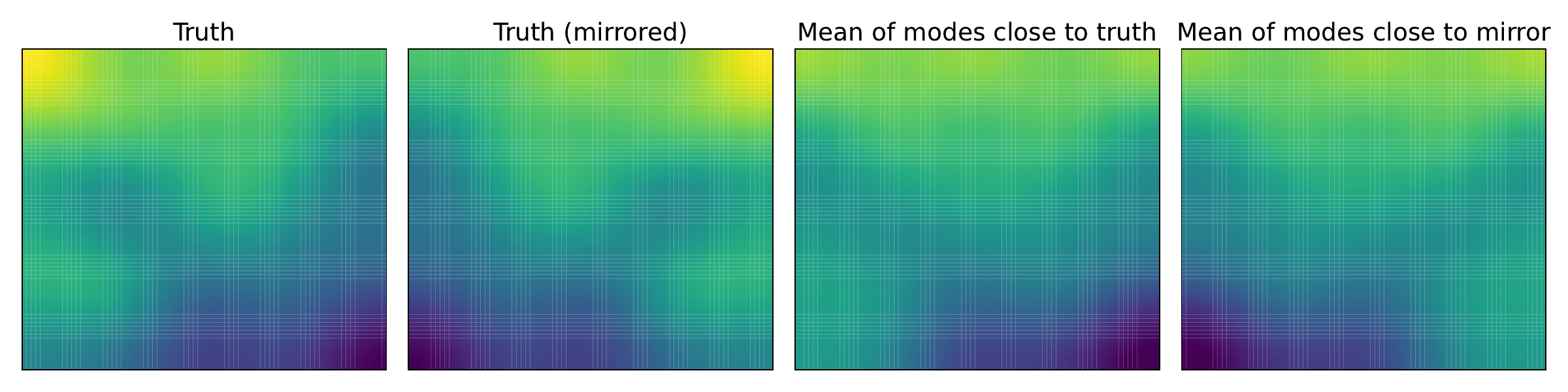}
    \caption{Darcy flow problem: The true initial permeability field (leftmost), its mirrored field (middle-left), recovered permeability fields by modes close to truth (middle-right) and close to mirrored truth (rightmost).}
    \label{fig:Darcy-logk}
\end{figure}

\begin{figure}
    \centering
    \includegraphics[width=0.95\linewidth]{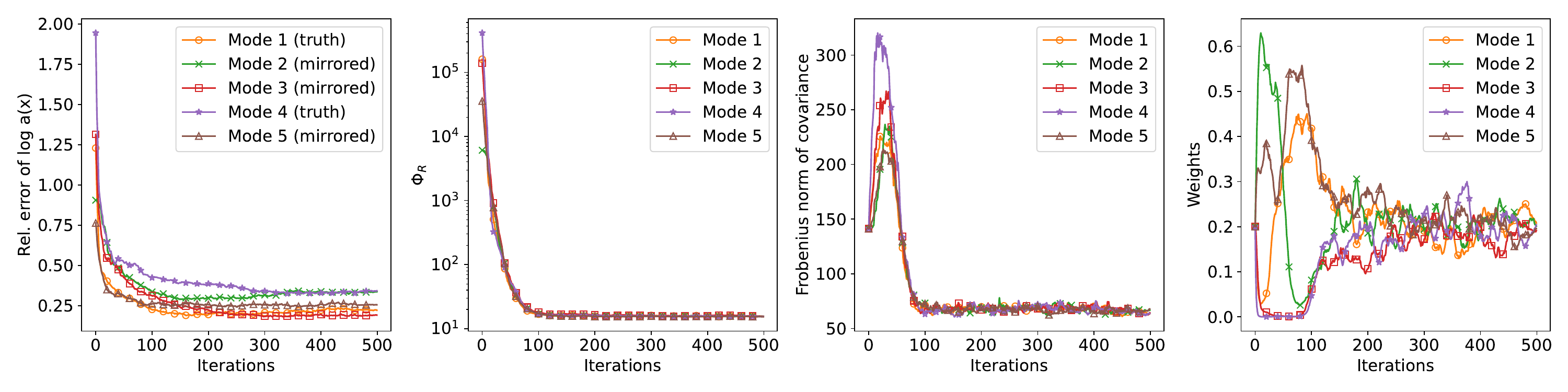}
    \caption{The relative errors of the logarithm of permeability field, the optimization errors $\Phi_R (m_k )$, the Frobenius norm $\|C_k \|_F $, and the Gaussian mixture weights $w_k$ (from left to right) for different modes over GMBBVI iterations.}
    \label{fig:Darcy-convergence}
\end{figure}

\begin{figure}
    \centering
    \includegraphics[width=0.85\linewidth]{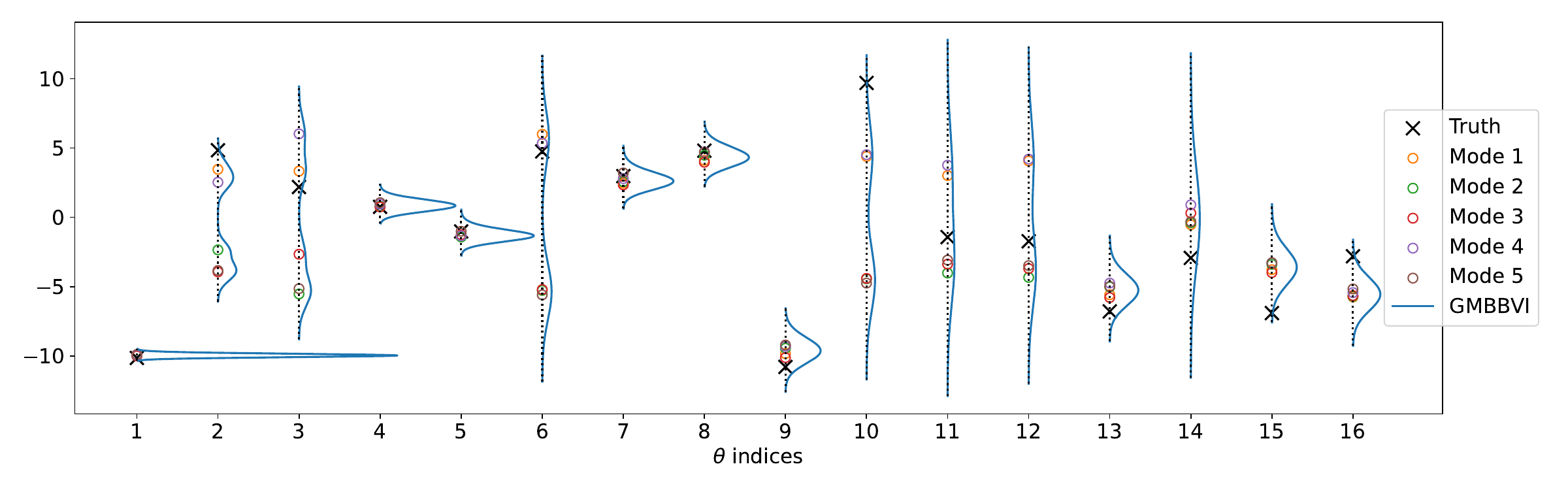}
    \caption{ The true KL expansion parameters $\theta_{(i)}$ (black crosses), and mean estimations of $\theta_{(i)}$ for each mode (circles) and the associated marginal distributions obtained by GMBBVI at the 500th iteration. Only the first 16 dimensions are plotted.}
    \label{fig:Darcy-density}
\end{figure}

\section{Conclusions}
\label{sec:conclusions}
In this paper, we investigate stability issues in Gaussian mixture black-box variational inference methods. We propose an adaptive exponential time integrator and demonstrate its superior stability and efficiency in approximating moderately high-dimensional probability distributions through both theoretical analysis and empirical validation.

Several promising directions for future research remain. On the algorithmic side, the proposed exponential integrator and its connections to manifold optimization and mirror descent could also be extended to other gradient-based sampling methods involving covariance matrices or other constrained manifolds.
On the theoretical side, a more detailed analysis of the approximation properties and optimization guarantees of Gaussian mixtures and related variational approaches, for example for log-concave target densities, would provide deeper insight into their practical performance and limitations.

\section*{Acknowledgments} 
B. Che, D.Z. Huang, X. Mao, and W. Wang acknowledge funding support from National Key R\&D Program of China 2025YFA1018700, 
National Natural Science Foundation of China (No.12471403, 62595771, and 12288101), the Fundamental Research Funds for the Central Universities of China, and the support of the high-performance computing platform of Peking University.

\appendix
\section{Derivations and Proof of the Convergence}
\label{Proof:convergence}

\textbf{Derivation of \cref{eq:Gaussian-matrix} and \cref{eq:LL-h}.} For the Gaussian case that $\Phi_R(\theta)=\frac{1}{2}(\theta-m_{\star})^T
C_{\star}^{-1}(\theta-m_{\star})$, and following the definitions in \cref{eq:fk-reparameter,eq:Sigma_v}, we have 
\begin{align}
    f(t_n,\theta)
    &=-\frac{1}{2}\theta^T\theta+\frac{1}{2}(L(t_n)\theta+m(t_n)-m_{\star})^TC_{\star}^{-1}(L(t_n)\theta+m(t_n)-m_{\star})+\text{const} \nonumber\\
    &=\frac{1}{2}\theta^T(\LL_n^T\LL_n-I)\theta+v_n^T\LL_n\theta+\frac{1}{2}\|v_n\|^2+\text{const}\label{eq:f_t_n_theta}.
\end{align}  Then we have 
\begin{align*}
    &\E_{\N}\Bigl[\theta \bigl(f - \E_{\N}[f] \bigr)\Bigr] 
    = \E_{\N}[\nabla_\theta f]  = \LL_n^T v_n,\\
    &\E_{\N}\Bigl[\theta\theta^T \bigl(f - \E_{\N}[f] \bigr)\Bigr] 
    = \E_{\N}[\nabla_\theta\nabla_\theta f]  = \LL_n^T\LL_n-I.
\end{align*}
Using these calculations and the update schemes in \cref{eq:gm-ngf-update_mw,eq:gm-ngf-update_C}, we obtain
\begin{align*}
 v_{n+1}&=v_n -\Delta t_n \LL_n\LL_n^T v_n=(I-\Delta t_n\Sigma_n)v_n \\
 \Sigma_{n+1}&=\LL_n e^{-\Delta t_n(\LL_n^T\LL_n-I)}\LL_n^T= \Sigma_n \LL_n^{-T}e^{\Delta t_n(I-\LL_n^T \LL_n)} \LL_n^T \\
 &= \Sigma_n e^{\Delta t_n\LL_n^{-T}(I-\LL_n^T \LL_n) \LL_n^T} = \Sigma_n e^{\Delta t_n(I-\Sigma_n)}=h_n(\Sigma_n),
\end{align*} where $h_n$ is defined in \cref{eq:LL-h}. That finishes the derivation of  \cref{eq:Gaussian-matrix} and \cref{eq:LL-h}.

\textbf{Derivation of \cref{eq:Gaussian-mixture-matrix}.} For the well-separated Gaussian mixture case where $f_k$ in \cref{eq:fk-reparameter-approx} is approximated by
\begin{equation}
    \begin{split}
    f_k(t_n, \theta) &= -\frac{1}{2}\theta^T\theta + \frac{1}{2}(L_{k}(t_n)\theta + m_{k}(t_n) - m_{\star,k})^TC_{\star,k}^{-1}(L_{k}(t_n)\theta + m_{k}(t_n) - m_{\star,k})  - \frac{1}{2}\log\frac{|C_{n,k}| }{|C_{\star,k}|}   + \log\frac{w_{n,k}}{w_{\star,k}}  \\
    &= \frac{1}{2}\theta^T(\LL_{n,k} ^T\LL_{n,k}-I)\theta+v_{n,k}^T \LL_{n,k}\theta+\frac{1}{2}\|v_{n,k}\|^2  - \frac{1}{2}\log|\Sigma_{n,k}|  + \log \frac{w_{n,k}}{w_{\star,k}},
    \end{split}
\end{equation} 
we have 
\begin{align*}
    &\E_{\N}\Bigl[\theta \bigl(f_k - \E_{\N}[f_k] \bigr)\Bigr] 
    = \E_{\N}[\nabla_\theta f_k]  = \LL_{n,k}^T v_{n,k},\\
    &\E_{\N}\Bigl[\theta\theta^T \bigl(f_k - \E_{\N}[f_k] \bigr)\Bigr] 
    = \E_{\N}[\nabla_\theta\nabla_\theta f]  = \LL_{n,k}^T\LL_{n,k}-I,\\
    &\E_{\N}[f_k]  = \frac{1}{2} \bigl(\textrm{tr}[\Sigma_{n,k}-I] +  v_{n,k}^Tv_{n,k} -  \log|\Sigma_{n,k}|\bigr)  +\log  \frac{w_{n,k}}{w_{\star,k}}.
\end{align*}
Using these calculations and the update schemes in \cref{eq:gm-ngf-update_mw,eq:gm-ngf-update_C}, we obtain
\begin{align*}
 v_{n+1,k}&= 
 v_{n,k} -\Delta t_{n} \LL_{n,k}\LL_{n,k}^T v_{n,k}=(I-\Delta t_{n}\Sigma_{n,k})v_{n,k}, \\
 \Sigma_{n+1,k}&=
 \LL_{n,k} e^{-\Delta t_{n}(\LL_{n,k}^T\LL_{n,k}-I)}\LL_{n,k}^T= \Sigma_{n,k} \LL_{n,k}^{-T}e^{\Delta t_{n}(I-\LL_{n,k}^T \LL_{n,k})} \LL_{n,k}^T \\
 &= \Sigma_{n,k} e^{\Delta t_{n}\LL_{n,k}^{-T}(I-\LL_{n,k}^T \LL_{n,k}) \LL_{n,k}^T} = \Sigma_{n,k} e^{\Delta t_{n}(I-\Sigma_{n,k})},\\
 \log \widehat{w}_{n+1,k} &= \log w_{n,k} - \Delta t_n \Bigl( \frac{1}{2} \bigl(\textrm{tr}[\Sigma_{n,k}-I] +  v_{n,k}^Tv_{n,k} -  \log|\Sigma_{n,k}|\bigr)  +\log  \frac{w_{n,k}}{w_{\star,k}} \Bigr) .
\end{align*}
From the last equation, we have
\[
\widehat{w}_{n+1,k}  = w_{n,k} \bigl(\frac{w_{\star,k}}{w_{n,k}}\bigr)^{\Delta t_n}e^{-\frac{\Delta t_n}{2} \bigl(\textrm{tr}[\Sigma_{n,k}-I] +  v_{n,k}^Tv_{n,k} -  \log|\Sigma_{n,k}|\bigr)},
\]
That finishes the derivation of  \cref{eq:Gaussian-mixture-matrix}.

\subsection{Proof of \texorpdfstring{\cref{theorem-Sigma_n-no-noise}}{theorem 1}}
\label{proof_of_theorem-Sigma_n-no-noise}

\begin{proof}[Proof of \cref{theorem-Sigma_n-no-noise}]
\label{proof:Sigma_n-no-noise}

\textbf{We first consider the convergence of $\{\Sigma_n\}$.} Let $\lambda_n^i (1 \leq i \leq N_\theta)$ denote the eigenvalues of $\Sigma_n$, ordered increasingly. Writing the eigen decomposition as $\Sigma_n = P^T\Lambda_nP$, the update rule gives 
$$\Sigma_{n+1} = P^T h_n(\Lambda_n)P, \quad \textrm{with} \quad \Lambda_{n+1} = h_n(\Lambda_n) \quad  \textrm{diagonal}.$$ 
Therefore, these eigenvalues evolve according to the iteration $h_n(x)$, and are reordered after each step. 

We outline several properties of $h_n(x) = xe^{\Delta t_n (1 - x)}$ for scalar $x > 0$ with $0 < \Delta t_n \leq 1$.
First, its derivative is 
    $h'_n(x) = e^{\Delta t_n(1- x)}(1 - \Delta t_n x)$,
    so $h_n(x)$ is increasing up to its maximum at $x = \frac{1}{\Delta t_n} \geq 1$, and decreasing thereafter, hence 
    \begin{equation}
        h_n(x)\leq h_n(\frac{1}{\Delta t_n}).
    \end{equation}
Second, for $x < 1$, $h_n(x) > x$ since $e^{\Delta t_n (1 - x)} > 1$,
    and $h_n(x) < 1$, since $h_n(x) < h_n(1) = 1$. 
For $ x > 1$, $h_n(x) < x$ since $e^{\Delta t_n (1 - x)} < 1$, and moreover, $|h_n(x) - 1| < x - 1.$ 
    If $h_n(x) > 1$, this is immediate; if $h_n(x) < 1$, the inequality follows the fact 
    $2 < x(1 + e^{1 - x}) \leq x(1 + e^{\Delta t_n (1 - x)}) \quad \textrm{with} \quad x > 1.$
    Therefore, for $x > 0$, we have 
     \begin{align} \label{eq:h_n-contraction-to-1}
       |h_n(x) - 1| \leq |x - 1|, 
    \end{align}
    with equality only at $x=1$.
This property implies that all eigenvalues move closer to $1$ at each iteration, including the largest and smallest eigenvalues.

\textbf{Step 1. Warm-up phase for large eigenvalues.}
We first show that after a warm-up phase of $N_1 = \mathcal{O}\Bigl(|\log \lambda_{\rm max}(\Sigma_0)|\Bigr)$ iterations, the adaptive time step remains uniformly bounded away from $0$, independent of initialization. By definition, 
$$
\Delta t_n = \min\bigl\{\frac{\beta}{\|\Sigma_n-I\|_2}, \Delta t_{\max}\bigr\} = \min\bigl\{\frac{\beta}{|\lambda_{n}^{N_\theta} - 1|},\frac{\beta}{|1 - \lambda_{n}^{1}|}, \Delta t_{\max}\bigr\}.
$$
Suppose $\lambda_{n}^{N_\theta} > 1$; otherwise, $\Delta t_n \geq \min\{\beta, \Delta t_{\rm max}\}$. We show that the
largest eigenvalue, $\lambda_{n+1}^{N_\theta} = \max_{1\leq i\leq N_\theta}\{h_n(\lambda_{n}^{i})\}$, decreases exponentially until it enters a bounded domain. Indeed,
\begin{align}
\label{eq:x^N_theta-warm-up-contraction}
\lambda_{n+1}^{N_\theta} \leq \max\{\frac{1}{\Delta t_{\rm max}} e^{\Delta t_{\rm max} - 1}, \frac{2}{\beta} e^{\frac{\beta}{2} - 1}, \lambda_{n}^{N_\theta} e^{-\beta}, \lambda_{n}^{N_\theta} e^{\frac{\beta}{2}-1}\}.
\end{align}
This bound reflects three possible cases: 
\begin{enumerate}
    \item If $\Delta t_n = \Delta t_{\rm max}$,  then  $h_n(x) \leq h_n(\frac{1}{\Delta t_n}) = \frac{1}{\Delta t_{\rm max}} e^{\Delta t_{\rm max} - 1}$.
    \item If $ \Delta t_{\rm max} > \Delta t_n$ and  $\Delta t_n >  \frac{\beta}{2}$, then $h_n(x) \leq h_n(\frac{1}{\Delta t_n}) \leq \frac{2}{\beta} e^{\frac{\beta}{2} - 1}$.
    \item If $ \Delta t_{\rm max} > \Delta t_n$ and $\Delta t_n < \frac{\beta}{2}$, then $\Delta t_n = \frac{\beta}{\lambda_{n}^{N_\theta} - 1}$ and $h_n(\lambda_n^i) = \lambda_n^ie^{-\beta\frac{\lambda_n^i-1}{\lambda_{n}^{N_\theta} - 1}}$. The maximum occurs either at $\lambda_{n}^{N_\theta}$ or at $\frac{\lambda_{n}^{N_\theta} - 1}{\beta}$. For the first case, $h_n( \lambda_{n}^{N_\theta} ) =  \lambda_{n}^{N_\theta}  e^{-\beta}$;
    For the second case, requiring $\frac{\lambda_{n}^{N_\theta} - 1}{\beta} \leq \lambda_{n}^{N_\theta}$, we have $h_n(\frac{\lambda_{n}^{N_\theta} - 1}{\beta}) = \frac{\lambda_{n}^{N_\theta} - 1}{\beta} e^{\frac{\beta}{\lambda_{n}^{N_\theta} - 1} - 1} \leq \lambda_{n}^{N_\theta} e^{\frac{\beta}{2} - 1}$.
\end{enumerate}
Therefore after 
\begin{align}
\label{eq:N_1}
   N_1 = \frac{\log \lambda_{\rm max}(\Sigma_0)}{\min\{\beta, 1- \frac{\beta}{2}\} } = \mathcal{O}\bigl(\bigl|\log \lambda_{\rm max}(\Sigma_0)\bigr|\bigr)
\end{align} iterations, we obtain for all $n\geq N_1$: 
\begin{align}
\label{eq:x_max_N_theta}
    \lambda_{n}^{N_\theta} \leq  \lambda_{\rm max}^{N_\theta} := \max\{\frac{1}{\Delta t_{\rm max}} e^{\Delta t_{\rm max} - 1}, \frac{2}{\beta} e^{\frac{\beta}{2} - 1}\},
\end{align}
and consequently,
\begin{align}
\label{eq:Delta_t_min}
    \Delta t_n \geq \Delta t_{\rm min} := \min\bigl\{\frac{\beta}{\lambda_{\rm max}^{N_\theta} - 1}, \beta , \Delta t_{\max}\bigr\}.
\end{align}

\textbf{Step 2. Convergence phase for large eigenvalues.}
We then prove that after a convergence phase with $N_2 = \mathcal{O}\bigl(\log\frac{1}{\epsilon}\bigr)$ iterations, the largest eigenvalue satisfies
$\lambda_{n}^{N_\theta} \leq 1 + \epsilon$.
From \cref{eq:h_n-contraction-to-1}, $h_n(x)$ for $x > 1$ is a contraction: 
\begin{align*}
h_n(x) - h_n(1) = h'_n(\xi) (x - 1) = e^{\Delta t_n(1- \xi)}(1 - \Delta t_n \xi)(x - 1) \quad \textrm{with} \quad 1 < \xi < x.
\end{align*}
Either $h_n(x)$ falls below 1, or  
the contraction factor satisfies $e^{\Delta t_n(1- \xi)}(1 - \Delta t_n \xi) \leq 1 - \Delta t_{\rm min}$, since $\Delta t_n \geq \Delta t_{\rm min}$ from \cref{eq:Delta_t_min} and $\xi > 1$.
Then, after 
\begin{align}
\label{eq:N_2}
  N_2 = \frac{\log\frac{1}{\epsilon} + \log \bigl(\lambda_{\rm max}^{N_\theta} -1\bigr)}{\log \frac{1}{1 - \Delta t_{\rm min}}} = \mathcal{O}\bigl(\log\frac{1}{\epsilon}\bigr)
\end{align}
iterations, we obtain $\lambda_{n}^{N_\theta} \leq \epsilon + 1$. By \cref{eq:h_n-contraction-to-1},  during subsequent iterations, these eigenvalues greater than $1$ then remain in $(1 -\epsilon, 1+\epsilon)$ and continue to converge toward 1.

\textbf{Step 3. Warm-up phase for small eigenvalues.}
Next, we consider the smallest eigenvalue, $\lambda_{n+1}^{1} = \min_{1\leq i\leq N_\theta}\{h_n(\lambda_{n}^{i})\}$. We will show that after an additional warm-up phase of $N_3 = \mathcal{O}\Bigl(|\log \lambda_{\rm min}(\Sigma_0)|\Bigr)$ iterations, all eigenvalues become greater than $1 - \delta$ with $\delta = \frac{1}{3}$. 
After the previous phases ($N_1 + N_2$ iterations), we have 
$$ \lambda_{\rm min}(\Sigma_n) \geq \min\{\lambda_{\rm min}(\Sigma_0) , e^{-\beta}\},$$
since eigenvalues with $\lambda_n^i < 1$ continue to increase, while those with $\lambda_n^i > 1$ remain bounded below by $h_n(\lambda_{n}^{i}) = \lambda_n^i e^{\Delta t_n (1 -  \lambda_n^i)} \geq \lambda_n^i e^{-\beta} > e^{-\beta}$.
When the smallest eigenvalue satisfies  $\lambda_{n}^{1} < 1 - \delta$, we have
\begin{align*}
-\log \lambda_{n+1}^{1}  = \max_i\{-\log \lambda_n^{i} - \Delta t_n (1 - \lambda_n^{i})\} \leq -\log \lambda_{n}^{1} - \Delta t_{\rm  min} \delta. 
\end{align*}
Then after 
\begin{align}
\label{eq:N_3}
    N_3 = \frac{\max\{-\log \lambda_{\rm min}(\Sigma_{0}), \beta\}}{\Delta t_{\rm min} \delta} = \mathcal{O}\bigl(\bigl|\log \lambda_{\rm min}(\Sigma_{0})\bigr|\bigr)
\end{align}iterations, the smallest eigenvalue becomes larger than $1-\delta$.

\textbf{Step 4. Convergence phase for small eigenvalues.}
Finally, we consider the convergence phase for the smallest eigenvalue. After $N_4 = \mathcal{O}\bigl(\log\frac{1}{\epsilon}\bigr)$ iterations, it satisfies
$\lambda_{n}^{1} \geq 1 - \epsilon.$ Again,
$h_n(x)$ for $1-\delta < x < 1$ is a contraction:
\begin{align*}
h_n(1) - h_n(x) = h'_n(\xi) (1 - x) = e^{\Delta t_n(1- \xi)}(1 - \Delta t_n \xi)(1 - x) \quad \textrm{with} \quad x < \xi < 1.
\end{align*}
Using $\delta  = \frac{1}{3}$, the contraction rate can be bounded as
\begin{align*}
  e^{\Delta t_n(1- \xi)}(1 - \Delta t_n \xi) 
  &\leq e^{\Delta t_n\delta}\bigl(1 - \Delta t_n (1- \delta)\bigr) \\
  &\leq \frac{1}{1 - \Delta t_n\delta}\bigl(1 - \Delta t_n (1- \delta)\bigr)
  \leq 1 - \delta \Delta t_{n}
  \leq 1 - \delta \Delta t_{\rm min}.
\end{align*}
Here, the first inequality uses the fact that the expression is decreasing with respect to $\xi$; in the second inequality we use $e^{a} \leq \frac{1}{1 - a}$ for $a \in (0,1)$; the third inequality uses that $\delta = \frac{1}{3}$; and the last inequality uses the lower bound $\Delta t_n \geq \Delta t_{\rm min}$.
Therefore, after \begin{align}
\label{eq:N_4}
    N_4 = \frac{\log\frac{1}{\epsilon}}{\log \frac{1}{1 - \delta \Delta t_{\rm min}}} = \mathcal{O}\bigl( \log\frac{1}{\epsilon} \bigr)
\end{align}
iterations, we have $\lambda_{n}^{1} \geq 1 - \epsilon$. 

\textbf{Then we consider the convergence of $\{v_n\}$.} Since $\lVert v_{n+1} \rVert_2 \leq  \lVert I-\Delta t_n\Sigma_n \rVert_2  \lVert v_{n} \rVert_2$, it suffices to bound the spectral norm of $I-\Delta t_n\Sigma_n$.
Using the upper bound on $\Delta t_n$ in \cref{eq:adaptive-dt-gaussian} and the assumptions $\Delta t_{\rm max}, \beta \leq 1$, we obtain
\begin{align*}
     I \succeq I-\Delta t_n\Sigma_n = (1-\Delta t_n)I+\Delta t_n(I-\Sigma_n) \succeq (1-\Delta t_{\max}-\beta) I \succeq -I.
\end{align*}
Consequently, $\|v_n\|_2$ is non-increasing. If $\lVert v_0\rVert_2 \leq \epsilon$, the sequence has already converged. Otherwise, after the four steps described above, we have the refined bounds hold: $ (1-(1+\epsilon)\Delta t_{\max}) \preceq I-\Delta t_n\Sigma_n   \preceq (1-(1-\epsilon)\Delta t_{\min}) I$, where $\epsilon<1$, $\Delta t_{\rm max} < 1$, and $\Delta t_{\rm min} > 0$ is defined in \cref{eq:Delta_t_min}. 
Therefore after 
\begin{align}
\label{eq:N_5}
    N_5 = \frac{\log\frac{1}{\epsilon} + \max\{\log\|v_0\|_2,0\} }{\log \frac{1}{\max\{(1+\epsilon)\Delta t_{\max}-1, 1-(1-\epsilon)\Delta t_{\min}\}}} = \mathcal{O}\bigl(\log\frac{1}{\epsilon}+\max\{\log\|v_0\|_2,0\}\bigr)
\end{align}
iterations, we have $\|v_n\|_2\leq \epsilon$.

Combining all phases,  after $N_1 + N_2+N_3+N_4+N_5$ iterations as defined in \cref{eq:N_1,eq:N_2,eq:N_3,eq:N_4,eq:N_5}, all eigenvalues of $\Sigma_n$ lie in the interval $(1-\epsilon,1+\epsilon)$, and the norm of $v_n$ is below $\epsilon$, which completes the proof of \cref{eq:theorem-Sigma_n-no-noise}.
\end{proof}

\subsection{Proof of \texorpdfstring{\cref{thm:Sigma_n-noise}}{theorem 2}}
\label{proof_of_thm:Sigma_n-noise}

\begin{proof}[Proof of \cref{thm:Sigma_n-noise}]
\label{proof:Sigma_n-noise}
\textbf{Step 1. Derivation of the tail bounds of error terms.}
Recall from \cref{eq:f_t_n_theta} that \( f(t_n, \theta) = \frac{1}{2}\theta^T(\LL_n^T\LL_n-I)\theta+v_n^T\LL_n\theta+\text{const} \), which is a quadratic function of $\theta$.
Under \cref{assum:stoch_convergence}, the coefficients are uniformly bounded. We employ Monte Carlo sampling with \( \theta \sim \N(0,I)\) to estimate \( \E_\N[\theta\theta^T(f-\E_\N[f])] \) and \( \E_\N[\theta(f-\E_\N[f])] \). The corresponding Monte Carlo error terms are bounded by polynomial functions of $\lVert 
\theta \rVert_2$. Since the Gaussian tail bound $\mathbb{P}_{\theta \sim \N}[\lVert \theta \rVert_2^p \geq t] \lesssim e^{- t^\frac{2}{p}}$, it follows that there exist constants $C_1, C_2, \gamma>0$ such that 
\begin{equation}
\label{eq:P(Omega_n)}
   \mathbb{P}(\| \Omega_n\|_2 > t|\F_n) < C_1 e^{-C_2 t^\gamma}, \quad \mathbb{P}(\| u_n\|_2 > t|\F_n) < C_1 e^{-C_2 t^\gamma},
\end{equation}
where $\F_n$ is a filtration defined by $\F_n=\sigma(\{\Sigma_k,v_k\}_{1 \le k \le n})$. 

\textbf{Step 2. Probability estimate for the adaptive time-step selection.} Recall that the time step is $\Delta t_n= \min\{\eta(t_n) \Delta t_{\max},\frac{\beta}{\|-\LL_n^T \LL_n +I+\Omega_n\|_2}\}.$ Define 
$$k_n =  \frac{\beta}{\Delta t_{\max}\eta(t_n)}, A_n = \{ \Delta t_n = \frac{\beta}{\|-\LL_n^T \LL_n+I+\Omega_n\|_2} \} = \{ \|-\LL_n^T \LL_n+I+\Omega_n\|_2 \geq k_n \}.$$
Then 
\begin{align*}
    \mathbb{P}(A_n|\F_n) &\le \mathbb{P}(\|\Omega_n\|_2 \ge k_n - \|\LL_n^T \LL_n - I\|_2|\F_n) \\
    &\le \mathbb{P}(\|\LL_n^T \LL_n - I\|_2 > \frac{k_n}{2}|\F_n) + \E(1_{\{\|\LL_n^T \LL_n - I\|_2 \le \frac{k_n}{2}\}}1_{\{\|\Omega_n\|_2 \ge k_n - \|\LL_n^T \LL_n - I\|_2\}}|\F_n)\\
&\le \frac{\|\Sigma_n - I\|_2^2}{(k_n / 2)^2} + C_1 e^{- \frac{C_2}{2^\gamma} k_n^\gamma}, 
\end{align*}
where the last inequality follows from Markov's inequality and the tail bound in \cref{eq:P(Omega_n)}. By \cref{assum:stoch_convergence}, there exists $C_0 > 0$ such that $\E [\|\Sigma_n\|_2^2] < C_0^2$. Moreover, using the inequality $e^{-\frac{1}{x}} \lesssim x^{\frac{2}{\gamma}}$ when $x > 0$ to bound the exponential term, we obtain
\begin{align}
   \mathbb{P}(A_n|\F_n) &\le \frac{2\|\Sigma_n\|_2^2 + 2}{(k_n / 2)^2} + C_1 e^{-\frac{C_2 \beta^\gamma}{2^\gamma\Delta t_{\max}^\gamma\eta(t_n)^\gamma}}
    \notag\\
    &\leq \frac{8(C+1)\Delta t_{\max}^2}{\beta^2}\eta(t_n)^2 + C_3 \eta(t_n)^2 
     = \left(\frac{8(C+1)\Delta t_{\max}^2}{\beta^2} + C_3\right) \eta(t_n)^2, \label{eq:P(A_n)} 
\end{align}
for some constant $C_3 > 0$.
Since $\sum_{n=1}^{\infty} \eta(t_n)^2 < \infty$, it follows that $\sum_{n=1}^{\infty}\mathbb{P}(A_n) < \infty$. By Borel-Cantelli Lemma, we have $\mathbb{P}\left(A_n, \mathrm{i.o.} \right) = 0.$

\textbf{Step 3. $\Sigma_n$ converges to $I$ almost surely.}
 Define the Lyapunov function $V(\Sigma)={\rm tr}(\Sigma-\log\Sigma-I)$ . From the update rule, we obtain
\begin{align}
\label{eq:Sigma_n_update}
    V(\Sigma_{n+1})-V(\Sigma_n)=& {\rm tr}\big(\LL_n (e^{\Delta t_n(-\LL_n^T \LL_n+I+\Omega_n)}-I) \LL_n^T\big) \nonumber\\
    &- \Bigl({\rm tr} \big( \log ( \LL_n e^{\Delta t_n(-\LL_n^T \LL_n+I+\Omega_n)} \LL_n^T )\bigr) 
    - {\rm tr} (\log \Sigma_n)\Bigr).
\end{align}
Since $e^x-1\leq x + x^2 $ for $|x|\leq 1$ and $\lVert \Delta t_n(-\LL_n^T \LL_n+I+\Omega_n)\rVert_2 \leq \beta \leq 1$, we obtain
\begin{align}
     {\rm tr}\bigl(\LL_n (e^{\Delta t_n(-\LL_n^T \LL_n+I+\Omega_n)}-I) \LL_n^T\bigr)
     & \leq 
     {\rm tr}\Bigl(\LL_n \bigl(\Delta t_n (-\LL_n^T \LL_n+I+\Omega_n) + (\Delta t_n)^2(-\LL_n^T \LL_n+I+\Omega_n)^2 \bigr) \LL_n^T\Bigr) \nonumber
     \\
     & = 
     \Delta t_n {\rm tr}\bigl(\LL_n^T \LL_n(-\LL_n^T \LL_n+I+\Omega_n)\bigr) + (\Delta t_n)^2{\rm tr}\bigl(\LL_n^T \LL_n(-\LL_n^T \LL_n+I+\Omega_n)^2\bigr). \label{eq:Sigma_n_update_1}
\end{align}
Note that $\log \det \Sigma= {\rm tr} \log \Sigma$, we further obtain
\begin{align}
   {\rm tr} \bigl( \log ( \LL_n e^{\Delta t_n(-\LL_n^T \LL_n+I+\Omega_n)} \LL_n^T )\bigr)
    -  {\rm tr} (\log \Sigma_n) 
    &= 
    \log \det (\LL_n e^{\Delta t_n(-\LL_n^T \LL_n+I+\Omega_n)} \LL_n^T) -\log \det \Sigma_n \nonumber
    \\ &= 
    \Delta t_n {\rm tr}(-\LL_n^T \LL_n+I+\Omega_n). \label{eq:Sigma_n_update_2}
\end{align}
Substituting \cref{eq:Sigma_n_update_1,eq:Sigma_n_update_2} into \cref{eq:Sigma_n_update} and taking conditional expectations with respect to $\F_n$ yields, for $n \ge 1$, 
\begin{equation}
\label{eq:V_Sigma}
\begin{aligned}
\E[ V(\Sigma_{n+1})|\F_n ] \leq& V(\Sigma_{n}) + \E[  \Delta t_n {\rm tr}\bigl((\LL_n^T \LL_n-I)(-\LL_n^T \LL_n+I)\bigr) | \F_n]  +  \E[  \Delta t_n {\rm tr}\bigl((\LL_n^T \LL_n-I) \Omega_n\bigr) | \F_n] \\
&+ \E[ (\Delta t_n)^2{\rm tr}\bigl(\LL_n^T \LL_n(-\LL_n^T \LL_n+I+\Omega_n)^2\bigr)|\F_n]\\
=& V(\Sigma_{n}) - \Bigl(\E[  \Delta t_n \lVert \Sigma_n -I\rVert_F^2 | \F_n] - \min\{\E[  \Delta t_n {\rm tr}\bigl((\LL_n^T \LL_n-I)\Omega_n\bigr) | \F_n],0\} \Bigr)\\
&+ \underbrace{\E[ (\Delta t_n)^2{\rm tr}\bigl(\LL_n^T \LL_n(-\LL_n^T \LL_n+I+\Omega_n)^2\bigr)|\F_n]}_{Z^{(1)}_n} + \underbrace{\max\{\E[  \Delta t_n {\rm tr}\bigl((\LL_n^T \LL_n-I)\Omega_n\bigr) | \F_n], 0\}}_{Z^{(2)}_n}. 
\end{aligned}
\end{equation}
By the boundedness of $\LL_n$ from \cref{assum:stoch_convergence} and the tail bound in \cref{eq:P(Omega_n)}, there exists $C_4 > 0$ such that $\E\bigl[{\rm tr}\bigl(\LL_n^T \LL_n(-\LL_n^T \LL_n+I+\Omega_n)^2\bigr)| \F_n\bigr] < C_4$ and $\E\bigl[\bigl|{\rm tr}\bigl((\LL_n^T\LL_n -I) \Omega_n \bigr)\bigr|^2| \F_n\bigr] < C_4$ for all $n$. Since ${\rm tr}\bigl(\LL_n^T \LL_n(-\LL_n^T \LL_n+I+\Omega_n)^2\bigr) \geq 0$, we have
\begin{align*}
 \sum_{n=1}^\infty Z^{(1)}_n \leq \sum_{n=1}^\infty (\eta(t_n)\Delta t_{\max})^2 \E\big[ {\rm tr}\bigl(\LL_n^T \LL_n(-\LL_n^T \LL_n+I+\Omega_n)^2\bigr)|\F_n\big]\leq\sum_{n=1}^\infty (\eta(t_n)\Delta t_{\max})^2 C_4<\infty.
\end{align*}
From \cref{assum:stoch_convergence} the noise $\Omega_n$ has zero mean, i.e.,  $\E[\Omega_n|\F_n]=0$, and thus
\begin{align*}
    \sum_{n=1}^{\infty}Z^{(2)}_n
    &= \sum_{n=1}^{\infty}\max\{\E[  (\Delta t_n - \Delta t_{\rm max} \eta(t_n)) {\rm tr}\bigl((\LL_n^T \LL_n-I)\Omega_n\bigr) | \F_n], 0\} \\
    &=\sum_{n=1}^{\infty}\max\{\E[  (\Delta t_n - \Delta t_{\rm max} \eta(t_n))1_{A_n} {\rm tr}\bigl((\LL_n^T \LL_n-I)\Omega_n\bigr) | \F_n] , 0\}\\
    &\leq \sum_{n=1}^{\infty}2\Delta t_{\rm max} \eta(t_n) \sqrt{\E[1_{A_n}| \F_n]} \sqrt{\E[|{\rm tr}\bigl((\LL_n^T \LL_n-I)\Omega_n\bigr)|^2 | \F_n] } \\ 
    &\leq 2\Delta t_{\rm max}\sqrt{C_4}\sum_{n=1}^{\infty} \eta(t_n) \sqrt{\E[1_{A_n}| \F_n]}< \infty,
\end{align*}
where in the last inequality we used $\E[1_{A_n}| \F_n] = \mathbb{P}(A_n|\F_n) \lesssim \eta(t_n)^2$ from \cref{eq:P(A_n)}.
Applying the Robbins-Siegmund theorem \cite{robbins1971a} to \cref{eq:V_Sigma}, we have  $\{V(\Sigma_{n})\}_{n \geq 1}$ converges almost surely to a random variable $V_{\infty}$ and 
\begin{equation}
    \label{eq:Rob-Sigma}
    \sum_{n=1}^{\infty}\E[  \Delta t_n \lVert \Sigma_n -I\rVert_F^2 | \F_n] - \min\{\E[  \Delta t_n {\rm tr}\bigl((\LL_n^T \LL_n-I)\Omega_n\bigr) | \F_n],0\} < \infty
\end{equation}
almost surely.
Since  $\E[  \Delta t_n \lVert \Sigma_n -I\rVert_F^2 | \F_n] =
\E[  \Delta t_n 1_{A_n}\lVert \Sigma_n -I\rVert_F^2 | \F_n] + \E[  \Delta t_n 1_{A^C_n}\lVert \Sigma_n -I\rVert_F^2 | \F_n] $, we have almost surely that
$\sum_{n=1}^{\infty}\E[  \Delta t_n 1_{A^C_n}\lVert \Sigma_n -I\rVert_F^2 | \F_n]=
\sum_{n=1}^{\infty}  \Delta t_{\rm max} \eta(t_n) (1- \mathbb{P}(A_n)) \lVert \Sigma_n -I\rVert_F^2
< \infty.$
 Since $\sum_{n=1}^{\infty} \eta (t_n) = \infty$ and $\sum_{n=1}^{\infty} \mathbb{P}(A_n) < \infty$, it follows that $\liminf_{n\rightarrow \infty} \|\Sigma_n-I\|^2_F = 0$ almost surely. Combined with the almost sure convergence of $V(\Sigma_n)$, this implies $V(\Sigma_n) \xrightarrow{\text{a.s.}} 0$, and hence $\Sigma_n \xrightarrow{\text{a.s.}} I$.

\textbf{Step 4. $v_n$ converges to $0$ almost surely.} We compute
\[
\|v_{n+1}\|_2^2
= \|v_n - \Delta t_n \left( \LL_n(\LL_n^Tv_n + u_n) \right)\|_2^2
= \|v_n\|_2^2 - 2\Delta t_n  v_n^T \left( \LL_n(\LL_n^Tv_n + u_n) \right)  + \Delta t_n^2 \| \LL_n(\LL_n^Tv_n + u_n) \|_2^2.
\]
Taking conditional expectations with respect to the filtration $\F_n $ yields, for $n \geq 1$,
\begin{align}
\E\left[\|v_{n+1}\|_2^2 | \F_n\right]
\le& \|v_{n}\|_2^2 - \E[2\Delta t_n  (v_n^T \Sigma_n v_n + v_n^T\LL_n u_n)| \F_n] + \E [ \Delta t_n^2\| \LL_n(\LL_n^Tv_n + u_n) \|_2^2 \big| {\F}_n] \nonumber\\
=& \|v_{n}\|_2^2 - \bigl(\E[2\Delta t_n  v_n^T \Sigma_n v_n | \F_n] + \max\{\E[2\Delta t_n v_n^T\LL_n u_n| \F_n], 0\}\bigr) \label{eq:v_n+1}\\
& + \E [ \Delta t_n^2\| \LL_n(\LL_n^Tv_n + u_n) \|_2^2 \big| {\F}_n] - \min\{\E[2\Delta t_n v_n^T\LL_n u_n| \F_n], 0\}.\nonumber
\end{align}
By the boundedness of $\LL_n$ and $v_n$, from \cref{assum:stoch_convergence} and the tail bound in \cref{eq:P(Omega_n)}, there exists $C_5> 0$ such that 
$
  \E [ \Delta t_n^2\| \LL_n(\LL_n^Tv_n + u_n) \|_2^2 \big| {\F}_n] \le \Delta t_{\text{max}}^2 \eta(t_n)^2 \E\bigl[2 \lVert \LL_n \rVert_2^2 \bigl(\lVert \LL_n \rVert_2^2 \lVert v_n \rVert_2^2  +  \|u_n\|_2^2\bigr)\big| {\F}_n\bigr] \leq C_5\eta(t_n)^2,$ and $
  \E [ (v_n^T\LL_n u_n)^2\big| {\F}_n\bigr] \leq C_5.  
$
We have 
\begin{align*}
    \sum_{n=1}^{\infty} \E [ \Delta t_n^2\| \LL_n(\LL_n^Tv_n + u_n) \|_2^2 \big| {\F}_n] &\leq \sum_{n=1}^{\infty} C_5\eta(t_n)^2 < \infty,\\
   \sum_{n=1}^{\infty}-\min\{\E[2\Delta t_n v_n^T\LL_n u_n| \F_n], 0\} &=\sum_{n=1}^{\infty}-\min\{\E[2(\Delta t_n  - \Delta t_{\rm max}\eta(t_n))v_n^T\LL_n u_n| \F_n], 0\} \\
   &=\sum_{n=1}^{\infty}-\min\{\E[2(\Delta t_n  - \Delta t_{\rm max}\eta(t_n))1_{A_n}v_n^T\LL_n u_n| \F_n], 0\} \\
  &\leq \sum_{n=1}^{\infty}4\Delta t_{\rm max} \eta(t_n) \sqrt{\E[1_{A_n}| \F_n]} \sqrt{\E[  (v_n^T \LL_nu_n )^2 | \F_n] } \\ 
  &\leq \sum_{n=1}^{\infty}4\Delta t_{\rm max} \eta(t_n) \sqrt{\mathbb{P}(A_n|\F_n)}\sqrt{C_5} < \infty. 
\end{align*}
In the second equation, we used that $\E[u_n|\F_n]=0$ and $\mathbb{P}(A_n|\F_n) \lesssim \eta(t_n)^2$ in \cref{eq:P(A_n)}.
Applying the Robbins-Siegmund theorem \cite{robbins1971a} to \cref{eq:v_n+1}
 we have that $\{v_{n}\}_{n\geq 1}$ converges almost surely to a random variable $v_\infty$ and that 
 \begin{equation}
    \label{eq:Rob-v}
    \sum_{n=1}^{\infty} \big(\E[2\Delta t_n  v_n^T \Sigma_n v_n | \F_n] + \max\{\E[2\Delta  t_n v_n^T\LL_n u_n| \F_n], 0\}\big)< \infty
 \end{equation}  almost surely.
 Since $\E[2\Delta t_n  v_n^T \Sigma_n v_n | \F_n] \geq \E[2\Delta t_n 1_{A_n} v_n^T \Sigma_n v_n | \F_n]$, we have almost surely that 
 $\sum_{n=1}^{\infty} \E[2\Delta t_n  1_{A^C_n} v_n^T \Sigma_n v_n | \F_n] = \sum_{n=1}^{\infty} 2\Delta t_{\rm max}\eta(t_n) (1 - \mathbb{P}(A_n) )v_n^T \Sigma_n v_n  < \infty. $
Since $\sum_{n=1}^{\infty} \eta(t_n) = \infty$, $\sum_{n=1}^{\infty} \mathbb{P}(A_n) < \infty$, and  $\Sigma_n \xrightarrow{\text{a.s.}} I$, it follows that $\liminf_{n\rightarrow \infty} \|v_n\|^2_2 = 0$ almost surely. Combined with the almost sure convergence of  $\lVert v_n \rVert_2^2$, this implies $\lVert v_n \rVert_2^2 \xrightarrow{\text{a.s.}} 0$, and hence $v_n \xrightarrow{\text{a.s.}} 0$.
\end{proof}

\subsection{Proof of \texorpdfstring{\cref{theorem-Gaussian-Mixture-no-noise}}{theorem 3}}
\label{proof_of_theorem-Gaussian-Mixture-no-noise}

\begin{proof}[Proof of \cref{theorem-Gaussian-Mixture-no-noise}]

We divide the proof into two steps. The first step is the convergence of $\{ \Sigma_{n,k}, v_{n,k} \}$, and the second step is the convergence of $\{ w_{n,k} \}$.

\textbf{Step 1. Convergence phase for $\{ \Sigma_{n,k}, v_{n,k} \}$.}
We claim that, under the well-separation assumption, the evolution of $\Delta t_n$ and $\Sigma_{n,k}$, according to \cref{eq:adaptive-dt-gaussian-mixture} and \eqref{eq:Gaussian-mixture-matrix}b, can be rewritten as the Gaussian case in \cref{theorem-Sigma_n-no-noise} after integrating the covariance matrices into a block-diagonal system. To be precise, denote
\[
S_n = {\rm Diag}(\Sigma_{n,1},\Sigma_{n,2},\cdots,\Sigma_{n,K}),
\]
then
\[
S_{n+1} = {\rm Diag}(h_n(\Sigma_{n,1}), h_n(\Sigma_{n,2}),\cdots,h_n(\Sigma_{n,K})) = h_n(S_n),
\]
where $h_n(x):= xe^{\Delta t_n(1-x)}$, then
\[
\Delta t_n
=
\min\Bigl\{\Delta t_{\max}, \frac{\beta}{\max_k\{\|\Sigma_{n,k}-I\|_2 \}}\Bigr\}
=
\min\Bigl\{\Delta t_{\max}, \frac{\beta}{\|S_n-I\|_2}\Bigr\}.
\]

Since $\{S_n\}$ satisfies exactly the same recursion and adaptive time-stepping rule as in the Gaussian case, $\{\Sigma_n\}$ convergence part of \cref{theorem-Sigma_n-no-noise} applies directly to the block-diagonal system. 
Moreover, akin to $\{ v_n\}$ convergence part of \cref{theorem-Sigma_n-no-noise}, we have 
$\lVert v_{n,k} \rVert_2$ is non-increasing and converges to 0 exponentially fast.
Consequently, there exists
\[
N_{\Sigma,v}(\epsilon)
=
\mathcal O\left(
|\log \underline\lambda_0|
+
|\log \overline\lambda_0|
+
\max\{\log V_0,0\}
+
\log \frac{1}{\epsilon}
\right),
\]
with $\underline\lambda_0=\min_{1\le k\le K}\lambda_{\min}(\Sigma_{0,k})$,
$\overline\lambda_0=\max_{1\le k\le K}\lambda_{\max}(\Sigma_{0,k})$, and $
V_0=\max_{1\le k\le K}\|v_{0,k}\|_2$, 
such that for all $n\ge N_{\Sigma,v}(\epsilon)$,
\[
\max_{1\le k\le K}\|\Sigma_{n,k}-I\|_2\le \epsilon,
\quad
\max_{1\le k\le K}\|v_{n,k}\|_2\le \epsilon.
\]

For $1 \leq k \leq K$, define
\begin{equation*}
a_{n,k}
=
\frac12\Bigl(
\mathrm{tr}(\Sigma_{n,k}-I)+v_{n,k}^Tv_{n,k}-\log |\Sigma_{n,k}|
\Bigr),
\quad
r_{n,k}=\frac{w_{n,k}}{w_{\star,k}}.
\end{equation*}
Using
\[
\widehat w_{n+1,k}
=
w_{n,k}\Bigl(\frac{w_{\star,k}}{w_{n,k}}\Bigr)^{\Delta t_n} e^{-\Delta t_n a_{n,k}}
=
w_{\star,k} r_{n,k}^{1-\Delta t_n} e^{-\Delta t_n a_{n,k}},
\]
we have for any $1\le k,j\le K$,
\begin{equation}
\label{eq:weight-log-ratio-recursion}
\log \frac{r_{n+1,k}}{r_{n+1,j}}
=
(1-\Delta t_n)\log \frac{r_{n,k}}{r_{n,j}}
-
\Delta t_n (a_{n,k}-a_{n,j}),
\end{equation}
where the normalization cancels in pairwise ratios.

Define
\[
m_\star=\min\{\underline\lambda_0,e^{-\beta}\},
\quad
M_\star=\max\{1,\overline\lambda_0\},
\quad
\Phi(x) = x - 1 - \log x.
\]
From the proof of \cref{theorem-Sigma_n-no-noise}, all eigenvalues of all $\Sigma_{n,k}$ remain in the interval
$[m_\star,M_\star]$, and $\|v_{n,k}\|_2\le V_0$ for all $n$ and all $k$. Hence
\begin{equation}
\label{eq:a_n_k}
0\le a_{n,k}
=
\frac12\left(
\sum_{i=1}^{N_\theta}\Phi(\lambda_{n,k}^{i})+\|v_{n,k}\|_2^2
\right)
\le
A_\star:=\frac12\Bigl(
N_\theta \max\{\Phi(m_\star),\Phi(M_\star)\}+V_0^2
\Bigr),
\end{equation}
where $\{\lambda_{n,k}^{i}\}_{i=1}^{N_\theta}$ are eigenvalues of $\Sigma_{n,k}$. 
Now let
\begin{equation}
Q_0=\max_{1\le k,j\le K}\left|\log \frac{r_{0,k}}{r_{0,j}}\right| \leq 2\max_{1\le k \le K}\left|\log  r_{0,k}\right|.
\end{equation}
By \eqref{eq:weight-log-ratio-recursion},
\[
\left|\log \frac{r_{n+1,k}}{r_{n+1,j}}\right|
\le
(1-\Delta t_n)\left|\log \frac{r_{n,k}}{r_{n,j}}\right|+\Delta t_n |a_{n,k}-a_{n,j}|
\le
(1-\Delta t_n)\left|\log \frac{r_{n,k}}{r_{n,j}}\right|+\Delta t_n A_\star.
\]
Therefore, by induction, for any $n\ge 0$,
\begin{equation}
\label{eq:Qn-uniform-bound}
\left|\log \frac{r_{n,k}}{r_{n,j}}\right|\le Q_\star:=\max\{Q_0,A_\star\}.
\end{equation}
Furthermore, $Q_\star$ satisfies
\begin{equation}
\label{eq:logQ_star}
\begin{split}
    \log Q_\star
    &\leq \max\{\log Q_0, \log A_\star, 0\} \\
    &\leq \max\left\{\log N_\theta + \max\{\log \Phi(m_\star), \log \Phi(M_\star)\} , \log V_0^2 ,  \log\left( 2\max_{1 \le k \le K} \left| \log r_{0,k} \right|\right), 0 \right\}\\
    &\leq \max\left\{\log N_\theta + \max\{\beta, |\log \underline\lambda_0|, |\log \overline\lambda_0|\} , \log V_0^2 ,  \log \left( 2\max_{1 \le k \le K} \left| \log r_{0,k}\right|\right), 0 \right\},
\end{split}
\end{equation}
where in the last inequality we used
\begin{align*}
    &\log \Phi(m_\star) 
    \le \max \left\{ \beta, 1_{\{ \underline\lambda_0 < e^{-\beta} \}}\log |\log {\underline\lambda_0}| \right\} 
    \le \max \left\{ \beta, |\log {\underline\lambda_0}| \right\},
    \\
    &\log \Phi(M_\star) 
    \le \max \{ 0, 1_{\{ \overline\lambda_0 > 1 \}} \log \overline\lambda_0 \}
    \le  |\log \overline\lambda_0|.
\end{align*}
This indicates that the weights may expand during this phase, but only within a fixed finite multiplicative range.

\textbf{Step 2. Convergence phase of the weights $\{w_{n,k}\}$.}
Applying step 1 of the proof with tolerance $\frac{\epsilon}{N_\theta + 1}$, 
there exists
\begin{equation}
\label{eq:N_Sigma_v}
   N_{\Sigma,v}
=
\mathcal{O}\Bigl(
|\log \underline\lambda_0|
+
|\log \overline\lambda_0|
+
\max\{\log V_0,0\}
+
\log \frac{N_\theta + 1}{\epsilon}\Bigr) 
\end{equation}
such that for all $n\ge N_{\Sigma,v}$,
\[
\max_{1\le k\le K}\|\Sigma_{n,k}-I\|_2\le \frac{\epsilon}{N_\theta + 1},
\quad
\max_{1\le k\le K}\|v_{n,k}\|_2\le \frac{\epsilon}{N_\theta + 1}.
\]
If $\epsilon<\beta/\Delta t_{\max}$, then the adaptive time-stepping yields $\Delta t_n=\Delta t_{\max}$
for all $n\ge N_{\Sigma,v}$.
Moreover, for $\epsilon < \frac{1}{2}$, we have $\Phi(x) \le (x-1)^2$ for all $|x-1| \le \epsilon$. Thus for all $n > N_{\Sigma,v}$, using \cref{eq:a_n_k} leads to 
\[
0 \le a_{n,k} \le \frac{1}{2} \left( N_\theta +1 \right) \left(\frac{\epsilon}{N_\theta + 1}\right)^2 \le \frac{1}{2} \epsilon.
\]
By \cref{eq:weight-log-ratio-recursion}, we have
\[
\left|\log \frac{r_{n+1,k}}{r_{n+1,j}}\right|
\le
(1-\Delta t_{\max})\left|\log \frac{r_{n,k}}{r_{n,j}}\right|+\Delta t_{\max} |a_{n,k}-a_{n,j}|
\le
(1-\Delta t_{\max})\left|\log \frac{r_{n,k}}{r_{n,j}}\right|+\frac{1}{2} \Delta t_{\max} \epsilon,
\]
which leads to 
\[
\left|\log \frac{r_{n+1,k}}{r_{n+1,j}}\right| - \frac{1}{2} \epsilon \le (1-\Delta t_{\max})\left(\left|\log \frac{r_{n,k}}{r_{n,j}}\right| - \frac{1}{2} \epsilon\right).
\]
For any $n > N_{\Sigma,v}$, iterating this recursion gives
\[
\left|\log \frac{r_{n,k}}{r_{n,j}}\right| - \frac{1}{2} \epsilon \le (1-\Delta t_{\max})^{n-N_{\Sigma,v}} \left( \left|\log \frac{r_{N_{\Sigma,v},k}}{r_{N_{\Sigma,v},j}}\right| - \frac{1}{2} \epsilon \right)
\le 
(1-\Delta t_{\max})^{n-N_{\Sigma,v}} \max\{Q_\star,\frac{1}{2}\epsilon\},
\]
where the last inequality used \eqref{eq:Qn-uniform-bound}.
Therefore, combing \cref{eq:logQ_star,eq:logQ_star,eq:N_Sigma_v} gives that after
\begin{equation}
\begin{split}
    N_{\Sigma,v} &+ \frac{\log \frac{2}{\epsilon} + \max\{\log Q_\star,0\}}{\log \frac{1}{1-\Delta t_{\max}}} 
    \\&= \mathcal{O}\Bigl(|\log \underline\lambda_0| + |\log \overline\lambda_0| + \max\{\log V_0, \log \max_{1 \le k \le K} \left| \log r_{0,k} \right| , 0\} + \log \frac{1}{\epsilon} + \log N_\theta \Bigr)
\end{split}
\end{equation}
iterations, we have 
$\left|\log \frac{r_{n,k}}{r_{n,j}}\right| \le \epsilon$ for all $1 \le k \le K$, where the hidden constant in the $\mathcal{O}(\cdot)$ depends only on $\Delta t_{\max}$ and $\beta$.
Since
$
\sum_{k=1}^K w_{\star,k} r_{n,k}
=
\sum_{k=1}^K w_{n,k}
=
1,
$
there exist indices $k_-(n)$ and $k_+(n)$ such that
$
r_{n,k_-(n)}\le 1\le r_{n,k_+(n)}.
$
Then for any $k$,
\[
\log r_{n,k} \le \log r_{n,k} - \log r_{n,k_-(n)}  \le \epsilon, \quad \log r_{n,k} \ge \log r_{n,k} - \log r_{n,k_+(n)}  \ge -\epsilon.
\]
Therefore, we have
$| \log w_{n,k} - \log w_{\star,k} | \le \epsilon$ for all $1 \le k \le K$.
\end{proof}

\subsection{Proof of \texorpdfstring{\cref{thm:GM-noise}}{theorem 4}}
\label{Proof_of_thm:GM-noise}
\begin{proof}[Proof of \cref{thm:GM-noise}]

\textbf{Step 1. Probability estimate for the adaptive time-step selection and convergence of $\Sigma_{n,k}, v_{n,k}$.}
Define the block-diagonal matrices and vectors
\[
S_n
=
\mathrm{Diag}(\Sigma_{n,1},\ldots,\Sigma_{n,K}),
\quad
V_n
=
(v_{n,1}^T,\ldots,v_{n,K}^T)^T,
\quad
U_n
=
(u_{n,1}^T,\ldots,u_{n,K}^T)^T,
\]
\[
\mathcal L_n
=
\mathrm{Diag}(\LL_{n,1},\ldots,\LL_{n,K}),
\quad
\Omega_n
=
\mathrm{Diag}(\Omega_{n,1},\ldots,\Omega_{n,K}).
\]
Then
$
\mathcal L_n\mathcal L_n^T
=
S_n .
$
The covariance and mean updates in \cref{eq:Gaussian-mixture-stoch} can be written as
\[
S_{n+1}
=
\mathcal L_n
\exp\left\{
\Delta t_n
\left(
-\mathcal L_n^T\mathcal L_n+I+\Omega_n
\right)
\right\}
\mathcal L_n^T,
\]
and
\[
V_{n+1}
=
V_n
-
\Delta t_n
\mathcal L_n
\left(
\mathcal L_n^T V_n+U_n
\right).
\]
Since all matrices are block diagonal, we have
\[
\left\|
-\mathcal L_n^T\mathcal L_n+I+\Omega_n
\right\|_2
=
\max_{1\le k\le K}
\left\|
-\LL_{n,k}^T\LL_{n,k}+I+\Omega_{n,k}
\right\|_2 .
\]
Therefore, the time step is exactly the stochastic adaptive time step for the block variable $S_n$:
\[
\Delta t_n
=
\min\left\{
\eta(t_n)\Delta t_{\max},
\frac{\beta}
{\left\|
-\mathcal L_n^T\mathcal L_n+I+\Omega_n
\right\|_2}
\right\}.
\]
Hence, the pair $(S_n,V_n)$ satisfies the same stochastic recursion as in the single Gaussian case in dimension $KN_\theta$.

Let
\[
A_n
=
\left\{
\Delta t_n
=
\frac{\beta}
{\left\|
-\mathcal L_n^T\mathcal L_n+I+\Omega_n
\right\|_2}
\right\}.
\]
As in the proof of \cref{thm:Sigma_n-noise}, using the conditional tail bound of the noise $\Omega_n$ and the boundedness assumption, we obtain
$
\mathbb P(A_n| \F_n)
\le
C\eta(t_n)^2
$
for some deterministic constant $C>0$, where $\F_n$ is a filtration defined by $\F_n=\sigma(\{\Sigma_{m,k},v_{m,k},w_{m,k}\}_{1 \le m \le n,1 \le k \le K})$. Since
$
\sum_{n=1}^\infty \eta(t_n)^2<+\infty,
$
it follows that
$
\sum_{n=1}^\infty \mathbb P(A_n)<+\infty .
$
By Borel-Cantelli lemma,
$
\mathbb P(A_n,\mathrm{i.o.})=0 .
$

By the block-diagonal formulation above, \cref{thm:Sigma_n-noise} applies directly to $(S_n,V_n)$. Hence
$
S_n\xrightarrow{\mathrm{a.s.}} I_{KN_\theta}
$ and $
V_n\xrightarrow{\mathrm{a.s.}}0 .
$
Equivalently, for every $1\le k\le K$,
$
\Sigma_{n,k}\xrightarrow{\mathrm{a.s.}} I
$ and $
v_{n,k}\xrightarrow{\mathrm{a.s.}}0.
$

\textbf{Step 2. $\log r_{n,k}$ converges to $0$ almost surely.}
For $1 \leq k \leq K$, define
\[
a_{n,k}
=
\frac12\Bigl(
\mathrm{tr}(\Sigma_{n,k}-I)+v_{n,k}^Tv_{n,k}-\log |\Sigma_{n,k}|
\Bigr),
\]
Using
\[
\widehat w_{n+1,k}
=
w_{n,k}\Bigl(\frac{w_{\star,k}}{w_{n,k}}\Bigr)^{\Delta t_n} e^{\Delta t_n (-a_{n,k} + s_{n,k})}
=
w_{\star,k} r_{n,k}^{1-\Delta t_n} e^{\Delta t_n (-a_{n,k} + s_{n,k})},
\]
for any $1\le k,j\le K$, we have
\begin{equation}
\label{eq:random-weight-log-ratio-recursion}
\log \frac{r_{n+1,k}}{r_{n+1,j}}
=
(1-\Delta t_n)\log \frac{r_{n,k}}{r_{n,j}}
-
\Delta t_n (a_{n,k}-a_{n,j})
+ 
\Delta t_n (s_{n,k}-s_{n,j}).
\end{equation}
Taking the expectation leads to 
\begin{equation}
\label{eq:w_n+1}
\begin{split}
\E\left[\left|\log \frac{r_{n+1,k}}{r_{n+1,j}}\right|^2 | \F_n\right]
\le& \E\left[ (1-\Delta t_n )^2 | \F_n\right]\left|\log \frac{r_{n,k}}{r_{n,j}}\right|^2  +   
\left(\Delta t_{\rm max}\eta(t_n)\right)^2(a_{n,k}-a_{n,j})^2 \\
&+ \left(\Delta t_{\rm max}\eta(t_n)\right)^2 \E\left[ (s_{n,k}-s_{n,j})^2| \F_n\right]
+ Z_{n}^{(1)} + Z_{n}^{(2)}\\
\leq& \left|\log \frac{r_{n,k}}{r_{n,j}}\right|^2 - \E\left[ 2\Delta t_n  - \Delta t_n^2 | \F_n\right]\left|\log \frac{r_{n,k}}{r_{n,j}}\right|^2 \\
&+ 
\left(\Delta t_{\rm max}\eta(t_n)\right)^2(a_{n,k}-a_{n,j})^2 \\
&+ \left(\Delta t_{\rm max}\eta(t_n)\right)^2 \E\left[ (s_{n,k}-s_{n,j})^2| \F_n\right]
+ |Z_{n}^{(1)}| + |Z_{n}^{(2)}|
,
\end{split}
\end{equation}
where 
\begin{align*}
Z_{n}^{(1)} &= \E\left[2\Delta t_n  \bigl( (1-\Delta t_n) \log \frac{r_{n,k}}{r_{n,j}} - \Delta t_n (a_{n,k}-a_{n,j}) \bigr) (s_{n,k}-s_{n,j})  | \F_n\right],\\
Z_{n}^{(2)} &= -\E\left[2(1-\Delta t_n)\Delta t_n \log \frac{r_{n,k}}{r_{n,j}} (a_{n,k}-a_{n,j}) | \F_n\right]
.
\end{align*}

By \cref{assum:GM-stoch_convergence}, there exist constants $A_\star, C_6$ such that
\begin{equation}
\label{assum:GM-stoch_convergence_2}
a_{n,k} \le A_\star, \quad \E\left[ (s_{n,k}-s_{n,j})^2 | \F_n \right] \le C_6,
\end{equation}
where the latter inequality is similar to the derivation in \cref{thm:Sigma_n-noise} and follows from Monte Carlo noise assumption of $s_{n,k}$ and the boundedness of $w_{n,k}$, and moment estimation is used.
Then we have
\begin{align*}
\left(\Delta t_{\rm max}\eta(t_n)\right)^2(a_{n,k}-a_{n,j})^2 \le \left(\Delta t_{\rm max}\eta(t_n)\right)^2 A_{\star}^2, \\
\left(\Delta t_{\rm max}\eta(t_n)\right)^2 \E\left[ (s_{n,k}-s_{n,j})^2| \F_n\right] \le \left(\Delta t_{\rm max}\eta(t_n)\right)^2 C_6, 
\end{align*}
which are both summable.

Next, we estimate $Z_{n}^{(1)}$ and $Z_{n}^{(2)}$. 
By assumption, $\left|\log \frac{r_{n,k}}{r_{n,j}}\right| \le 2 C_0$, then we have
\begin{align*}
Z_{n}^{(1)} =& 2\log \frac{r_{n,k}}{r_{n,j}} \E[\left(\Delta t_n (1-\Delta t_n) - \Delta t_{\rm max}\eta(t_n) (1 - \Delta t_{\rm max}\eta(t_n)) \right)  (s_{n,k}-s_{n,j})  | \F_n]\\
&- 2(a_{n,k}-a_{n,j})\E[(\Delta t_n^2 - (\Delta t_{\rm max}\eta(t_n))^2)  (s_{n,k}-s_{n,j})  | \F_n]\\
=& 2\log \frac{r_{n,k}}{r_{n,j}} \E[\left(\Delta t_n (1-\Delta t_n) - \Delta t_{\rm max}\eta(t_n) (1 - \Delta t_{\rm max}\eta(t_n)) \right) 1_{A_n} (s_{n,k}-s_{n,j})  | \F_n]\\
&- 2(a_{n,k}-a_{n,j})\E[(\Delta t_n^2 - (\Delta t_{\rm max}\eta(t_n))^2) 1_{A_n} (s_{n,k}-s_{n,j})  | \F_n].
\end{align*}
Using \cref{assum:GM-stoch_convergence_2} and Cauchy's inequality leads to 
\begin{align*}
|Z_{n}^{(1)}| \le& 2 (2C_0 + A_{\star})\Delta t_{\rm max}\eta(t_n) \sqrt{\E[(s_{n,k}-s_{n,j})^2|\F_n]} \sqrt{\mathbb{P}(A_n|\F_n)}\\
\le & 2 \sqrt{C_6} (2C_0 + A_{\star})\Delta t_{\rm max}\eta(t_n) \sqrt{\mathbb{P}(A_n|\F_n)}.
\end{align*}
Since $\mathbb{P}(A_n|\F_n) \lesssim \eta(t_n)^2$, we have
$
\sum_{n=1}^\infty |Z_n^{(1)}| < \infty.
$

For $Z_{n}^{(2)}$, we have $
\sum_{n=1}^\infty |Z_n^{(2)}| \le \sum_{n=1}^\infty 4 C_0 \E[\Delta t_n (a_{n,k} + a_{n,j})|\F_n].
$
Using step 1 and \cref{eq:Rob-Sigma,eq:Rob-v} in proof of \cref{thm:Sigma_n-noise}, we have $ \sum_{n=1}^\infty \E[\Delta t_n \| \Sigma_{n,k} - I \|_F^2|\F_n] < \infty $ and $ \sum_{n=1}^\infty \E [ \Delta t_n v_{n,k}^T \Sigma_{n,k} v_{n,k}|\F_n ] < \infty$ almost surely. 
Thus, we can obtain 
\begin{align*}
    \sum_{n=1}^\infty  \E[\Delta t_n a_{n,k} |\F_n] \leq \sum_{n=1}^\infty 2 \E \left[ \Delta t_n \bigl(\frac{1}{2}\| \Sigma_{n,k} - I \|_F^2 + v_{n,k}^T  v_{n,k}\bigr)|\F_n \right] < \infty
\end{align*}
Here we used that $\Sigma_{n,k}$ converges to $I$ almost surely and $x-1-\log x \le (x-1)^2$ for all $|x-1| \le \frac{1}{2}$, 
This means that almost surely we have
$
\sum_{n=1}^\infty |Z_n^{(2)}| \le \sum_{n=1}^\infty 4 C_0 \E[\Delta t_n (a_{n,k} + a_{n,j})|\F_n]) < \infty.
$


Applying the Robbins-Siegmund theorem \cite{robbins1971a} to \cref{eq:w_n+1}
we have that $\left\{ \left|\log \frac{r_{n,k}}{r_{n,j}}\right|^2  \right\}_{n \ge 1}$ converges and $\sum_{n=1}^\infty \E[2 \Delta t_n - \Delta t_n^2 | \F_n] \left|\log \frac{r_{n,k}}{r_{n,j}}\right|^2 < \infty$ almost surely. 
Since $$\sum_{n=1}^\infty \E[2 \Delta t_n - \Delta t_n^2 | \F_n] \ge \sum_{n=1}^\infty \E[(2 \Delta t_n - \Delta t_n^2) 1_{A_n^C} | \F_n],$$ 
we have almost surely that \begin{align*}
    &\sum_{n=1}^\infty \E\left[(2 \Delta t_n - \Delta t_n^2) 1_{A_n^C} \left|\log \frac{r_{n,k}}{r_{n,j}}\right|^2 | \F_n\right] \\= &\sum_{n=1}^\infty \E\left[(2 \Delta t_{\rm max}\eta(t_n) - (\Delta t_{\rm max}\eta(t_n))^2) \left|\log \frac{r_{n,k}}{r_{n,j}}\right|^2 (1 - \mathbb{P}(A_n| \F_n))\right] < \infty.
\end{align*}
Since $\sum_{n=1}^\infty \eta(t_n)^2 < \infty$, $\mathbb{P}(A_n|\F_n) \lesssim \eta(t_n)^2$ and $\left|\log \frac{r_{n,k}}{r_{n,j}}\right|^2$ converges, we have \\ $\sum_{n=1}^\infty \E\left[(2 \Delta t_{\rm max}\eta(t_n) \left|\log \frac{r_{n,k}}{r_{n,j}}\right|^2 | \F_n\right] < \infty$, which further implies $\left|\log \frac{r_{n,k}}{r_{n,j}}\right|^2$ converges to $0$ almost surely. Note that $\max_{1 \le k \le K} \left| \log r_{n,k} \right| \le \max_{1 \le k,j  \le K} \left| \log \frac{r_{n,k}}{r_{n,j}} \right|$, we have $\log r_{n,k}$ converges to $0$ almost surely.

\end{proof}

\section{Proof of the Manifold Optimization}
\label{Proof:manifold}
\begin{lemma}
\label{lemma:SPD-manifold-square-root}
    For every square-root factorization $X = LL^T$, we have
    \begin{align*}
        {\rm d}(X,Y) &= \lVert \log(L^{-1}YL^{-T}) \rVert_F,\\
        {\rm exp}_X(\sigma) &= L\exp(L^{-1}\sigma L^{-T}) L^{T},\\
        {\rm exp}_{X}^{-1}(Y) &= L\log(L^{-1}Y L^{-T}) L^{T}.
    \end{align*}
    \label{lemma:square-root}
\end{lemma}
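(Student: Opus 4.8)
The plan is to reduce every expression to the canonical SPD square root $X^{\frac{1}{2}}$ appearing in the definitions of \cref{ssec:manifold_optimization}, by exploiting the fact that any two invertible square roots of $X$ differ by an orthogonal factor on the right. \textbf{Step 1 (square roots differ by an orthogonal matrix).} Given $X = LL^T = X^{\frac{1}{2}} X^{\frac{1}{2}^T}$ with $L$ and $X^{\frac{1}{2}}$ invertible, set $Q := X^{-\frac{1}{2}} L$. Then $QQ^T = X^{-\frac{1}{2}} L L^T X^{-\frac{1}{2}^T} = X^{-\frac{1}{2}} X X^{-\frac{1}{2}^T} = I$, so $Q$ is orthogonal and $L = X^{\frac{1}{2}} Q$. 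This reduces the independence claim to verifying that each formula is invariant under the replacement $X^{\frac{1}{2}} \mapsto X^{\frac{1}{2}} Q$.

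\textbf{Step 2 (conjugation invariance of matrix functions).} I would record that for any orthogonal $Q$ and any analytic matrix function $g$ one has $g(Q^T M Q) = Q^T g(M) Q$ whenever $M$ lies in the domain of $g$, which follows termwise from the identity $(Q^T M Q)^k = Q^T M^k Q$ in the power series (equivalently from spectral calculus). For the exponential this applies to the symmetric matrix $M = X^{-\frac{1}{2}} \sigma X^{-\frac{1}{2}^T}$ since $\exp$ is entire. For the logarithm I apply it with $M = X^{-\frac{1}{2}} Y X^{-\frac{1}{2}^T}$, which is symmetric positive definite as a congruence of the SPD matrix $Y$, so the principal real logarithm is well defined and $Q^T M Q$ remains SPD.

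\textbf{Step 3 (substitution).} Writing $L = X^{\frac{1}{2}} Q$ gives $L^{-1} Y L^{-T} = Q^T \bigl(X^{-\frac{1}{2}} Y X^{-\frac{1}{2}^T}\bigr) Q$. For the distance, Step 2 yields $\log(L^{-1}YL^{-T}) = Q^T \log(X^{-\frac{1}{2}} Y X^{-\frac{1}{2}^T}) Q$, and orthogonal invariance of the Frobenius norm gives $\lVert \log(L^{-1}YL^{-T})\rVert_F = \lVert \log(X^{-\frac{1}{2}} Y X^{-\frac{1}{2}^T})\rVert_F = {\rm d}(X,Y)$. For the exponential map, $L\exp(L^{-1}\sigma L^{-T})L^T = X^{\frac{1}{2}} Q \exp\bigl(Q^T (X^{-\frac{1}{2}}\sigma X^{-\frac{1}{2}^T}) Q\bigr) Q^T X^{\frac{1}{2}^T}$; applying Step 2 to $\exp$ and cancelling $QQ^T = I$ recovers $X^{\frac{1}{2}}\exp(X^{-\frac{1}{2}}\sigma X^{-\frac{1}{2}^T}) X^{\frac{1}{2}^T} = {\rm exp}_X(\sigma)$. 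The inverse exponential map follows verbatim with $\log$ replacing $\exp$.

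\textbf{Main obstacle.} The only genuinely delicate point is making Step 2 rigorous for the logarithm rather than for a globally convergent series: I would confirm that $X^{-\frac{1}{2}} Y X^{-\frac{1}{2}^T}$ is SPD so that the principal matrix logarithm exists and the conjugation identity is valid on the relevant domain, while carefully tracking the transpose placements, since $L$ (unlike the symmetric $X^{\frac{1}{2}}$) need not be symmetric and the cancellations rely on $Q^T Q = QQ^T = I$ rather than on $L$ itself being symmetric.
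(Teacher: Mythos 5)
Your proposal is correct and follows essentially the same route as the paper's proof: both hinge on the observation that the given square root differs from the canonical one by an orthogonal factor (you write $L = X^{\frac{1}{2}}Q$ with $Q = X^{-\frac{1}{2}}L$ orthogonal, the paper verifies that $X^{\frac{1}{2}}L^{-T} = X^{-\frac{1}{2}}L$ is orthogonal), then pass the orthogonal conjugation through $\exp$ and $\log$ and use orthogonal invariance of the Frobenius norm. Your Step 2 merely makes explicit the conjugation identity for analytic matrix functions (and the SPD domain of the principal logarithm) that the paper's computation uses implicitly, so the two arguments coincide in substance.
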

\begin{proof}
    As $(X^{\frac{1}{2}}L^{-T})(X^{\frac{1}{2}}L^{-T})^T = X^{\frac{1}{2}}L^{-T}L^{-1}X^{\frac{1}{2}} =  I$, $X^{\frac{1}{2}}L^{-T}$ is orthogonal. 
    Using this property and the identity $X^{\frac{1}{2}}L^{-T} = (L^{-1}X^{\frac{1}{2}})^{-1} = X^{-\frac{1}{2}}L$, we have
    \[
    \begin{aligned}
    \lVert \log(X^{-\frac{1}{2}}YX^{-\frac{1}{2}}) \rVert_F &= \lVert (X^{\frac{1}{2}}L^{-T})^T\log(X^{-\frac{1}{2}}YX^{-\frac{1}{2}}) (X^{\frac{1}{2}}L^{-T}) \rVert_F \\
    &= \left\lVert \log \left( (X^{\frac{1}{2}}L^{-T})^TX^{-\frac{1}{2}}YX^{-\frac{1}{2}} (X^{\frac{1}{2}}L^{-T})\right) \right\rVert_F\\
    &= \left\lVert \log ( L^{-1}YL^{-T} ) \right\rVert_F,\\
    X^{\frac{1}{2}}\exp(X^{-\frac{1}{2}}\sigma X^{-\frac{1}{2}}) X^{\frac{1}{2}} &= X^{\frac{1}{2}}\exp(X^{-\frac{1}{2}}LL^{-1}\sigma L^{-T} L^T X^{-\frac{1}{2}}) X^{\frac{1}{2}}\\
    &= X^{\frac{1}{2}}X^{-\frac{1}{2}}L\exp(L^{-1}\sigma L^{-T} ) L^T X^{-\frac{1}{2}} X^{\frac{1}{2}}\\
    &= L\exp(L^{-1}\sigma L^{-T}) L^{T},\\
    X^{\frac{1}{2}}\log(X^{-\frac{1}{2}}Y X^{-\frac{1}{2}}) X^{\frac{1}{2}} &= X^{\frac{1}{2}}\log(X^{-\frac{1}{2}}LL^{-1}Y L^{-T} L^T X^{-\frac{1}{2}}) X^{\frac{1}{2}}\\
    &= X^{\frac{1}{2}}X^{-\frac{1}{2}}L\log(L^{-1}Y L^{-T} ) L^T X^{-\frac{1}{2}} X^{\frac{1}{2}}\\
    &= L\log(L^{-1}Y L^{-T}) L^{T}.\\
    \end{aligned}
    \]
\end{proof}

\begin{proof}[Proof of \cref{theorem-manifold}]

We first prove the mirror descent \cref{eq:mirror_descent} is consistent with our update \cref{eq:gm-ngf-update_C,eq:gm-ngf-update_mw}. By the construction of our algorithm, we only need to prove
\begin{align}
    \label{eq:covariance_update_g}
    C_{n+1,k} &= \exp_{C_{n,k}}\Big( -\frac{2\Delta t_n}{w_{n,k}} g_{C_{n,k}} \Big),\\
    \label{eq:mean_update_g}
    m_{n+1,k} &= m_{n,k} - \Delta t_n\frac{C_{n,k}}{w_{n,k}} g_{m_{n,k}},\\
    \label{eq:weight_update_g}
    w_{n+1,k} &= \frac{w_{n,k}\exp(-\Delta t_n g_{w_{n,k}})}{\sum_{j=1}^K w_{j,n}\exp(-\Delta t_n g_{w_{j,n}})},
\end{align}
where $g_{C_{n,k}} = {\rm grad}_{C_{k}}{\rm KL}[\rho_a^{\rm{GM}} \Vert \pi]|_{C_{n,k}}$, $g_{m_{n,k}} = \nabla_{m_k}{\rm KL}[\rho_a^{\rm{GM}} \Vert \pi]|_{m_{n,k}}$, $g_{w_{n,k}} = \nabla_{w_k}{\rm KL}[\rho_a^{\rm{GM}} \Vert \pi]|_{w_{n,k}}$.

Note that the mirror descent \cref{eq:mirror_descent} can decompose as a sum over \(k\) components involving the partial gradients with respect to \(m_k,C_k,w_k\), and each component can be divided into covariance,   mean, and weight terms, which can be analyzed separately.

For the covariance term, let $f(C_{k}) = {\rm tr}(S_{n,k}\log S_{n,k} - S_{n,k})$, $S_{n,k} = C_{n,k}^{-\frac{1}{2}} C_k  C_{n,k}^{-\frac{1}{2}}$. Then $\nabla f(C_k) = C_{n,k}^{-1} \exp^{-1}_{C_{n,k}}(C_k) C_{n,k}^{-1}$ and $\nabla f(C_k)|_{C_{n,k}} = 0$. Therefore, we have 
\[
C_{n+1,k} = \argmin_{C_k}\Bigl\{ \langle C_{n,k}^{-1} g_{C_{n,k}} C_{n,k}^{-1}, C_k-C_{n,k} \rangle + \frac{w_{n,k}}{2 \Delta t_n} f(C_{k}) \Bigr\},
\]
 By the first-order optimality condition,
\[
C_{n,k}^{-1} g_{C_{n,k}} C_{n,k}^{-1} + \frac{w_{n,k}}{2 \Delta t_n} C_{n,k}^{-1} \exp^{-1}_{C_{n,k}}(C_{n+1,k}) C_{n,k}^{-1} = 0,
\]
which means $C_{n+1,k} = \exp_{C_{n,k}}\left( -\frac{2\Delta t_n}{w_{n,k}} g_{C_{n,k}} \right)$.
For the mean term,
\[
m_{n+1,k} = \argmin_{m}\Bigl\{ \langle m - m_{n,k}, g_{m_{n,k}} \rangle + \frac{w_{n,k}}{2 \Delta t_n} ( m - m_{n,k} )^T {{C}_{n,k}^{-1}} ( m - m_{n,k} ) \Bigr\},
\]
this directly yields \cref{eq:mean_update_g}.
For the weight term, define ${\rm KL}(w, w_n) = \sum_{k=1}^K w_k(\log w_k - \log w_{n,k})$, we have
\begin{align*}
w_{n+1} = \argmin_{w}\Bigl\{ \langle w - w_{n}, [g_{w_{n,1}},\cdots,g_{w_{n,K}} ]^T\rangle + \frac{1}{\Delta t_n} {\rm KL}(w, w_n) \Bigr\}  
     \quad {\rm s.t.}\quad w^T\mathbf{1} = 1,
\end{align*}
which directly yields \cref{eq:weight_update_g}.


Next, we prove the affine invariance property of our algorithm at the discrete level.
Consider an arbitrary invertible affine transformation $\psi: \theta \mapsto \tilde{\theta} = T\theta + d$. In the transformed coordinate system, the parameters are
\[
\widetilde{m}_k(t_n) = T m_k(t_n) + d, \quad \widetilde{L}_k(t_n) = T L_k(t_n), \quad \widetilde{C}_k(t_n) = \widetilde{L}_k(t_n) \widetilde{L}_k(t_n)^T = T C_k(t_n) T^T.
\]
The function to be estimated in the new coordinates is defined by
\begin{align*}
\widetilde{f}_k(\theta) &= \log \rho^{\mathrm{GM}}_{\tilde{a}}\big(\widetilde{m}_k + \widetilde{L}_k\theta\big) + \widetilde{\Phi}_R\big(\widetilde{m}_k + \widetilde{L}_k\theta\big) \\
&= \log\Big(|\det T|^{-1}\rho^{\mathrm{GM}}_{a}\big(T^{-1}(\widetilde{m}_k + \widetilde{L}_k\theta - d)\big)\Big) + \Phi_R\big(T^{-1}(\widetilde{m}_k + \widetilde{L}_k\theta - d)\big) \\
&= -\log|\det T| + \log \rho^{\mathrm{GM}}_{a}\big(m_k + L_k\theta\big) + \Phi_R\big(m_k + L_k\theta\big) \\
&= f_k(\theta) - \log|\det T|.
\end{align*}
This relation immediately implies
\begin{align*}
\widetilde{E}_k(t) &= \mathbb{E}_{\mathcal{N}}\Big[\theta \theta^T\big(\widetilde{f}_k(t,\theta) - \mathbb{E}_{\mathcal{N}}[\widetilde{f}_k(t,\theta)]\big)\Big] = \mathbb{E}_{\mathcal{N}}\Big[\theta \theta^T\big(f_k(t,\theta) - \mathbb{E}_{\mathcal{N}}[f_k(t,\theta)]\big)\Big] = E_k(t).
\end{align*}
Consequently, the adaptive step size remains unchanged:
$$
\Delta \widetilde{t}_n = \min\Bigl\{ \Delta t_{\max}\eta(t_n),  \frac{\beta}{\max_k \{\|\widetilde{E}_k(t_n)\|_2\}} \Bigr\} = \Delta t_n.
$$

We now verify the affine invariance of each update. For the  covariance update, we have
\begin{align*}
\widetilde{C}_k(t_n + \Delta t_n) &= \widetilde{L}_k(t_n) e^{-\widetilde{E}_k(t_n)  \Delta {t}_n} \widetilde{L}_k(t_n)^T \\
&= (T L_k(t_n))  e^{-E_k(t_n)  \Delta {t_n}}  (T L_k(t_n))^T \\
&= T  C_k(t_n + \Delta t_n)  T^T.
\end{align*}
For the mean update, we have
\begin{align*}
\widetilde{m}_k(t_n + \Delta t_n) &= \widetilde{m}_k(t_n) - \Delta t_n \widetilde{L}_k(t_n) \mathbb{E}_{\mathcal{N}}\Big[\theta \big(\widetilde{f}_k(t_n,\theta) -\mathbb{E}_{\mathcal{N}}[\widetilde{f}_k(t_n,\theta)]\big)\Big] \\
&= (T m_k(t_n) + d) - \Delta t_n  (T L_k(t_n))  \mathbb{E}_{\mathcal{N}}\Big[\theta \big(f_k(t_n,\theta) - \mathbb{E}_{\mathcal{N}}[f_k(t_n,\theta)]\big)\Big] \\
&= T  m_k(t_n + \Delta t_n) + d.
\end{align*}
For the weight update, we have
\begin{align*}
\log \widehat{ \widetilde{w}}_k(t_n+\Delta t_n) &= \log \widetilde{w}_k(t_n) - \Delta t_n \Big(\mathbb{E}_{\mathcal{N}}[\widetilde{f}_k(t_n,\theta)] - \sum_i \widetilde{w}_i(t_n) \mathbb{E}_{\mathcal{N}}[\widetilde{f}_i(t_n,\theta)]\Big)\\
&= \log w_k(t_n) - \Delta t_n \Big(\mathbb{E}_{\mathcal{N}}[f_k(t_n,\theta)] - \sum_i w_i(t_n) \mathbb{E}_{\mathcal{N}}[f_i(t_n,\theta)]\Big)\\
&= \log \widehat{w}_k(t_n+\Delta t_n),
\end{align*}
thus $\log  \widetilde{w}_k(t_n+\Delta t_n) = \log w_k(t_n+\Delta t_n)$.
In all cases, the transformed parameters after one update step coincide with the affine transformation of the original updated parameters. This establishes the affine invariance of the complete update scheme.
\end{proof}

\section{Additional Numerical Experiments}

\subsection{Exponential convergence in Gaussian mixture case}
\label{sec:exponential}
In this subsection, we show that the exponential convergence phenomenon still occurs in noisy update iterations for the Gaussian mixture model, provided that the number and position of the Gaussian components correspond.
We use the 2-dimensional Case A from \cref{ssec:model-problems}. 
Unlike the parameter selection in \cref{ssec:model-problems}, the number of Gaussian modes selected for the used Gaussian mixture model is exactly the same as that in the target mixture.
The resulting densities and convergence rate are shown in ~\cref{fig:GMBBVI-exponential_conv}. 
The result indicates that even in the case of noisy updates, we can still observe the exponential convergence phenomenon that was proven in the noise-free situation as shown in ~\cref{sssec:Gaussian-mixture-convergence-study}.

\begin{figure}[htbp]
    \centering
    \includegraphics[scale=0.4]{./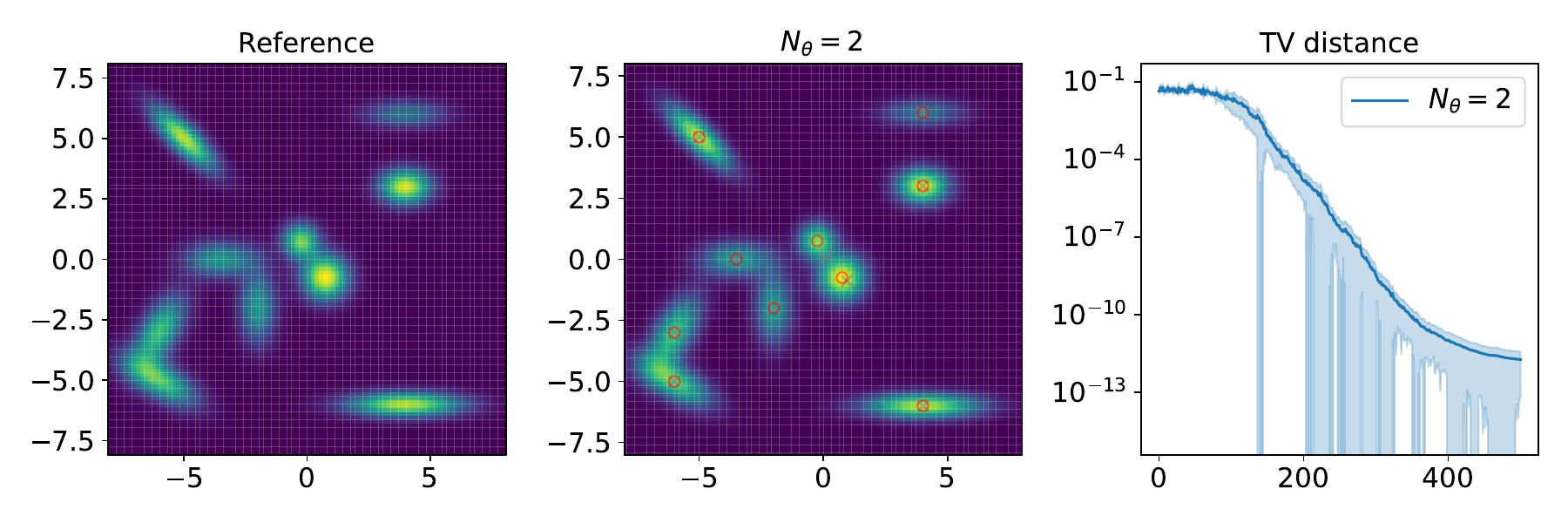}
    \caption{Exponential convergence for Gaussian mixture case. 
    The third panel reports the TV distance between the reference marginal density and the estimated marginal densities across iterations,
    where the solid line represents the mean, and shaded area represents the standard deviation 
    computed from 3 independent trials.}
    \label{fig:GMBBVI-exponential_conv}
\end{figure}

\subsection{Sensitivity study}
\label{sec:sensitivity}
In this subsection, we study the sensitivity of GMBBVI to the initial conditions, the number of modes $K$, the scheduler $\eta$, and the annealing strategy in the 10-dimensional Case A from \cref{ssec:model-problems}. For the initial conditions, we shift the initial mode means by $\pm 2$ in all components prior to annealing. For the number of modes, we consider $K = 10,20,40$. For the scheduler, we evaluate three schedulers: stable cosine decay \cref{eq:stable_cos_scheduler}, stable linear decay (replacing the cosine decay phase with a linear decay phase), and exponential decay. Finally, we study three annealing strategies: no annealing, annealing with $N_\alpha = 500$ and $\alpha=0.5$, and annealing with $N_\alpha = 500$ and $\alpha=0.1$. The resulting densities after 500 iterations are shown in \cref{fig:GMBBVI-sensitivity_study} indicating that GMBBVI is robust with respect to the initial condition, the scheduler, the annealing parameter (with annealing being essential), and the number of modes. In particular, when $K=10$, GMBBVI may miss some of the target modes, but for $K=20, 40$, all target modes are accurately recovered.

\begin{figure}[htbp]
    \centering
    \includegraphics[scale=0.25]{./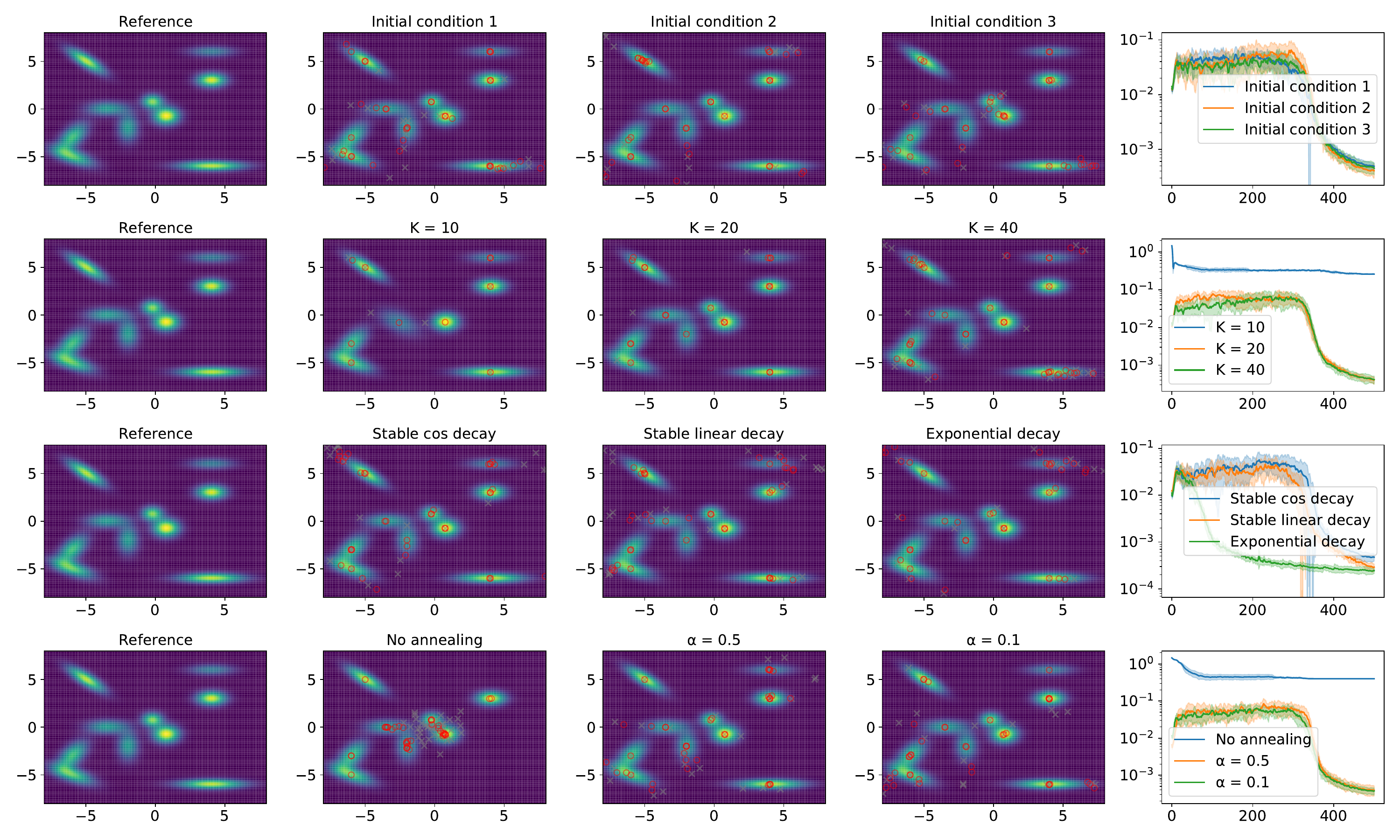}
    \caption{Sensitivity study for different choices of initial condition (first row), number of modes $K$ (second row), scheduler $\eta$ (third row) and annealing strategy (last row) in the 10-dimensional Case A.  
    The fifth panel reports the TV distance between the reference marginal density and the estimated marginal densities across iterations,
    where the solid line represents the mean, and shaded area represents the standard deviation 
    computed from 10 independent trials.}
    \label{fig:GMBBVI-sensitivity_study}
\end{figure}

\subsection{Comparison with other Gaussian mixture variational inference methods}
\label{ssec:comparison-among-VI}
In this subsection, we compare GMBBVI with two representative Gaussian mixture variational inference methods: Gaussian mixture variational inference via Wasserstein gradient flows (WGF-VI)~\cite{lambert2022variational} and Gaussian mixture Kalman inversion (GMKI)~\cite{chen2024efficient}. The comparison is conducted on the 2D Case B,  Case C from \cref{ssec:model-problems}, and Neal's funnel example. For all three methods, no annealing strategy is used and the algorithms are run for $500$ iterations. The results are reported in~\cref{fig:comparison-GMVI-methods}. We observe that WGF-VI is sensitive to the choice of step size. In our experiments,
its step sizes are manually selected as \(1.0\times 10^{-3}\), \(2.0\times 10^{-4}\),
and \(1.0\times 10^{-3}\) for the three examples, respectively. With these choices,
500 iterations are not sufficient for WGF-VI to reach the same accuracy as GMBBVI.
GMKI, on the other hand, relies on local Gaussian approximations and is therefore
less effective for targets with pronounced nonlinear geometry or high-order
curvature. These comparisons illustrate the stability and approximation advantages
of the proposed adaptive exponential scheme.
\begin{figure}
    \centering
    \includegraphics[scale=0.25]{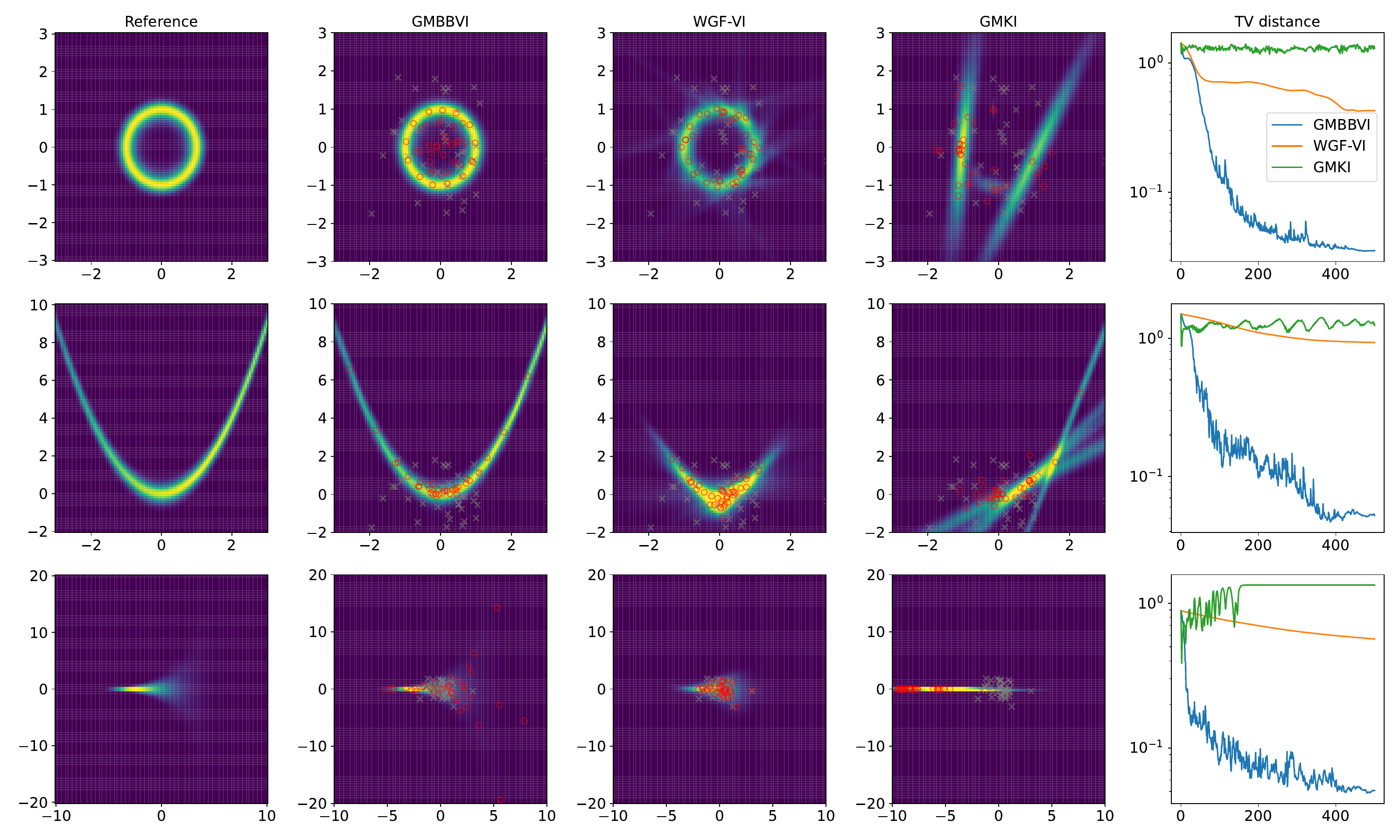}
    \caption{Comparison among GMBBVI, WGF-VI and GMKI. 2-dimensional Cases B, Case C and Neal's funnel example are arranged from the top row to the bottom row. Each panel shows the reference marginal density along with the GMBBVI, WGF-VI and GMKI estimations (from left to right). The projected means of each Gaussian component are marked by red circles, and the projected initial means are marked by grey crosses. The fifth panel displays the total variation distance between the reference marginal density and the estimated marginal densities across the iterations, with shaded area representing the standard deviation computed from 10 independent trials.}
    \label{fig:comparison-GMVI-methods}
\end{figure}

\subsection{Further discussion of adaptive step size}
\label{ssec:discuss-dt}
Since the weights are also updated in exponential form in~\cref{eq:gm-ngf-update_mw}, one may consider imposing
an additional step-size restriction based on the weight dynamics, analogous to the
restriction used for the covariance update. Specifically, the new step size is chosen by
\begin{align}
\label{eq:adaptive-dt-gaussian-weights}
    \Delta t_n' = \min\bigl\{\Delta t_{\max}\eta(t_n), \frac{\beta}{\max\|E_k(t_n)\|_2},\frac{\beta}{\max |\E_{\N}[f_k(t_n,\theta)] - \sum_i w_i \E_{\N}[f_i(t_n,\theta)]|}\bigr\}.
\end{align}
We test GMBBVI using the new step size in~\cref{eq:adaptive-dt-gaussian-weights} on the same examples and with the same initial conditions as in~\cref{ssec:model-problems}. The results are shown in~\cref{fig:dt-weights}, with all
other experimental settings and marker conventions follow \cref{fig:GMBBVI-trials}. The results indicate that this additional restriction slightly slows convergence.
We therefore do not include a weight-based control term in the adaptive step-size
rule used in the main algorithm.
\begin{figure}
    \centering
    \includegraphics[scale=0.25]{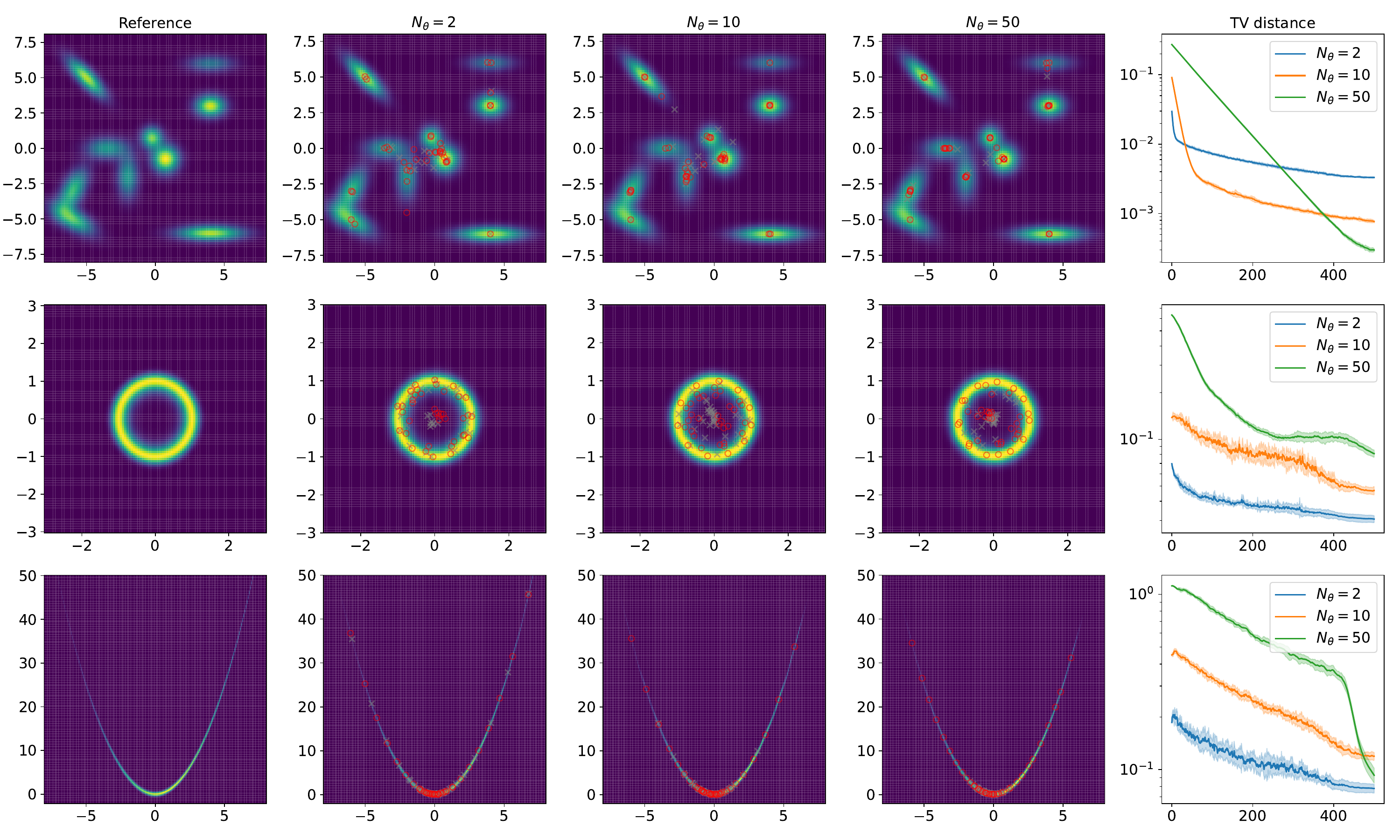}
    \caption{GMBBVI using the new step size with an additional control term based on the weights, as defined in \cref{eq:adaptive-dt-gaussian-weights}. All other experimental settings and marker conventions follow \cref{fig:GMBBVI-trials}.}
    \label{fig:dt-weights}
\end{figure}

\bibliographystyle{unsrt}
\bibliography{references}

@article{loshchilov2016sgdr,
  title={Sgdr: Stochastic gradient descent with warm restarts},
  author={Loshchilov, Ilya and Hutter, Frank},
  journal={arXiv preprint arXiv:1608.03983},
  year={2016}
}

@inproceedings{henneking2025real,
  title={Real-time {B}ayesian inference at extreme scale: A digital twin for tsunami early warning applied to the Cascadia subduction zone},
  author={Henneking, Stefan and Venkat, Sreeram and Dobrev, Veselin and Camier, John and Kolev, Tzanio and Fernando, Milinda and Gabriel, Alice-Agnes and Ghattas, Omar},
  booktitle={Proceedings of the International Conference for High Performance Computing, Networking, Storage and Analysis},
  pages={60--71},
  year={2025}
}

@article{cao2022bayesian,
  title={{B}ayesian calibration for large-scale fluid structure interaction problems under embedded/immersed boundary framework},
  author={Cao, Shunxiang and Zhengyu Huang, Daniel},
  journal={International Journal for Numerical Methods in Engineering},
  volume={123},
  number={8},
  pages={1791--1812},
  year={2022},
  publisher={Wiley Online Library}
}

@article{cui2015data,
  title={Data-driven model reduction for the {B}ayesian solution of inverse problems},
  author={Cui, Tiangang and Marzouk, Youssef M and Willcox, Karen E},
  journal={International Journal for Numerical Methods in Engineering},
  volume={102},
  number={5},
  pages={966--990},
  year={2015},
  publisher={Wiley Online Library}
}

@article{hoffman2014no,
  title={The No-U-Turn sampler: adaptively setting path lengths in Hamiltonian Monte Carlo.},
  author={Hoffman, Matthew D and Gelman, Andrew and others},
  journal={J. Mach. Learn. Res.},
  volume={15},
  number={1},
  pages={1593--1623},
  year={2014}
}

@article{han2021riemannian,
  title={On Riemannian optimization over positive definite matrices with the Bures-Wasserstein geometry},
  author={Han, Andi and Mishra, Bamdev and Jawanpuria, Pratik Kumar and Gao, Junbin},
  journal={Advances in Neural Information Processing Systems},
  volume={34},
  pages={8940--8953},
  year={2021}
}

@article{chen2024efficient,
  title={Efficient, multimodal, and derivative-free {B}ayesian inference with Fisher--Rao gradient flows},
  author={Chen, Yifan and Huang, Daniel Zhengyu and Huang, Jiaoyang and Reich, Sebastian and Stuart, Andrew M},
  journal={Inverse Problems},
  volume={40},
  number={12},
  pages={125001},
  year={2024},
  publisher={IOP Publishing}
}

@article{che2025stable,
author = {Che, Baojun and Chen, Yifan and Huan, Zhenghao and Huang, Daniel Zhengyu and Wang, Weijie},
title = {Stable Derivative Free {G}aussian Mixture Variational Inference for {B}ayesian Inverse Problems},
journal = {SIAM Journal on Scientific Computing},
volume = {47},
number = {5},
pages = {A2583-A2608},
year = {2025},
doi = {10.1137/25M1722548}
}

@article{delon2020wasserstein,
  title={A Wasserstein-type distance in the space of Gaussian mixture models},
  author={Delon, Julie and Desolneux, Agnes},
  journal={SIAM Journal on Imaging Sciences},
  volume={13},
  number={2},
  pages={936--970},
  year={2020},
  publisher={SIAM}
}

@article{arenz2022unified,
  title={A unified perspective on natural gradient variational inference with gaussian mixture models},
  author={Arenz, Oleg and Dahlinger, Philipp and Ye, Zihan and Volpp, Michael and Neumann, Gerhard},
  journal={arXiv preprint arXiv:2209.11533},
  year={2022}
}

@article{arenz2020trust,
  title={Trust-region variational inference with gaussian mixture models},
  author={Arenz, Oleg and Zhong, Mingjun and Neumann, Gerhard},
  journal={Journal of Machine Learning Research},
  volume={21},
  number={163},
  pages={1--60},
  year={2020}
}

@inproceedings{lin2020handling,
  title={Handling the positive-definite constraint in the Bayesian learning rule},
  author={Lin, Wu and Schmidt, Mark and Khan, Mohammad Emtiyaz},
  booktitle={International conference on machine learning},
  pages={6116--6126},
  year={2020},
  organization={PMLR}
}

@article{garbuno2020interacting,
  title={Interacting Langevin diffusions: Gradient structure and ensemble Kalman sampler},
  author={Garbuno-Inigo, Alfredo and Hoffmann, Franca and Li, Wuchen and Stuart, Andrew M},
  journal={SIAM Journal on Applied Dynamical Systems},
  volume={19},
  number={1},
  pages={412--441},
  year={2020},
  publisher={SIAM}
}

@article{huix2024theoretical, title={Theoretical guarantees for variational inference with fixed-variance mixture of Gaussians}, author={Huix, Tom and Korba, Anna and Durmus, Alain and Moulines, Eric}, journal={arXiv preprint arXiv:2406.04012}, year={2024} }

@article{roeder2017sticking,
  title={Sticking the landing: Simple, lower-variance gradient estimators for variational inference},
  author={Roeder, Geoffrey and Wu, Yuhuai and Duvenaud, David K},
  journal={Advances in Neural Information Processing Systems},
  volume={30},
  year={2017}
}

@article{chen2025accelerating,
  title={Accelerating optimization over the space of probability measures},
  author={Chen, Shi and Li, Qin and Tse, Oliver and Wright, Stephen J},
  journal={Journal of machine learning research},
  volume={26},
  number={31},
  pages={1--40},
  year={2025}
}

@article{liu2025dropout,
  title={Dropout ensemble {K}alman inversion for high dimensional inverse problems},
  author={Liu, Shuigen and Reich, Sebastian and Tong, Xin T},
  journal={SIAM Journal on Numerical Analysis},
  volume={63},
  number={2},
  pages={685--715},
  year={2025},
  publisher={SIAM}
}

@article{guo2024ib,
  title={IB-UQ: Information bottleneck based uncertainty quantification for neural function regression and neural operator learning},
  author={Guo, Ling and Wu, Hao and Wang, Yan and Zhou, Wenwen and Zhou, Tao},
  journal={Journal of Computational Physics},
  volume={510},
  pages={113089},
  year={2024},
  publisher={Elsevier}
}

@article{yan2019adaptive,
  author = {Yan, Liang and Zhou, Tao},
title = {An Adaptive Surrogate Modeling Based on Deep Neural Networks for Large-Scale {B}ayesian Inverse Problems},
journal = {Communications in Computational Physics},
year = {2020},
volume = {28},
number = {5},
pages = {2180--2205},
issn = {1991-7120},
doi = {10.4208/cicp.OA-2020-0186}
}

@article{kivinen1997exponentiated,
  title={Exponentiated gradient versus gradient descent for linear predictors},
  author={Kivinen, Jyrki and Warmuth, Manfred K},
  journal={Information and computation},
  volume={132},
  number={1},
  pages={1--63},
  year={1997},
  publisher={Elsevier}
}

@inproceedings{yao2024minimizing, title={Minimizing convex functionals over space of probability measures via kl divergence gradient flow}, author={Yao, Rentian and Huang, Linjun and Yang, Yun}, booktitle={International Conference on Artificial Intelligence and Statistics}, pages={2530--2538}, year={2024}, organization={PMLR} }

@article{hanu2023subsampling,
  title={Subsampling in ensemble Kalman inversion},
  author={Hanu, Matei and Latz, Jonas and Schillings, Claudia},
  journal={Inverse Problems},
  volume={39},
  number={9},
  pages={094002},
  year={2023},
  publisher={IOP Publishing}
}

@article{bach2024machine,
  title={Machine learning for inverse problems and data assimilation},
  author={Bach, Eviatar and Baptista, Ricardo and Sanz-Alonso, Daniel and Stuart, Andrew},
  journal={arXiv preprint arXiv:2410.10523},
  year={2024}
}

@article{martens2020new,
  title={New insights and perspectives on the natural gradient method},
  author={Martens, James},
  journal={The Journal of Machine Learning Research},
  volume={21},
  number={1},
  pages={5776--5851},
  year={2020},
  publisher={JMLRORG}
}

@article{pennec2006riemannian,
  title={A Riemannian framework for tensor computing},
  author={Pennec, Xavier and Fillard, Pierre and Ayache, Nicholas},
  journal={International Journal of computer vision},
  volume={66},
  number={1},
  pages={41--66},
  year={2006},
  publisher={Springer}
}

@article{burger2023covariance,
  title={Covariance-modulated optimal transport and gradient flows},
  author={Burger, Martin and Erbar, Matthias and Hoffmann, Franca and Matthes, Daniel and Schlichting, Andr{\'e}},
  journal={Archive for Rational Mechanics and Analysis},
  volume={249},
  number={1},
  pages={7},
  year={2025},
  publisher={Springer}
}

@article{carrillo2024fisher,
  title={Fisher-rao gradient flow: geodesic convexity and functional inequalities},
  author={Carrillo, Jos{\'e} A and Chen, Yifan and Huang, Daniel Zhengyu and Huang, Jiaoyang and Wei, Dongyi},
  journal={arXiv preprint arXiv:2407.15693},
  year={2024}
}

@article{carrillo2021wasserstein,
  title={Wasserstein stability estimates for covariance-preconditioned Fokker--Planck equations},
  author={Carrillo, JA and Vaes, U},
  journal={Nonlinearity},
  volume={34},
  number={4},
  pages={2275},
  year={2021},
  publisher={IOP Publishing}
}

@article{li2025functional,
  title={Functional scaling laws in kernel regression: Loss dynamics and learning rate schedules},
  author={Li, Binghui and Chen, Fengling and Huang, Zixun and Wang, Lean and Wu, Lei},
  journal={arXiv preprint arXiv:2509.19189},
  year={2025}
}

@article{beck2003mirror,
  title={Mirror descent and nonlinear projected subgradient methods for convex optimization},
  author={Beck, Amir and Teboulle, Marc},
  journal={Operations Research Letters},
  volume={31},
  number={3},
  pages={167--175},
  year={2003},
  publisher={Elsevier}
}

@inproceedings{ranganath2014black,
  title={Black box variational inference},
  author={Ranganath, Rajesh and Gerrish, Sean and Blei, David},
  booktitle={Artificial intelligence and statistics},
  pages={814--822},
  year={2014},
  organization={PMLR}
}

@book{bishop2006pattern,
  title={Pattern recognition and machine learning},
  author={Bishop, Christopher M and Nasrabadi, Nasser M},
  volume={4},
  number={4},
  year={2006},
  publisher={Springer}
}

@article{chen2023gradient,
  title={Gradient Flows for Sampling: Mean-Field Models, {{G}aussian} Approximations and Affine Invariance},
  author={Chen, Yifan and Huang, Daniel Zhengyu and Huang, Jiaoyang and Reich, Sebastian and Stuart, Andrew M},
  journal={arXiv preprint arXiv:2302.11024},
  year={2023}
}

@article{chen2023sampling,
  title={Sampling via Gradient Flows in the Space of Probability Measures},
  author={Chen, Yifan and Huang, Daniel Zhengyu and Huang, Jiaoyang and Reich, Sebastian and Stuart, Andrew M},
  journal={arXiv preprint arXiv:2310.03597},
  year={2023}
}

@article{garbuno2020affine,
  title={Affine invariant interacting {L}angevin dynamics for {B}ayesian inference},
  author={Garbuno-Inigo, Alfredo and N{\"u}sken, Nikolas and Reich, Sebastian},
  journal={SIAM Journal on Applied Dynamical Systems},
  volume={19},
  number={3},
  pages={1633--1658},
  year={2020},
  publisher={SIAM}
}

@article{chada2020tikhonov,
  title={Tikhonov regularization within ensemble {K}alman inversion},
  author={Chada, Neil K and Stuart, Andrew M and Tong, Xin T},
  journal={SIAM Journal on Numerical Analysis},
  volume={58},
  number={2},
  pages={1263--1294},
  year={2020},
  publisher={SIAM}
}

@article{stuart2010inverse,
  title={Inverse problems: {A} {B}ayesian perspective},
  author={Stuart, Andrew M},
  journal={Acta numerica},
  volume={19},
  pages={451--559},
  year={2010},
  publisher={Cambridge University Press}
}

@article{bui2013computational,
  title={A computational framework for infinite-dimensional {B}ayesian inverse problems. {P}art {I}: {T}he linearized case, with application to global seismic inversion},
  author={Bui-Thanh, Tan and Ghattas, Omar and Martin, James and Stadler, Georg},
  journal={SIAM Journal on Scientific Computing},
  volume={35},
  number={6},
  pages={A2494--A2523},
  year={2013},
  publisher={SIAM}
}

@inproceedings{julier1995new,
  title={A new approach for filtering nonlinear systems},
  author={Julier, Simon J and Uhlmann, Jeffrey K and Durrant-Whyte, Hugh F},
  booktitle={Proceedings of 1995 American Control Conference-ACC'95},
  volume={3},
  pages={1628--1632},
  year={1995},
  organization={IEEE}
}

@article{UKI,
  title={Iterated {K}alman Methodology For Inverse Problems},
  author={Huang, Daniel Zhengyu and Schneider, Tapio and Stuart, Andrew M},
  journal={Journal of Computational Physics},
  pages={111262},
  year={2022},
  publisher={Elsevier}
}

@book{oliver2008inverse,
  title={Inverse theory for petroleum reservoir characterization and history matching},
  author={Oliver, Dean S and Reynolds, Albert C and Liu, Ning},
  year={2008},
  publisher={Cambridge University Press}
}

@article{iglesias2013ensemble,
  title={Ensemble {K}alman methods for inverse problems},
  author={Iglesias, Marco A and Law, Kody JH and Stuart, Andrew M},
  journal={Inverse Problems},
  volume={29},
  number={4},
  pages={045001},
  year={2013},
  publisher={IOP Publishing}
}

@book{kaipio2006statistical,
  title={Statistical and computational inverse problems},
  author={Kaipio, Jari and Somersalo, Erkki},
  volume={160},
  year={2006},
  publisher={Springer Science \& Business Media}
}

@article{goodman2010ensemble,
  title={Ensemble samplers with affine invariance},
  author={Goodman, Jonathan and Weare, Jonathan},
  journal={Communications in applied mathematics and computational science},
  volume={5},
  number={1},
  pages={65--80},
  year={2010},
  publisher={Mathematical Sciences Publishers}
}

@article{opper2009variational,
  title={The variational {G}aussian approximation revisited},
  author={Opper, Manfred and Archambeau, C{\'e}dric},
  journal={Neural computation},
  volume={21},
  number={3},
  pages={786--792},
  year={2009},
  publisher={MIT Press}
}

@inproceedings{rezende2015variational,
  title={Variational inference with normalizing flows},
  author={Rezende, Danilo and Mohamed, Shakir},
  booktitle={International conference on machine learning},
  pages={1530--1538},
  year={2015},
  organization={PMLR}
}

@article{huang2022efficient,
  title={Efficient derivative-free {B}ayesian inference for large-scale inverse problems},
  author={Huang, Daniel Zhengyu and Huang, Jiaoyang and Reich, Sebastian and Stuart, Andrew M},
  journal={arXiv preprint arXiv:2204.04386},
  year={2022}
}

@article{blei2017variational,
  title={Variational inference: A review for statisticians},
  author={Blei, David M and Kucukelbir, Alp and McAuliffe, Jon D},
  journal={Journal of the American statistical Association},
  volume={112},
  number={518},
  pages={859--877},
  year={2017},
  publisher={Taylor \& Francis}
}

@article{lambert2022variational,
  title={Variational inference via Wasserstein gradient flows},
  author={Lambert, Marc and Chewi, Sinho and Bach, Francis and Bonnabel, Silv{\`e}re and Rigollet, Philippe},
  journal={arXiv preprint arXiv:2205.15902},
  year={2022}
}

@article{jordan1999introduction,
  title={An introduction to variational methods for graphical models},
  author={Jordan, Michael I and Ghahramani, Zoubin and Jaakkola, Tommi S and Saul, Lawrence K},
  journal={Machine learning},
  volume={37},
  number={2},
  pages={183--233},
  year={1999},
  publisher={Springer}
}

@article{amari1998natural,
  title={Natural gradient works efficiently in learning},
  author={Amari, Shun-Ichi},
  journal={Neural computation},
  volume={10},
  number={2},
  pages={251--276},
  year={1998},
  publisher={MIT Press}
}

@inproceedings{han2018stein,
  title={Stein variational gradient descent without gradient},
  author={Han, Jun and Liu, Qiang},
  booktitle={International Conference on Machine Learning},
  pages={1900--1908},
  year={2018},
  organization={PMLR}
}

@article{domke2023provable,
  title={Provable convergence guarantees for black-box variational inference},
  author={Domke, Justin and Gower, Robert and Garrigos, Guillaume},
  journal={Advances in neural information processing systems},
  volume={36},
  pages={66289--66327},
  year={2023}
}

@article{wang2022accelerated,
  title={Accelerated information gradient flow},
  author={Wang, Yifei and Li, Wuchen},
  journal={Journal of Scientific Computing},
  volume={90},
  number={1},
  pages={1--47},
  year={2022},
  publisher={Springer}
}

@inproceedings{lin2019fast,
  title={Fast and simple natural-gradient variational inference with mixture of exponential-family approximations},
  author={Lin, Wu and Khan, Mohammad Emtiyaz and Schmidt, Mark},
  booktitle={International Conference on Machine Learning},
  pages={3992--4002},
  year={2019},
  organization={PMLR}
}

@article{neal2003slice,
  title={Slice sampling},
  author={Neal, Radford M},
  journal={The annals of statistics},
  volume={31},
  number={3},
  pages={705--767},
  year={2003},
  publisher={Institute of Mathematical Statistics}
}

@article{Manushi2024framewaork,
  author  = {Manushi Welandawe and Michael Riis Andersen and Aki Vehtari and Jonathan H. Huggins},
  title   = {A Framework for Improving the Reliability of Black-box Variational Inference},
  journal = {Journal of Machine Learning Research},
  year    = {2024},
  volume  = {25},
  number  = {219},
  pages   = {1--71},
  url     = {http://jmlr.org/papers/v25/22-0327.html}
}

@article{Nawaf2025WALNUTS,
    author = {Nawaf Bou-Rabee, Bob Carpenter, Tore Selland Kleppe and Sifan Liu},
    title = {The Within-Orbit Adaptive Leapfrog No-U-Turn Sampler},
    journal = {arXiv preprint arXiv:2506.18746},
    year = {2025} 
}

@article{chen2025rotated,
  title={Rotated mean-field variational inference and iterative Gaussianization},
  author={Chen, Yifan and Liu, Sifan},
  journal={arXiv preprint arXiv:2510.07732},
  year={2025}
}

@incollection{jaakkola1998improving,
  title={Improving the mean field approximation via the use of mixture distributions},
  author={Jaakkola, Tommi S and Jordan, Michael I},
  booktitle={Learning in graphical models},
  pages={163--173},
  year={1998},
  publisher={Springer}
}

@article{paisley2012variational,
  title={Variational {B}ayesian inference with stochastic search},
  author={Paisley, John and Blei, David and Jordan, Michael},
  journal={arXiv preprint arXiv:1206.6430},
  year={2012}
}

@misc{hoffman2013stochastic,
      title={Stochastic Variational Inference}, 
      author={Matt Hoffman and David M. Blei and Chong Wang and John Paisley},
      year={2013},
      eprint={1206.7051},
      archivePrefix={arXiv},
      primaryClass={stat.ML},
}

@incollection{robbins1971a,
title = {A CONVERGENCE THEOREM FOR NON NEGATIVE ALMOST SUPERMARTINGALES AND SOME APPLICATIONS},
editor = {Jagdish S. Rustagi},
booktitle = {Optimizing Methods in Statistics},
publisher = {Academic Press},
pages = {233-257},
year = {1971},
isbn = {978-0-12-604550-5},
url = {https://doi.org/10.1016/B978-0-12-604550-5.50015-8},
author = {H. Robbins and D. Siegmund},
abstract = {Publisher Summary
This chapter discusses a convergence theorem for nonnegative almost supermartingales and some applications. It discusses a unified treatment of a number of almost sure convergence theorems by exploiting the fact that the processes involved possess a common almost supermartingale properties. The inequalities are simple and useful generalizations of well-known results in martingale theory. Dvoretzky proved a general convergence theorem that includes Blum's result for the Robbins–Monro process and the corresponding result for the Kiefer–Wolfowitz method for estimating the maximum of a regression function as special cases.}
}
\end{document}